\documentclass{article}
\usepackage[margin=2cm]{geometry}

\usepackage[utf8]{inputenc} 
\usepackage[T1]{fontenc}    
\usepackage{hyperref}       
\usepackage{url}            
\usepackage{booktabs}       
\usepackage{amsfonts}       
\usepackage{nicefrac}       
\usepackage{microtype}      
\usepackage{xcolor}         

\usepackage{enumitem}       
\usepackage{algorithm}      
\usepackage{algorithmic}    
\usepackage{graphicx}
\usepackage{subcaption}
\usepackage{adjustbox}      
\usepackage{wrapfig}        
\usepackage{natbib}
\usepackage{array}

\newcolumntype{?}{!{\vrule width 1pt}}


\usepackage{amsmath,amsfonts,amsthm,bm,bbm,amssymb}

\newtheorem{proposition}{Proposition}

\newtheorem*{definition*}{Definition}
\newtheorem{theorem}{Theorem}
\newtheorem*{theorem*}{Theorem}
\newtheorem{lemma}[theorem]{Lemma}
\newtheorem*{lemma*}{Lemma}
\newtheorem*{corollary*}{Corollary}
\newtheorem*{remark*}{Remark}









\def\eqref#1{equation~\ref{#1}}









\def\1{\bm{1}}










\DeclareMathAlphabet{\mathsfit}{\encodingdefault}{\sfdefault}{m}{sl}
\SetMathAlphabet{\mathsfit}{bold}{\encodingdefault}{\sfdefault}{bx}{n}

\def\gA{{\mathcal{A}}}

\def\gT{{\mathcal{T}}}

\def\gX{{\mathcal{X}}}










\newcommand{\R}{\mathbb{R}}
\newcommand{\N}{\mathbb{N}}

\newcommand{\softmax}{\mathrm{softmax}}

\newcommand{\innerp}[2]{\langle{#1, #2}\rangle}



\newcommand{\pluseq}{\mathrel{\raisebox{0.19ex}{$\scriptstyle+$}}=}
\newcommand{\defeq}{\triangleq}

\newcommand{\block}{\mathrm{block}}
\newcommand{\mixer}{\mathrm{mixer}}
\newcommand{\sampler}{\mathrm{sampler}}
\newcommand{\agg}{\mathrm{agg}}
\newcommand{\frmstate}{\mathrm{read}}
\newcommand{\cont}{\mathrm{cont}}

\title{\center{Flash Inference:  Near Linear Time Inference for \\ Long Convolution Sequence Models and Beyond}}

\author{Costin-Andrei Oncescu\footnote{Correspondence to: concescu@g.harvard.edu}, Sanket Purandare, Stratos Idreos, Sham Kakade \\
Harvard University}
\date{}

\begin{document}

\maketitle
\begin{abstract}
While transformers have been at the core of most recent advancements in sequence generative models, their computational cost remains quadratic in sequence length.
Several subquadratic architectures have been proposed to address this computational issue. Some of them, including long convolution sequence models (LCSMs), address this issue at training time but remain quadratic during inference. We propose a method for speeding up LCSMs' exact inference to quasilinear time, identify the key properties that make this possible, and propose a general framework that exploits these. Our approach, inspired by previous work on relaxed polynomial interpolation, is based on a tiling which helps decrease memory movement and share computation. It has the added benefit of allowing for almost complete parallelization across layers of the position-mixing part of the architecture. Empirically, we provide a proof of concept implementation for Hyena, which gets up to $7.8\times$ end-to-end improvement over standard inference by improving up to $110\times$ within the position-mixing part.
\end{abstract}
\section{Introduction}
A lot of recent progress in deep learning, particularly in the form of large language models (LLMs) has been driven by the transformer architecture \citep{vaswani2017attentionAllYouNeed}. While these models have great quality, it comes at a computation cost which scales quadratically in sequence length - both during training and inference. 
This can become prohibitive for very long contexts and as such a number of alternative architectures with better computational scaling in context length have been proposed \citep{gu2023mamba, poli2023hyena, fu2024monarchMixer}. Although most of these works have improved computational efficiency for training, some still scale quadratically in sequence length when it comes to inference, thus not improving asymptotically over transformers.

In this work, we propose a framework for optimizing inference efficiency for a general class of such models. As a case study, which inspired the method, we focus on long convolution sequence models (LCSMs) \citep{poli2023hyena, fu2022H3, romero2021ckconv, sgconv, karami2024orchid, fu2023simpleLongConv, romero2021flexconv}. However, our approach is not limited to LCSMs alone and we identify the properties that allow for such inference speedups in hope to guide the design of future architectures.

In the particular case of LCSMs, the building block of the architecture is that of convolving the input sequence with a sequence-length long, (\emph{potentially} underparameterized) filter. If we let $L$ be the sequence length (e.g. number of tokens in the case of LLMs), then a naive implementation of convolution during training would take $\Omega(L^2)$ FLOPs, but one can employ FFT to bring that down to $O(L\log L)$. The issue that occurs during inference is that FFT cannot be used directly since the whole input sequence is not known ahead of time, but rather incrementally computed. Because of this, the naive inference approach goes up to $\Omega(L^2)$ - this is the apparent cost of moving from a \emph{static} input to a \emph{dynamic} one.

It turns out that in fact, at least in the case of ``dynamic FFT'', one can obtain $O(L\log^2 L)$ time complexity by using \citet{van1997initalFastCDQ}'s result on relaxed polynomial convolution. Crucially, this result works by a direct reduction to several applications of FFT. This allows us to phrase a more general framework for turning training-efficient architectures to inference-efficient ones.

\begin{figure}[ht]
    \centering
    \begin{minipage}[t]{0.31\textwidth}
        \centering
        \begin{subfigure}[t]{\textwidth}
            \centering
            \adjustbox{valign=t}{\includegraphics[width=\textwidth]{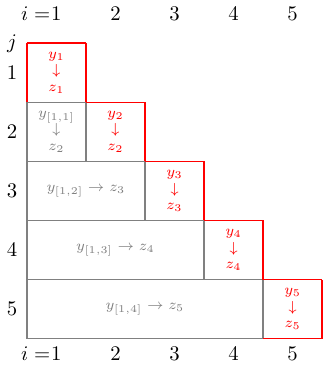}}
        \end{subfigure}
        \vfill
        \begin{subfigure}[t]{\textwidth}
            \centering
            \adjustbox{valign=t}{\includegraphics[width=\textwidth]{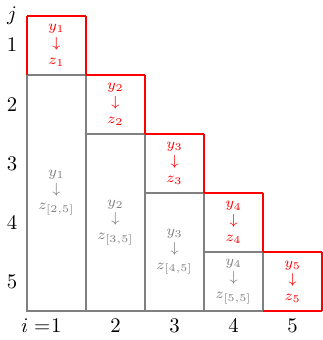}}
        \end{subfigure}
    \end{minipage}
    \hfill
    \begin{minipage}[t]{0.63\textwidth}
        \centering
        \begin{subfigure}[t]{\textwidth}
            \centering
            \adjustbox{valign=t}{\includegraphics[scale=1.32]{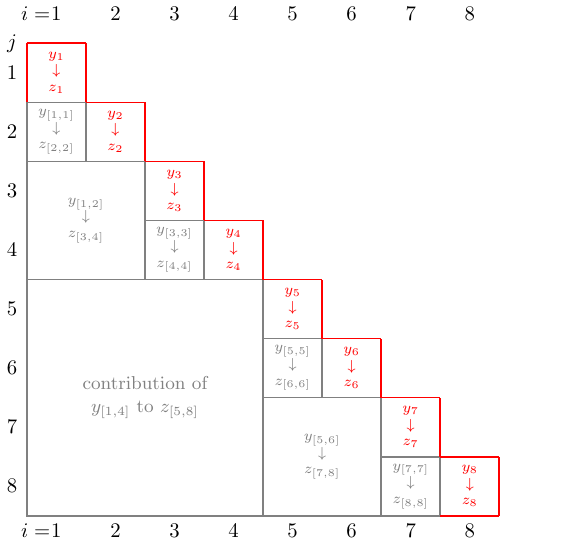}}
        \end{subfigure}
    \end{minipage}
    \caption{
        Cell ($i$, $j$) corresponds to the contribution of mixer-input $y_i$ to mixer-output $z_j$. To compute $z_j$, all its line of contributions should be be accounted for. Because of the autoregressive nature of inference, one only has access to $y_i$ after $z_{i-1}$ has been computed.
        \emph{(Left Top)} represents the standard (lazy) approach, 
        \emph{(Left Bottom)} represents the eager approach, 
        and \emph{(Right)} represents our suggested method.
    }
    \label{fig:lazy-eager-fast-tiling}
\end{figure}

Our main contributions are:
\begin{itemize}
    \item Proposing the first quasilinear-in-sequence-length $O(L\log^2 L)$, \emph{exact} inference algorithm for LCSMs. This is in contrast to previous work of \citep{massaroli2024laughing} who performs approximate inference.
    \item Identifying the main features of LCSMs that allow for faster inference to propose a more general framework.
    \item Highlighting and exploiting the potential for dealing with most workload of different layers in parallel, rather than having to wait for all computation of a layer to finish before moving on to subsequent layers.
    \item Saving up on data movement thanks to the tiling approach used.
    We show how to save factors of $2$ in several places, including activation storage, when the convolutional filters are data independent. However, our method extends to any causal, data-dependent filters.
    \item Our framework provides an $O(L\log^2 L)$ inference algorithm, which, when run empirically, yields up to $7.8\times$ end-to-end time-efficiency improvement; and up to $110\times$ on the long convolution part.
\end{itemize}
\section{Setting and Related Work}
\label{sec:background}
Recent efforts have focused on designing more efficient alternatives to transformers \citep{tay2022efficientTransformers}, with state space models (SSMs) \citep{gu2021S4, gu2023mamba} and convolution-based models \citep{fu2022H3, poli2023hyena} emerging as notable contenders. In this work, we focus on the latter simply because we leverage the convoluional view directly. Nevertheless, it is worth mentioning that there is a one-to-one mapping between linear-time invariant (LTI) SSMs and LCSMs and consequently, our findings apply to LTI SSMs as well (since they represent the same models). 
While LCSMs inherently assume sequence-long convolutions, they can correspond to LTI SSMs of varying state dimensions (up to filter length). Using the recurrent formulation of SSMs for inference incurs costs that scale with the state dimension, which can become prohibitive for larger dimensions. In contrast, our method exclusively leverages the convolutional perspective and has no dependency on filter properties, achieving consistent computational performance regardless of the filter (or its equivalent state dimension). Thus, while the recurrent view may be more efficient for low-dimensional settings, our approach remains broadly applicable and computationally efficient for LTI SSMs of any dimension. 
An extended contextual discussion is available in Appendix~\ref{sec:extended-background-appendix}.

\subsection{Notations and Definitions}
\label{subsec:notations-and-defs}
In the most general form, we consider architectures obtained by stacking several position-mixing layers potentially interleaved with element-wise, feature-mixing modules, partially following \citet{arora2023zoology}'s notation. Put formally, let $\block^1, \ldots \block^M : \R ^ D \to \R^ D$ be feature-mixing modules and $\mixer^1 \ldots \mixer^M : \R^{L\times D} \to \R^{L\times D}$ be position-mixing modules (we use superscripts throughout to denote layers). One can think of the blocks as being combinations of MLPs, layernorms, skip connections and other modules of similar purpose and of mixers as playing the role of multi-head attention (MHA) in a transformer - the part where embeddings at different positions interact with each other. Assume that these are all learned from some well-defined parameterized classes. Here, $M$ is the number of layers, $D$ is the embedding dimension used, and $L$ can be any sequence length (unless positional embeddings are used, case in which it can only take values up to some $L_{max}$). We define the activations $a^0, \ldots, a^M : \R^{L\times D} \to \R^{L \times D}$ at level $1 \leq \ell \leq M$ and position $1 \leq i \leq L$ as $a^\ell(x)_i \defeq \block^\ell(\mixer^\ell(a^{\ell - 1}(x))_i)$ where $a^0(x) = x$ represents the input embeddings ($L$ $D$-dimensional tokens) and the model output at position $i$ is read from the last layer $a^M(x)_i$.

We focus on autoregressive sequence models where for an input sequence $x_1, \ldots x_p$, one generates the $(p+1)^{\text{th}}$ token by sampling from a distribution given by $a^M(x)_p$ (usually via a linear map). In order for this generation procedure to be sensible, we constrain (by construction of $\mixer$s) the models to be causal, that is, $a^\ell(x)_i$ is a function of only $x_{[1, i]}$ or, equivalently, $\mixer^\ell(y)_i$ is a function of $y_{[1..i]}$.

We use $\odot$ to denote the element-wise (Hadamard) product: for two matrices $A, B \in \R^{N\times M}$, $A \odot B \in \R^{N\times M}$ is given by $(A \odot B)_{i, j} = A_{i, j} \cdot B_{i, j}$. We call a function $f : \gX^* \to \gX$ associative, if $f(x_1, x_2, \ldots x_p)=f(f(x_1, \ldots x_i), f(x_{i+1}, \ldots x_p))$ for any $1 \leq i < p$ and $x_1\ldots x_p \in \gX$. Finally, we define the time complexity of an algorithm as the number of floating-point operations (FLOPs) it executes - this is a notion independent of hardware.

\subsection{Self-attention}
Self-attention is the building block of transformers \citep{vaswani2017attentionAllYouNeed}. In its simplest form, it implements a $\mixer$ by using three projection matrices $Q, K, V \in \R^{D \times D}$ to obtain three sequences of vectors $q, k, v \in \R^{L\times D}$ by projecting the input $y \in \R^{L\times D}$ via $q = yQ, k = yK, v=yV$ - these are called queries, keys and values, respectively. The (causal) self-attention operator $attention : \R^{L \times D} \to \R^{L \times D}$ is then given by:
\begin{align}
    \label{eq: attention}
    attention(y)_j \triangleq \mixer(y)_j = \frac{\sum_{i=1}^j{v_j \cdot e^{\innerp{q_j}{k_i}}}}{\sum_{i=1}^j{e^{\innerp{q_j}{k_i}}}} = (v_{[1, j]})^\top \softmax(k_{[1, j]}q_j)
\end{align}
which represents, at position $j$, the average of the previous value vectors $v_{[1, j]}$ exponentially-weighted by how well the values' corresponding keys match $j$\textsuperscript{th} query. Put differently, the bigger $\innerp{q_i}{k_j}$ is, the more $j$\textsuperscript{th} output attends to $i$\textsuperscript{th} input. Note that both in the above notation, as well as in general, we assume any $1$-dimensional tensor to be a column vector (for example $q_j \in \R^{D \times 1})$. In the transformer architecture this is how one ``head" operates and each embedding is normally split into several such heads - this is called multihead attention (MHA). If we take the causality away (that is, the $i \leq j$) for simplicity, one could think of attention as being $\mixer(y)=\softmax(qk^\top)v$ where $\softmax$ is computed along the rows of what is called the attention matrix: $qk^\top$.
\subsection{Long Convolution Sequence Models (LCSMs)}
\label{sec:lcsm-def}
LCSMs \citep{poli2023hyena, sgconv, gaido2024longconv} work by creating a SISO (single-input-single-output) primitive to map an input sequence $y \in \R^L$ to $z \in \R^L$ via $z_t \defeq \sum_{i=1}^t{y_i \cdot \rho_{t-i}}$ where $\rho$ is a (possibly infinite, and of length at least $L$) convolution filter which is \emph{often} parameterized by a smaller latent space $\Theta$: $\rho = f(\theta)$ where $\theta \in \Theta, \dim{\Theta} \ll L$ is learned. These convolution primitives operate on independent axes of the hidden dimensions to create a positional mixer: $\mixer(y)_t \defeq \sum_{i=1}^t{y_{i} \odot \rho_{t-i}} \in \R^D$. This assumes the filters to be independent, but shared ones are possible as well (as is the case for multi-head Hyena \citep{massaroli2024laughing}).

For example, the Hyena architecture \citep{poli2023hyena} maps back to our setup when mixers are defined as above and the $\block$ functions, depending on the layer, are either MLPs or gates - that is, element-wise multiplications with a projection of activations at same position, but lower level. We do not focus on the details of the $\block$s since they all involve some $D \times D$ matrix-vector multiplication that is performed once for every layer and position $1 \leq i \leq L$ and thus scale as $\Theta(L D^2)$ per layer - that is, linearly in context length.

\subsubsection{The Inference Problem}
\label{the-inference-problem}
Whereas during training, the convolutions can be performed in $O(L \log L)$ time via FFT as the whole of $y$ is known beforehand, during inference this is non-trivial. Suppose there is a prompt of size $P$ and we want to generate $L-P$ more tokens. The bottleneck problem can be understood as computing, for every layer and every dimension, the below:
\begin{align}
\label{onlineConv}
    z_{t} \defeq \sum\limits_{i=1}^{t} y_i \cdot \rho_{t-i}
\end{align}
for all $1\leq t \leq L$ . To pre-fill the first $P$ values $z_{[1..p]}$ at all levels of activations, since the first $P$ inputs are known, one can perform FFT as they would normally do during train-time (incurring an $O(P \log P)$ time cost), but past $P$\textsuperscript{th} position it is important to note that $y_i$ is not available before computing $z_{i-1}$ - this is where \emph{static} FFT breaks. Since dealing with the prompt can be done easily \citep[][Lemma 2.1]{massaroli2024laughing} by essentially filling in all contributions of $y_{[1..P]}$ to $z_{[1..L]}$ and then forgetting the prompt ever existed, we henceforth assume $P=0$.

\subsubsection{Previous Work on Efficient Inference}
Speeding up Equation~\ref{onlineConv} from the naive $\Omega(L^2)$ is the object of study of \citet{massaroli2024laughing}: they optimize the process by finding a \emph{low-dimensional} LTI SSM whose equivalent convolution filter closely resembles $\rho$, thus getting an RNN-like problem to simulate. If the learnt SSM has size $D'$, then this yields an $\Theta(LDD')$ algorithm for generating $L$ tokens. This has the significant added benefit of not needing to store the activations (or even inputs) thus far which makes it memory efficient and practically time-efficient.

The downside, however, is that by definition this will only be an approximation of the learned filter. More importantly, this approximation represents a projection to a potentially different space of models, making the end-to-end procedure practically a different training procedure for a \emph{low-dimensional} LTI SSM model.
Furthermore, this approach assumes the filter $\rho$ is data-independent - this is needed in order to undergo an expensive distillation procedure as a precomputation step. This can be qualitatively limiting as shown in \citet{arora2023zoology}. While we do store all activations, our proposed framework exactly simulates the architecture and data-dependent filters can also be accommodated.

Finally, we note that concurrent work from \citet{agarwal2024futurefill} approaches the same problem, with an emphasis on applications to linear dynamical systems, via a similar algorithm. Our frameworks goes farther by generalizing to data-dependent convolutional filters, while also tackling a number of systems-levels challenges in making the method practically relevant.

\section{Fast LCSM inference}
In this section we describe our method for the case of long convolution sequence models. That is, we still work with models as described in Section~\ref{subsec:notations-and-defs}, but we further assume the mixers to be convolution-based, as described in~\ref{sec:lcsm-def}. Hence, for every $\mixer^\ell \in \{\mixer,\ldots \mixer^M\}$, there exists a filter $\rho^\ell \in R^{L\times D}$ such that \begin{align*}
    \mixer^\ell(y)_t = \sum_{i=1}^t{y_i \odot \rho^\ell_{t-i}}
\end{align*}
We assume here, for simplicity, that $\rho$ is data-independent and part of the model, as is the case for most existing architectures. However, we discuss in Appendix~\ref{sec:data-dependent-filters-appendix} how our method can be extended to data-dependent filters (by paying a factor of $2$). We also assume $L=2^P$ for some integer $P$ for simplicity - one can always round $L$ up to the closest power of $2$ without changing it by more than a factor of $2$ and thus keeping the asymptotics of all our analysis unchanged.

\subsection{The Proposed Algorithm}
\label{sec:proposed-lcsm-algorithm}

We derive inspiration from the the work of \citet{van1997initalFastCDQ} regarding relaxed polynomial interpolation - they propose a fix to dealing with \emph{dynamic} structure of Eq.~\ref{onlineConv} to achieve an overall $O(L\log^2 L)$ time complexity.
\subsubsection{Relaxed Polynomial Interpolation}
Consider Eq.~\ref{onlineConv}:
\begin{align*}
    z_{t}\defeq \sum\limits_{i=1}^{t} y_i \cdot \rho_{t-i}
\end{align*}
The problem of relaxed polynomial interpolation can be thought of as having to compute $z_t$ for every $1\leq t \leq L$ with the further constraint that $y_t$ and $\rho_t$ are only made available once $z_{t-1}$ has been outputted. While this general setting is covered in Appendix~\ref{sec:data-dependent-filters-appendix}, here we focus on the case that all of $\rho$ is known ahead of time and only $y$ is incrementally revealed. Whereas one could simply use the more general approach of~\citep{van1997initalFastCDQ}, we can further take advantage of $\rho$ being known ahead of time to get a twice-faster and slightly simpler algorithm. We present this both for brevity and because existing architectures have data-independent $\rho$'s so this is the more efficient choice for them (and thus, for practical purposes). However, \citep{arora2023zoology, karami2024orchid} show that data-dependent filters can improve quality but finding a good causal architecture remains open.

The main approaches to relaxed polynomial interpolation, as identified in \citep{van2002relaxButNotLazy}, are:
\begin{itemize}
    \item The lazy approach: for every $t \in \{1\ldots L\}$, one computes $z_t$ by simply applying the formula. It takes $O(t)$ FLOPs to do so and the overall complexity is $O(L^2)$. This approach can be thought of as the naive approach which only performs work when it is strictly needed.
    \item At the opposite end, there is the eager (or zealous) approach: as soon as a new value of $y$ or $\rho$ becomes available, one accounts for all its contributions to $z$. Since we assumed the entire $\rho$ is known beforehand, this translates to the following:
    \begin{quote}
    After computing $z_{t-1}$, $y_t$ becomes available, so for every $t\leq i \leq L$, we increase $z_{i}$ by $y_t \cdot \rho_{i - t}$. $z_t$ is now fully computed so we proceed to the next iteration.
    \end{quote}
    Hence, the eager approach can be thought of as performing work as soon as it can be performed. Here, $t$\textsuperscript{th} iteration takes $O(L-t)$ so we still have an overall complexity of $O(L^2)$.
    \item Relaxed approaches: these approaches perform some but not all available work ahead of time - their advantage stems from grouping contributions (i.e. accounting collectively for groups of pairs $y_i \cdot \rho_j$) in such a way that fast convolution methods (such as FFT) can be employed for a speedup.
\end{itemize}

First, let us define the sort of contribution grouping that we consider:
\begin{definition*}
For any $1 \leq l \leq r \leq l' \leq r' \leq L$ we let $\tau(y, [l, r], \rho, [l', r'])$ represent the aggregated contribution of all $y_{[l, r]}$ to all of $z_{[l', r']}$. That is, for all $l' \leq t \leq r'$, we have:
\begin{align}
\label{eq:taus}
    \tau(y, [l, r], \rho, [l', r'])_t \defeq \sum_{i=l}^{r}{y_i \cdot \rho_{t-i}}
\end{align}
\end{definition*}
The employment of FFT for speeding up the computation of $\tau(y, [l, r], \rho, [l', r'])$ can then be achieved following the next lemma:
\begin{lemma}
\label{lemma:lcsm-block-fft}
Let $1 \leq l \leq r \leq l' \leq r' \leq L$ represent ranges of lengths $L_1=r-l+1$ and $L_2=r'-l'+1$ of $y$ and $z$, respectively.
There exists an FFT-based algorithm running in $O(L_1+L_2)$ space and $O((L_1+L_2)\log(L_1+L_2))$ time complexity that, given access to $y_{[l, r]}$, computes all the elements of $\tau(y, [l, r], \rho, [l', r'])_{[l', r']}$ - that is, all the aggregated contributions of $y_{[l, r]}$ to all of $z_{[l', r']}$.
\end{lemma}

What this lemma says is that we can compute efficiently the contribution of a range of inputs to a range of outputs, where the computational cost is asymptotically given by the larger of the ranges (this is because $O(L_1+L_2)=O(\max(L_1, L_2))$. Furthermore, observe that we assumed $r \leq l'$: this is because in order to account for all these contributions at once, we need to have all of $y_{[l, r]}$ available which means that all of $z_{[1, r-1]}$ was computed, so by that time we should have already accounted for the contributions of $y_{[l, r]}$ to all of $z_{[1,r-1]}$; hence, there is no point to (re)computing any of these. 

The lazy, eager, and our version of a relaxed approach are depicted in Figure~\ref{fig:lazy-eager-fast-tiling} under the form of tilings of the contribution space. Note that the lazy and eager approaches can still be thought of as grouping contribution accounting, but do so along thin strips: the contribution of a long range of $y$'s to one element of $z$ in the lazy approach and that of a singe element of $y$ to a long range of $z$'s in the eager approach. The key to our relaxed tiling is using more balanced tiles since, as noted, the cost of a tile is given by the larger side (which is very expensive for thin tiles). The tiles still have to respect $r \leq l'$ and one needs to ensure that the whole line of contribution to some $z_t$ has been dealt with by the time we return the value of $z_t$ - this is indeed the case for our tiling.

\begin{algorithm}[ht]
\caption{Fast Relaxed Polynomial Interpolation}
\label{alg:relaxed-poly}
\begin{algorithmic}[1]
    \REQUIRE filter $\rho \in \mathbb{R}^{L}$ and $y_1$
    \STATE Initialize $z \in \mathbb{R}^{L}$ with zeros
    \FOR{$i \leftarrow 1$ \textbf{to} $L-1$} 
        \STATE $U \leftarrow$ maximum power of $2$ that divides $i$ \text{  \# the side of the $i$\textsuperscript{th} gray tile}
        \STATE \textcolor{red}{$z_i \pluseq y_{i} * \rho_{0}$} \textcolor{red}{\text{\# finalize $z_i$ by accounting for newly available $y_i$ - red cell}} \label{line:relax-red-tile}
        \STATE \textcolor{gray}{\text{\# account for the contribution of $y_{[i - U + 1, i]}$ to $z_{[i + 1, i + U]}$ - gray tile}}
        \STATE \textcolor{gray}{$z_{[i + 1, i + U]} \pluseq \tau(y, [i - U + 1, i], \rho, [i + 1, i + U])$ {\text{\# $\tau$ is defined as in Eq~\ref{eq:taus}}}} \label{line:relax-gray-tile}
        \STATE return $z_i$ \text{\# $z_i$ has been computed}
        \STATE $unlock(y_{i + 1})$ \text{\# $y_{i + 1}$ becomes available}
    \ENDFOR
    \STATE \textcolor{red}{$z_L \pluseq y_L * \rho_0$}
    \STATE return $z_L$ \text{\# $z_L$ has been computed}
\end{algorithmic}
\end{algorithm}
While the lazy and eager algorithms have been described, one needs to make explicit the order of processing the tiles in the relaxed setting since we still cannot use $y_t$ before $z_{t-1}$ has been computed. This is shown in Algorithm~\ref{alg:relaxed-poly}: \emph{return} marks a value of $z$ having been computed and \emph{unlock} represents a value of $y$ becoming available. At the beginning of iteration $i$ it always holds that $z_i=\sum_{j=1}^{i-1}{y_j\cdot \rho_{i-j}}$. We start the iteration by completing this with the freshly unlocked $y_i \cdot \rho_0$ at line~\ref{line:relax-red-tile} - this corresponds to the red tiles of Figure~\ref{fig:lazy-eager-fast-tiling}. Following this step $z_i$ is ready to be returned. Then, we account for the gray tile that has just been unlocked (line~\ref{line:relax-gray-tile}) - this tile corresponds to the contribution of $y_{[i-U+1, i]}$ to $z_{[i+1, i+U]}$ where $U$ is the side of the tile, namely the largest power of $2$ dividing $i$. Finally we return $z_i$ as being in its final form and get access to $y_{i+1}$ - we could have done this before dealing with the gray tile as well. Note that the relative ordering of the red and gray lines does not matter within an iteration, but it is important to finish up the first red and gray tiles before second ones, the second ones before the third ones and so on.

\begin{proposition}
\label{prop:relaxed-analysis}
    When $L=2^P$, Algorithm~\ref{alg:relaxed-poly} performs $2^{P - 1 - q}$ calls to $\tau$ on ranges of length $2^q$ for every $q \in \{0\ldots P-1\}$. If we base $\tau$'s implementation on Lemma~\ref{lemma:lcsm-block-fft}, this entails a total time complexity of $O(L\log^2L)$. Since the calls to $\tau$ are sequential and outputs are not stored, the peak memory usage is given by the largest call to $\tau$ and remains $O(L)$.
\end{proposition}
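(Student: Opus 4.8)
The plan is to establish the three claims in sequence, beginning from the observation that Algorithm~\ref{alg:relaxed-poly} issues exactly one gray-tile call to $\tau$ per iteration $i \in \{1, \ldots, L-1\}$, and that this call has side $U(i) = 2^{\nu(i)}$, where $\nu(i)$ denotes the exponent of the largest power of two dividing $i$. To count the calls of a fixed size $2^q$, I would note that $U(i) = 2^q$ holds exactly when $i = 2^q m$ for some odd $m$, so that the constraint $1 \le i \le L-1 = 2^P - 1$ becomes $1 \le m \le 2^{P-q} - 1$. The odd integers in $\{1, \ldots, 2^{P-q}-1\}$ number exactly $2^{P-1-q}$, which gives the first claim. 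As a consistency check, summing over $q$ yields $\sum_{q=0}^{P-1} 2^{P-1-q} = 2^P - 1 = L-1$, matching the one gray call per iteration.

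For the time bound, I would first observe that both the input range $[i-U+1, i]$ and the output range $[i+1, i+U]$ of the $i$-th gray tile have length exactly $U$, so in the notation of Lemma~\ref{lemma:lcsm-block-fft} we have $L_1 = L_2 = 2^q$, and each such call costs $O\!\left((2^q + 2^q)\log(2^q + 2^q)\right) = O\!\left(2^q (q+1)\right)$. I would also verify the small boundary fact that $i + U \le L$, which follows because $m$ odd and $i = 2^q m < 2^P$ force $m + 1 \le 2^{P-q}$, hence $i + U = 2^q(m+1) \le 2^P = L$, so the output range stays in bounds. Multiplying the per-call cost by the multiplicity $2^{P-1-q}$ and summing gives
\begin{align*}
    \sum_{q=0}^{P-1} 2^{P-1-q} \cdot O\!\left(2^q (q+1)\right) = O\!\left(2^{P-1} \sum_{q=0}^{P-1} (q+1)\right) = O\!\left(2^{P-1} P^2\right) = O(L \log^2 L),
\end{align*}
the crucial simplification being that $2^{P-1-q} \cdot 2^q = 2^{P-1}$ is independent of $q$. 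The red-tile updates on line~\ref{line:relax-red-tile} cost $O(1)$ each and contribute only $O(L)$ in total, so they do not affect the bound.

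For the memory bound, the key point is that the $\tau$-calls are performed one at a time and their outputs are accumulated directly into the preallocated array $z$ rather than retained, so the only transient storage is the workspace of a single invocation. By Lemma~\ref{lemma:lcsm-block-fft} this workspace is $O(L_1 + L_2) = O(2^q)$, which is maximized by the largest call at $q = P-1$, giving $O(2^{P-1}) = O(L)$. Together with the $O(L)$ needed to hold $y$, $\rho$, and $z$, the peak memory is $O(L)$.

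I expect no serious obstacle here, as the argument is essentially a weighted sum over tile sizes. The only points requiring care are the count of integers with a prescribed $2$-adic valuation and the boundary check that each gray tile's output range fits within $[1, L]$; both are elementary. The one conceptual subtlety worth stating explicitly is \emph{why} the per-size counts and per-call costs multiply to a $q$-independent quantity, since this is exactly what converts what might naively look like $O(L)$ calls of varying cost into the clean $O(L\log^2 L)$ total.
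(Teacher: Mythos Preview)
Your proposal is correct and follows essentially the same approach as the paper. The paper does not prove Proposition~\ref{prop:relaxed-analysis} in isolation; the counting argument appears only in the proof of Theorem~\ref{thm:main-algo-analysis}, where it is stated tersely as ``calls of length $2^r$ happen whenever $2^r$ divides $i$ but $2^{r+1}$ does not --- that is, for $i \in \{2^r, 3\cdot 2^r, \ldots, (2^{P-r}-1)\cdot 2^r\}$, so $2^{P-r-1}$ calls.'' Your version is more complete: you make the $2$-adic valuation explicit, verify the consistency sum $\sum_q 2^{P-1-q} = L-1$, check that $i+U \le L$ so the output range is well-defined, and actually carry out the summation to $O(L\log^2 L)$ and the memory argument, neither of which the paper spells out.
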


\subsubsection{Performing LCSM inference via fast relaxed polynomial interpolation}
We can now use Algorithm~\ref{alg:relaxed-poly} to get an efficient inference algorithm for LCSMs, but to do so we need to be careful about data-dependency. First, for brevity, in our previous notation of activations, disregard the dependency on input $x$ - that is, use $a^\ell_{i}$ instead of $a^\ell(x)_i$ - since we are only going to work with one input $x$ (the one we are generating). Furthermore, denote the intermediate mixer computation by $b \in \R^{M\times L \times D}$ as follows:
\begin{align}
\label{eq:activations}
    & b^{\ell}_{i} \defeq \mixer^\ell(a^{\ell - 1})_i = \sum_{k=0}^i{a^{\ell-1}_{k} \odot \rho^{\ell}_{i-k}} \\
    & a^\ell_{i} \defeq \block^\ell(b^\ell_{i})
\end{align}
for any $1 \leq \ell \leq M$, where $a^{0}_{i}$ is the embedding of $i$\textsuperscript{th} token.
Whenever one has generated the tokens $x_{[1,i]}$ up to position $i$ or, equivalently, $a^0_{[1,i]}$, the whole of $a^{[0, M]}_{[1,i - 1]}$ will have been computed (since $a^M_{[1, i-1]}$ is needed to sample $x_{[1,i-1]}$). Caching these up is the equivalent of a KV-cache in transformers and is simply the natural way to not repeat work when autoregressively sampling. To generate the next token $x_{i+1}=a^{0}_{i+1}$, we want to fill in the values of $a^{[1,M]}_ {i}$. To do so, as per Eq.~\ref{eq:activations}, we need to compute $b^\ell_{i}$ which is practically, for each level $\ell$, and within each dimension $1 \leq c \leq D$, a relaxed interpolation between $a^{\ell-1}_{:, c}$ (playing the role of $y$) and $\rho^{\ell}_{:, c}$. Further applying a $\block^\ell$ to $b^\ell$, one gets $a^\ell$. Thus, we can pipeline several instances of Algorithm~\ref{alg:relaxed-poly} sequentially across layers.

The resulting algorithm is illustrated in Algorithm~\ref{alg:flash-inference-lcsm}. The outer loop (line~\ref{line:outer-loop}) iterates through positions $i$: at each iteration, one computes the activations at position $i$ across all layers, does some (limited) eager work and samples next token. To do so, the inner loop (line~\ref{line:inner-loop}) iterates through the several mixer-layers. It first accounts for the red cells in the diagram, that is, the direct dependency on previous layer's activations at position $i$ (line~\ref{line:red-tile}), thus finalizing $b^\ell_{i}$. After computing the activation at position $i$ (line~\ref{line:red-tile-block}), it also accounts for the contribution of the gray tile that has just been unlocked (line~\ref{line:gray-tile}) at the current layer. We reuse the notation from previous section for $\tau$ to be the same as before but within each of the $D$ dimensions - that is:
\begin{align*}
    \tau(y, [l, r], \rho, [l', r'])_t \defeq \sum_{i=l}^r{y_i \odot \rho_{t-i}}
\end{align*}
for all $l' \leq t \leq r'$. The accounting works by considering (in-place) the influence of last $U$ positions of $a$ to next $U$ ones of $b$.

\begin{algorithm}[ht]
\caption{Flash Inference for LCSMs}
\label{alg:flash-inference-lcsm}
\begin{algorithmic}[1]
    \REQUIRE filter $\rho \in \mathbb{R}^{M \times L \times D}$, $\block^{[1, M]}$, first token $a^0_1$ and $\sampler$
    \STATE \textbf{Output:} All activations $a^0, \ldots, a^M \in \mathbb{R}^{L \times D}$ obtained by autoregressively sampling
    \STATE Initialize $b \in \R^{M\times L \times D}$ to zeros
    \FOR{$i \leftarrow 1$ \textbf{to} $L-1$} \label{line:outer-loop}
        \STATE $U \leftarrow$ maximum power of $2$ that divides $i$ \text{  \# the side of the $i$\textsuperscript{th} gray tile} \label{line:define-tile-side}
        \FOR{$\ell \leftarrow 1$ \textbf{to} $M$} \label{line:inner-loop}
            \STATE \textcolor{red}{\text{\# account for the contribution of $a^{\ell-1}_{i}$ to $b^{\ell}_{i}$ - red cell}}
            \STATE \textcolor{red}{$b^{\ell}_{i} \pluseq a^{\ell-1}_{i} \odot \rho^{\ell}_{0}$} \label{line:red-tile}
            \STATE \textcolor{red}{$a^{\ell}_{i} = \block^\ell(b^{\ell}_{i})$} \label{line:red-tile-block}
            \STATE \textcolor{gray}{\text{\# account for the contribution of $a^{\ell-1}_{[i - U + 1, i]}$ to $b^{\ell}_{[i + 1, i + U]}$ - gray tile}}
            \STATE \textcolor{gray}{$b^{\ell}_{[i + 1, i + U]} \pluseq \tau(a^{\ell-1}, [i - U + 1, i], \rho^\ell, [i + 1, i + U])$} \label{line:gray-tile}
        \ENDFOR
        \STATE \text{\# generate next token based on the output of last layer at position $i$}
        \STATE $a^0_{i + 1} = \sampler(a^M_{i})$
    \ENDFOR
\end{algorithmic}
\end{algorithm}

As far as performance goes, Algorithm~\ref{alg:flash-inference-lcsm} practically calls Algorithm~\ref{alg:relaxed-poly} $MD$ times (within each layer and within each of the $D$ dimensions of embeddings). Proposition~\ref{prop:lcsm-analysis} covers its complexity analysis:
\begin{proposition}
\label{prop:lcsm-analysis}
The overall time complexity of Algorithm~\ref{alg:flash-inference-lcsm} is $O(MDL \log^2L)$ for the mixer-part plus $LM$ $\block$ calls, to generate $L=2^P$ tokens. Furthermore, for each $0 \leq q \leq P - 1$, there are $MD 2^{P-1-q}$ calls to $\tau$ on ranges of length $2^q$. The overall amount of memory it takes to store the activations is $O(MLD)$ (as is the case for the naive lazy approach) and the peak memory usage does not change the asymptotics.
\end{proposition}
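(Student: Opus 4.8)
The plan is to reduce the entire statement to Proposition~\ref{prop:relaxed-analysis} by observing that Algorithm~\ref{alg:flash-inference-lcsm} is nothing more than $MD$ interleaved copies of Algorithm~\ref{alg:relaxed-poly}, one for each pair $(\ell,c)$ with $\ell\in\{1,\ldots,M\}$ a layer and $c\in\{1,\ldots,D\}$ an embedding coordinate. First I would make this correspondence precise: fixing $(\ell,c)$, the sequence $a_{\ell-1,\cdot,c}$ plays the role of $y$ and $\rho_{\ell,\cdot,c}$ that of $\rho$, while the red update on line~\ref{line:red-tile} and the gray update on line~\ref{line:gray-tile} coincide coordinate-wise with lines~\ref{line:relax-red-tile} and~\ref{line:relax-gray-tile} of Algorithm~\ref{alg:relaxed-poly}. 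The crucial observation is that the tile side $U$ is set on line~\ref{line:define-tile-side} as a function of the outer index $i$ \emph{alone}, so all $MD$ copies follow an identical schedule of tile sizes; hence the per-copy tile count of Proposition~\ref{prop:relaxed-analysis} applies verbatim to each copy.

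Given this reduction, the ``furthermore'' clause is immediate: each copy issues $2^{P-1-q}$ calls to $\tau$ on ranges of length $2^q$, and summing over the $MD$ copies yields $MD\,2^{P-1-q}$ such calls. For the time bound I would charge each length-$2^q$ call the cost $O(2^q\log 2^q)$ from Lemma~\ref{lemma:lcsm-block-fft} and sum:
\begin{align*}
\sum_{q=0}^{P-1} MD\,2^{P-1-q}\cdot O(2^q q) = O\!\left(MD\,2^{P-1}\sum_{q=0}^{P-1} q\right) = O(MD\,2^{P}P^2) = O(MDL\log^2 L),
\end{align*}
using $L=2^P$ and $P=\log_2 L$; equivalently, one may simply multiply the $O(L\log^2 L)$ per-copy bound of Proposition~\ref{prop:relaxed-analysis} by the $MD$ copies. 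The $\block$ count is a separate elementary tally: line~\ref{line:red-tile-block} fires exactly once per pair $(\ell,i)$ as $\ell$ ranges over the inner loop~\ref{line:inner-loop} and $i$ over the outer loop~\ref{line:outer-loop}, giving $\Theta(LM)$ block evaluations, which I keep as its own term since each $\block_\ell$ carries its own (typically $\Theta(D^2)$) cost.

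For the memory claim I would argue in two parts. The tensors $a$ and $b$ each live in $\R^{M\times L\times D}$, so their storage is $O(MLD)$, matching the naive lazy approach. The only additional memory is the FFT scratch space internal to $\tau$: by Lemma~\ref{lemma:lcsm-block-fft}, a call on ranges of combined length $O(U)$ across the $D$ coordinates uses $O(UD)\le O(LD)$ working space, and since the calls on line~\ref{line:gray-tile} are executed sequentially and write their results in place into $b$ rather than being accumulated, this scratch is reused and never exceeds $O(LD)$ at any instant. As $O(LD)$ is dominated by the $O(MLD)$ activation storage, the peak memory stays $O(MLD)$.

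The argument is essentially bookkeeping layered on top of Proposition~\ref{prop:relaxed-analysis}, so I do not expect a genuine obstacle; the two points needing care are (i) justifying that the shared tile schedule lets the single-stream count transfer to all $MD$ streams without modification, and (ii) confirming that the scratch space of the $\tau$ calls is reused sequentially and hence does not accumulate across the $MD$ copies — the latter being precisely what prevents the peak memory from picking up a spurious factor of $MD$ beyond the unavoidable activation storage.
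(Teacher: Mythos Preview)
Your proposal is correct and follows exactly the approach the paper takes: the paper states just before Proposition~\ref{prop:lcsm-analysis} that ``Algorithm~\ref{alg:flash-inference-lcsm} practically calls Algorithm~\ref{alg:relaxed-poly} $MD$ times (within each layer and within each of the $D$ dimensions of embeddings),'' and leaves the rest implicit. Your write-up is a faithful and more detailed elaboration of precisely this reduction to Proposition~\ref{prop:relaxed-analysis}, with the tile-count, FLOP-sum, $\block$-count, and memory claims all handled correctly.
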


\subsection{Across-layer Parallelization}
\label{sec:across-layers-parallelization}

One important feature of the proposed method is that it allows for higher parallelization across layers. In particular, the gray lines can be taken out of the inner loop and performed in parallel across all layers at once, just after the loop (Algorithm~\ref{alg:flash-inference-lcsm-parallel}) - this is because all their inputs and outputs are disjoint. However, the red lines need to be performed sequentially since $a^\ell_{i}$ is a function of $b^\ell_{i}$ which is a function of $a^{\ell - 1}_{i}$ for every $\ell$.

\begin{algorithm}[ht]
\caption{Flash Inference for LCSMs, parallel across layers}
\label{alg:flash-inference-lcsm-parallel}
\begin{algorithmic}[1]
    \REQUIRE filter $\rho \in \mathbb{R}^{M \times L \times D}$, $\block^{[1, M]}$, first token $a^0_1$ and $\sampler$
    \STATE \textbf{Output:} All activations $a^0, \ldots, a^M \in \mathbb{R}^{L \times D}$ obtained by autoregressively sampling
    \STATE Initialize $b \in \R^{M\times L \times D}$ to zeros
    \FOR{$i \leftarrow 1$ \textbf{to} $L-1$} 
        \STATE $U \leftarrow$ maximum power of $2$ that divides $i$ \text{  \# the side of the $i$\textsuperscript{th} gray tile} 
        \FOR{$\ell \leftarrow 1$ \textbf{to} $M$} 
            \STATE \textcolor{red}{\text{\# account for the contribution of $a^{\ell-1}_{i}$ to $b^{\ell}_{i}$ - red cell}}
            \STATE \textcolor{red}{$b^{\ell}_{i} \pluseq a^{\ell-1}_{i} \odot \rho^{\ell}_{0}$} 
            \STATE \textcolor{red}{$a^{\ell}_{i} = \block^\ell(b^{\ell}_{i})$} 
        \ENDFOR
        \STATE \textcolor{gray}{\texttt{\# account for the contribution of $a_{\cdot, [i - U + 1, i]}$ to $b_{\cdot, [i + 1, i + U]}$ - gray tile(s)}}
        \STATE \textcolor{gray}{\textbf{parallelly} \textbf{across} $\ell \leftarrow 1$ \textbf{to} $M$ \textbf{do:}}
            \STATE \hspace{0.8em} \textcolor{gray}{$b^{\ell}_{[i + 1, i + U]} \pluseq \tau(a^{\ell-1}, [i - U + 1, i], \rho^\ell, [i + 1, i + U])$} 
        \STATE \texttt{\# generate next token based on the output of last layer at position $i$}
        \STATE $a^0_{i + 1} = \sampler(a^M_{i})$
    \ENDFOR
\end{algorithmic}
\end{algorithm}

Note that this optimization can be applied to the eager and lazy approaches as well (provided that the red tiles are accounted for separately from the gray ones). However, the benefit of extra parallelization is practically relevant only in settings that are not memory-bandwidth bound - this is the case for smaller tiles which represent a significant amount of the tiles in our method, but an insignificant amount in the eager and lazy approaches. This is further discussed in Appendix~\ref{sec:memory-discussion-appendix}.

\subsection{Memory Considerations}
\label{sec:memory}
As follows from Proposition~\ref{prop:relaxed-analysis}, the overall size of the inputs to (and outputs of) $\tau$ in the proposed algorithm is only $O(MDL\log L)$ - that is, we access on average activation values at $\log L$ positions per iteration as opposed to the naive implementations (such as lazy and eager approaches) that access an average of $\Omega(L)$ positions. This directly translates to data movement improvements by a factor of up to $L/\log L$.

A simple optimization of static memory is to never store $b$, but rather use $a^\ell_{i}$ to store $b^\ell_{i}$ until hitting line~\ref{line:red-tile-block}: this will work because from that point onwards $b^\ell_{i}$ is never used again and $a^\ell_{i}$ was not needed (or available) thus far. Hence, the static memory overhead is minimal: at least in the LCSM case, one directly aggregates contributions to the same activation tensor, thus not using any extra storage (on top of the inherent $MLD$ it takes to store the activations).

The only extra cost we pay is in the peak memory usage which is given by the largest tile - this is $O(MLD)$, but can be dropped to $O(LD)$ at essentially no cost by dropping parallelization for large tiles, as further discussed in Appendix~\ref{sec:memory-discussion-appendix}.

Lastly, if one is restricted to only using one GPU and the whole tensor of $M\times L \times D$ cannot fit it, it is possible to drop the requirement to only storing $M\times (L/2) \times D$ activations by dropping parallelization in the largest tile (which was alluded above). This, together with a more granular analysis of memory usage is further discussed in Appendices~\ref{sec:store-half-appendix} and Appendix~\ref{sec:memory-discussion-appendix}.

\section{The Flash Inference Framework}
\label{sec:flash-inference}
We propose a framework called Flash Inference to generalize our fast LCSM inference algorithm. To do so, we identify the main properties that are being exploited by Algorithm~\ref{alg:flash-inference-lcsm} and show how, granted these properties, one can get fast inference following the same kind of tiling.

Notice that in the case of LCSMs, it never mattered how one computes the contribution of a range of inputs to another range: only that it is well-defined (as soon as the inputs are available) and that it can be done more efficiently than the brute-force approach. Thus, the crux of Algorithm~\ref{alg:flash-inference-lcsm} is not convolution-specific, but rather the way we tile the space of contributions while ensuring autoregressive generation can take place.


\subsection{Architectural Properties}

In order for our framework to apply, we need the mixers involved to have certain properties:
\begin{enumerate}[label=\textbf{P.\arabic*}]
    \item\label{assumption:contribution-based} \textbf{Contribution-based} The used mixers work by aggregating contributions of each input position to each subsequent output position. That is, for any $\mixer^\ell$, there exist an \emph{associative} aggregation function $\agg:\gX^* \to \gX$,
    a read function $\frmstate:\gX \to \R^D$ and a contribution function $cont : \R^D \times \N \times \N \to \gX$ such that:
    \begin{align}
        \label{eq:mixer-abstract-def}
        \mixer(y)_i = \frmstate(\agg(\cont(y, 1, i), \cont(y, 2, i), \ldots \cont(y, i, i)))
    \end{align}
    where, $\gX$ is a set of intermediate states and $\frmstate$ is a function to map those back to embeddings. Recall that $\agg$ being associative means that $\agg(x_1, x_2, \ldots x_\tau)=\agg(\agg(x_1, \ldots x_i), \agg(x_{i+1}, \ldots x_\tau))$ for any $1 \leq i < \tau$. For a sensible architecture, the size of $\gX$ and cost of $\agg$ should be of order $D$.
    
    In the case of self-attention this translates to having $\frmstate \circ \agg$ simulate the $\softmax$, by letting $\gX=\R^D \times \R$, $\agg=\sum$ and
    \begin{align*}
        \cont(y, i, j) = (V y_i \cdot e^{y_j^\top Q K^\top y_i}, e^{y_j^\top Q K^\top y_i}) = (v_i \cdot e^{\innerp{k_i}{q_j}}, e^{\innerp{k_i}{q_j}})
    \end{align*}
    that is, the exponentially weighted value vector along with the exponential weight. Finally one can use $\frmstate(v, w)=v / w$ to implemnent the $\softmax$ normalization step.
    
    In the case of LCSMs, one can simply choose $\gX = \R^D$, $\frmstate$ be the identity function, $\agg$ be the sum again and $\cont(y, i, j) = y_{i} \odot \rho_{j-i}$.
    \item\label{assumption:query-independent} \textbf{Query-independent} The contribution function $\cont(y, i, j)$ does not depend on $y_{[i+1, L]}$. Note that this is the case in LCSMs since $y_{i} \odot \rho_{j-i}$ only depends on $y_i$. However, it is not the case for transformers, since $\cont(y, i, j)$ depends on $q_j$ which depends on $y_j$.
\end{enumerate}

\subsection{Setting and Definitions}
Suppose \ref{assumption:contribution-based} holds and there exists an algorithm $\gA$ that for any given input sequence  $y$ and indices $l \leq r < l' \leq r'$, computes the contributions of $y_{[l, r]}$ to outputs at every position $p \in [l', r']$:
\begin{align*}
    \gA(y, [l, r], [l', r'])_{l' \leq p \leq r'} = \agg(\cont(y, l, p), \cont(y, l+1, p), \ldots \cont(y, r, p)).
\end{align*}
Furthermore, for this choice of $\gA$, let $\gT : \N \times \N \to \N$ such that for any $l \leq r < l' \leq r'$, evaluating $A(y, [l, r], [l', r'])$ takes at most $\gT(r-l+1, r'-l'+1)$ FLOPs.

If we dropped the $r < l'$ condition and set $l=l'=1$ and $r=r'=L$, this algorithm would actually represent the procedure one needs to perform a forward pass during training - which we refer to as the \emph{static} setting. In the case of LCSMs, $\gA$ is the FFT-based algorithm underlying Lemma~\ref{lemma:lcsm-block-fft}, with an associated $\gT(L_1, L_2)=D(L_1+L_2)\log(L_1+L_2)$, as opposed to the naive implementation that would have $\gT_{\text{naive}}(L_1, L_2)=DL_1\cdot L_2$.

\subsection{Main Result}

Our main result can be phrased as follows:

\begin{theorem}
\label{thm:main-algo-analysis}
    Under the assumptions \ref{assumption:contribution-based} and \ref{assumption:query-independent}, one can generate $L=2^P$ tokens autoregressively by performing, per layer, $L - 1$ black-box calls to $\gA$ as well as $L$ more calls to $\cont$, $\agg$, $\frmstate$ and $\block$. Concretely, there are $L/2=2^{P-1}$ calls to $\gA$ of length $1$ each, $L/4=2^{P-2}$ calls of length $2$ each and so on up to $1$ call of length $L / 2$. Hence, neglecting the $\cont$, $\agg$, $\frmstate$ and $\block$ part, the overall time complexity per mixer layer is:
    \begin{align*}
        \text{FLOPs} = \sum_{q = 0}^{P - 1}{2^{P - 1 - q}\gT(2^q, 2^q)}
    \end{align*}
    Furthermore, during every token-generation iteration, the calls to $\gA$ across different layers can be performed in parallel as there is no data-dependency between them.
\end{theorem}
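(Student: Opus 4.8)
The plan is to lift Algorithm~\ref{alg:flash-inference-lcsm} essentially verbatim to the abstract setting, replacing the convolution-specific primitives by their framework analogues and leaning on Proposition~\ref{prop:relaxed-analysis} for the combinatorics. Concretely, I would maintain for each layer $\ell$ and position $j$ a running aggregate state $b_{\ell, j} \in \gX$ (the analogue of the partial sum in the LCSM case), initialized to the empty aggregate. At iteration $i$ the \emph{red} step aggregates the diagonal contribution, $b_{\ell, i} \leftarrow \agg(b_{\ell, i}, \cont(a_{\ell-1}, i, i))$, and then finalizes the activation via $a_{\ell, i} = \block_\ell(\frmstate(b_{\ell, i}))$; the \emph{gray} step aggregates the just-unlocked tile, $b_{\ell, p} \leftarrow \agg(b_{\ell, p}, \gA(a_{\ell-1}, [i-U+1, i], [i+1, i+U])_p)$ for each $p \in [i+1, i+U]$, where $U$ is the largest power of $2$ dividing $i$. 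Since the LCSM instantiation is exactly this scheme with $\gX = \R^D$, $\agg = \sum$, $\frmstate = \mathrm{id}$ and $\cont(y,i,j) = y_i \odot \rho_{j-i}$, nothing about the tiling itself changes; only the arithmetic inside a tile is abstracted into $\gA$.

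For correctness I would establish the loop invariant that, at the start of iteration $i$, $b_{\ell, i}$ equals $\agg(\cont(y,1,i), \ldots, \cont(y,i-1,i))$ with $y = a_{\ell-1}$, so that the red step completes it to $\mixer_\ell(y)_i$ after applying $\frmstate$. Two ingredients drive this. First, the purely combinatorial covering fact already underlying Algorithm~\ref{alg:relaxed-poly} and Proposition~\ref{prop:relaxed-analysis}: for any output position $j$, the input ranges $[i-U+1,i]$ of the gray tiles that write to $j$ (those $i<j$ with $j \le i + 2^{v_2(i)}$), together with the singleton $\{j\}$ of the red step, partition $[1, j]$ exactly; this follows from the dyadic structure of the tile sides and can be checked by induction on $j$. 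Second, by associativity of $\agg$ (Property~\ref{assumption:contribution-based}), aggregating these ranges piecewise and combining the partial results equals aggregating over $[1, i-1]$ in one shot, which is precisely what the per-tile calls to $\gA$ compute. The only scheduling concern is that each gray tile fires only once its input segment has been produced; since the tile at iteration $i$ reads $y_{[i-U+1,i]} = a_{\ell-1, [i-U+1,i]}$, all of which is finalized by iteration $i$, Property~\ref{assumption:query-independent} (query-independence) guarantees $\gA$ can be evaluated at that moment, as it never needs $y$ beyond position $i$.

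The call counts are inherited directly from Proposition~\ref{prop:relaxed-analysis}. For each $q \in \{0, \ldots, P-1\}$, the indices $i \in \{1, \ldots, L-1\}$ with $v_2(i) = q$ are exactly the odd multiples of $2^q$ below $2^P$, of which there are $2^{P-1-q}$; each contributes one gray tile of side $2^q$, i.e. a call to $\gA$ with both argument lengths equal to $2^q$ and cost at most $\gT(2^q, 2^q)$. Summing gives the stated $\sum_{q=0}^{P-1} 2^{P-1-q}\gT(2^q,2^q)$ FLOPs per layer and a total of $\sum_{q=0}^{P-1} 2^{P-1-q} = L-1$ calls to $\gA$, alongside one diagonal finalization (a $\cont$, $\agg$, $\frmstate$, $\block$ quadruple) per position, i.e. $L$ of each.

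Finally, for the across-layer parallelism I would argue disjointness of memory footprints. Within a fixed iteration $i$ the red sweep must run sequentially in $\ell$, since $a_{\ell, i}$ feeds $b_{\ell+1, i}$; but once it finishes, the gray tile at layer $\ell$ reads only already-finalized past values $a_{\ell-1, [i-U+1, i]}$ and writes only future positions $b_{\ell, [i+1, i+U]}$. Across layers these read and write sets are disjoint (distinct tensors $a$ versus $b$, and no layer reads another's $b$), so the $M$ gray calls at iteration $i$ can be issued in parallel, as in Algorithm~\ref{alg:flash-inference-lcsm-parallel}. The main obstacle is the correctness invariant of the second paragraph: verifying that the dyadic tiling partitions the contribution triangle exactly while respecting availability. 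This is where Properties~\ref{assumption:contribution-based} and~\ref{assumption:query-independent} do the real work — associativity legitimizes the piecewise aggregation, and query-independence is exactly the condition that makes each tile computable at the iteration it is scheduled.
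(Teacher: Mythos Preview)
Your proposal is correct and follows essentially the same approach as the paper, which likewise appeals to the tiling of Figure~\ref{fig:lazy-eager-fast-tiling} for correctness and to the odd-multiples-of-$2^q$ count for the call distribution. The one point the paper flags that you leave implicit is that, because $\agg$ is only assumed associative (not commutative), the partition of $[1,j]$ must be processed in left-to-right order; this holds since the gray tiles writing to a fixed $j$ have disjoint input ranges ending at their iteration index $i$ and are handled in increasing $i$, but your invariant argument should state this explicitly.
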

Here, we are being ignorant of the $L$ calls to $\cont$, $\agg$, $\frmstate$ and $\block$ because they only scale linearly with $L$ - however MLPs scale quadratically in $D$ and therefore these can be the limiting factor when $D$ is large in comparison to $L$.

\begin{proof}
Figure~\ref{fig:lazy-eager-fast-tiling} shows why the tiling used in Algorithm~\ref{alg:flash-inference-lcsm} covers every pair of contributions exactly once and in the correct order (remember, we only assumed $\agg$ to be associative, not necessarily commutative).
The more general algorithm is illustrated in Algorithm~\ref{alg:flash-generic} and follows the same shape as its LCSM counterpart. The only difference is that $\gX$ is not assumed to be $\R^D$ necessarily, so one cannot store the intermediate accumulated states in the same place they store the activations (though they could reuse some of the space since $a^\ell_{i}$ and $b^\ell_{i}$ never need to coexist). We use $b^\ell_{i}$ to store the inner part of Equation~\ref{eq:mixer-abstract-def}, namely $b^\ell_{i}$ incrementally computes $\agg(\cont(a^{\ell-1}, 1, i), \cont(a^{\ell-1}, 2, i), \ldots \cont(a^{\ell-1}, i, i))$. Finally, we explicitly moved the gray tile calls to happen after the inner loop and in parallel - this simply shows the last point of the theorem, namely that the calls to $\gA$ can be done parallelly across layers.

The only part remaining to be proved is the right counting of calls to $\gA$ per layer: calls of length $2^q$ happen whenever $2^q$ divides $i$ but $2^{r+1}$ does not - that is, for $i \in \{2^q, 3\cdot 2^q, 5 \cdot r^2, \ldots (2^{P - q} - 1) \cdot 2^q\}$, so $2^{P - q - 1}$ calls.
\end{proof}

\begin{algorithm}[ht]
\caption{Generic Flash Inference}
\label{alg:flash-generic}
\begin{algorithmic}[1]
    \REQUIRE Functions $\gA, \cont, \agg, \frmstate, \block^{[1, M]}$, first token $a^0_1$ and $\sampler$
    \STATE \textbf{Output:} All activations $a^0, \ldots, a^M \in \mathbb{R}^{L \times D}$ obtained by autoregressively sampling
    \STATE Initialize $b \in \gX^{M \times L \times D}$ with elements neutral to $\agg$
    \FOR{$i \leftarrow 1$ \textbf{to} $L-1$}
        \STATE $U \leftarrow$ maximum power of $2$ that divides $i$ \text{  \# the side of the $i$\textsuperscript{th} gray tile}
        \FOR{$\ell \leftarrow 1$ \textbf{to} $M$}
            \STATE \textcolor{red}{\text{\# account for the contribution of $a^{\ell - 1}_{i}$ to $b^\ell_{i}$ - red cell}}
            \STATE \textcolor{red}{$b^\ell_{i} = \agg(b^\ell_{i}, \cont(a^{\ell-1}, i, i))$} 
            \STATE \textcolor{red}{$a^\ell_{i} = \block^\ell(\frmstate(b^\ell_{i}))$}
        \ENDFOR
        \STATE \textcolor{gray}{\text{\# account for the contribution of $a^{:}_{[i - U + 1, i]}$ to $b^{:}_{[i + 1, i + U]}$ - gray tile(s)}}
        \STATE \textcolor{gray}{\textbf{parallelly} \textbf{across} $\ell \leftarrow 1$ \textbf{to} $M$ \textbf{do:}}
            \STATE \hspace{0.8em} \textcolor{gray}{$b^\ell_{[i + 1, i + U]} = \agg(b^{\ell}_{[i + 1, i + U]}, \gA(a^{\ell-1}, [i - U + 1, i], [i + 1, i + U]))$}
        \STATE \text{\# generate next token based on the output of last layer at position $i$}
        \STATE $a^0_{i + 1} = \sampler(a^M_{i})$
    \ENDFOR
\end{algorithmic}
\end{algorithm}

\section{Experiments}
\label{sec:experiments}
We performed two kinds of experiment: one synthetic and one on the Hyena architecture \citep{poli2023hyena}. The synthetic setup defines all $\block$s to be MLPs with a hidden dimension of $2D$ and GELU activation; it also simply sets $a^0_{i+1}$ as $a^M_{i}$ plus some noise to avoid dependency on vocabulary size since that is out of the scope of our framework and will be negligible at scale. Note that this can be viewed as a sampler: a function from logits at the last layer and previous position to the next token's embedding. In both settings the weights are initialized to random noise since it does not affect the runtime - this avoid unnecessarily training a new model for each hyperparameter choice.


In terms of notation, we introduce the batch dimension $B$ and keep $M, D, L$ refer to the number of layers, embedding dimension and number of tokens generated, respectively. We use $U$ to refer to the square-tile length, as per line~\ref{line:define-tile-side}. We sweep over $B \in \{1, 2, 4, 8\}, M \in \{18, 36\}$, $D \in \{256, 768\}$ and generate tokens up to the greatest power of $2$ that fits the memory. All results are obtained by averaging over $4$ runs following $2$ runs of warm-up. We evaluate our approach on on the latest NVIDIA H100 and A100 GPUs. (Results for all the parameter sweeps in supplementary material).


For our baselines, (1) we consider the eager and lazy approaches described in Section~\ref{sec:proposed-lcsm-algorithm}, depicted in~\ref{fig:lazy-eager-fast-tiling}. We also consider their implementation exploiting our observation regarding parallelization across layers~(\ref{sec:across-layers-parallelization}) (we denote the  the parallel versions simply as lazy and eager).

Our Flash Inference framework explored various implementations of $\tau$, covered in Section~\ref{sec:impls}. Our best method is a \emph{Hybrid} that dynamically chooses the optimal implementation of $\tau$ depending on $(B, D, M, U)$.

\subsection{Integrating Flash Inference in real world setting (Hyena Architecture)}

Flash Inference significantly speeds-up the end-to-end inference by up to 7.8$\times$ and the convolution-based mixer component of the Hyena architecture by 110$\times$ compared to the baselines as shown in Figures`\ref{fig:hyena-breakdown} and \ref{fig:hyena-mixer}.

Figure~\ref{fig:hyena-per-token} shows the per token response time of Hybrid and baselines. Hybrid shows low variance in per-token time except at the tokens positions where large tiles are computed. For a given sequence length of $L$, we have $L/2$ positions that use tile size 1, $L/4$ positions using tile size 2, and so on. That is, 93.75\% of tokens use a tile size $U \leq 8$. Hence the spikes occur rarely. 

\begin{figure*}[ht]
    \centering
    \begin{subfigure}{.33\textwidth}
    \centering
    \includegraphics[width=\linewidth]{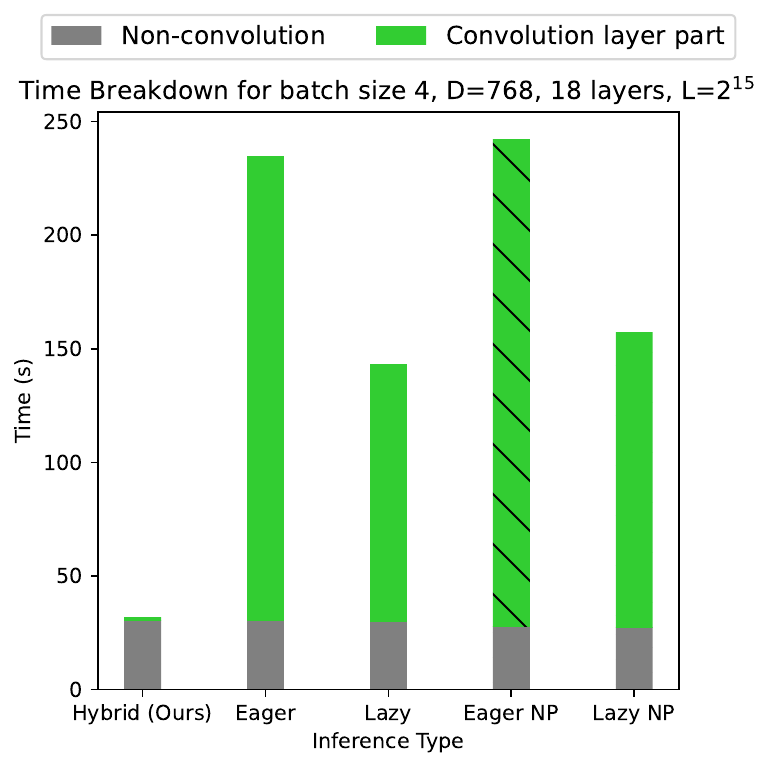}
    \vspace{-1.8em}
    \caption{\small}
    \label{fig:hyena-breakdown}
    \end{subfigure}%
    \begin{subfigure}{.33\textwidth}
    \centering
    \includegraphics[width=\linewidth]{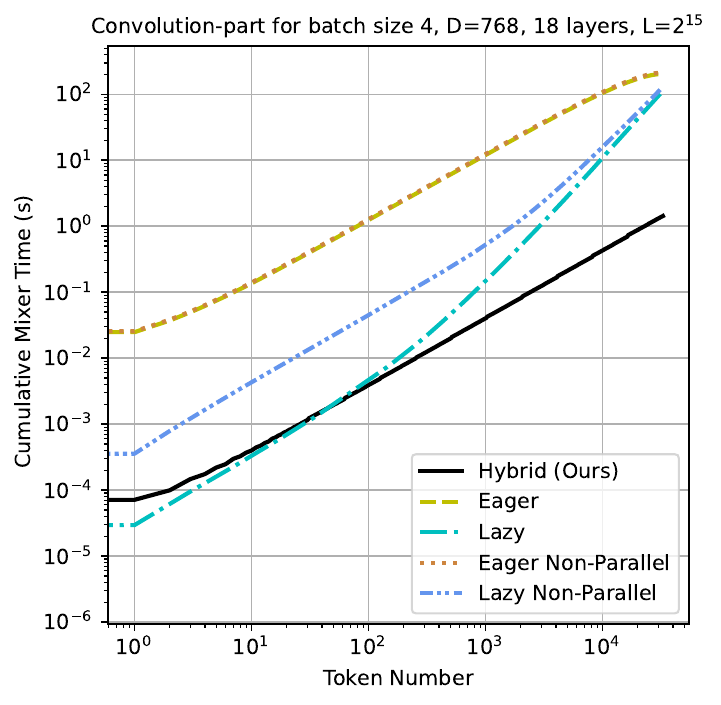}
    \vspace{-1.8em}
    \caption{\small}
    \label{fig:hyena-mixer}
    \end{subfigure}%
    \begin{subfigure}{.33\textwidth}
    \centering
    \includegraphics[width=\linewidth]{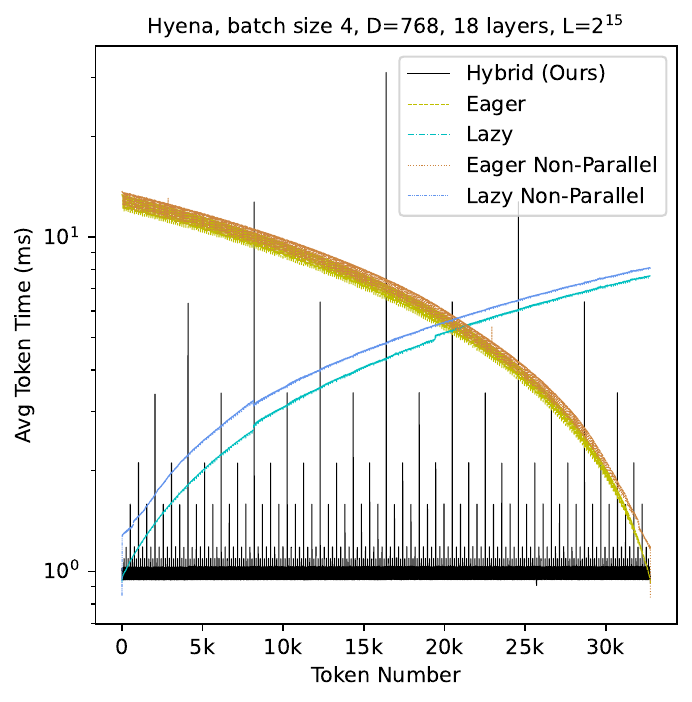}
    \vspace{-1.8em}
    \caption{\small}
    \label{fig:hyena-per-token}
    \end{subfigure}
\caption{\small Real world Hyena experiments: (a) End-to-end inference time breakdown shows Hybird provides 4.8$\times$ speed-up over optimized baselines  (b) Cumulative mixer time of Hybrid scales 90$\times$ better (c) Hybrid shows low variance in per-token response time except at the tokens positions where large tiles are computed.}
\end{figure*}

\subsection{$\tau$ Implementations}
\label{sec:impls}
\label{sec:optimizations}
Algorithm~\ref{alg:flash-inference-lcsm} assumes an implementation of primitive $\tau$ that accounts for the contribution of a range of inputs to a range of outputs of a convolution. We considered $7$ different implementations but, here, we only present results of the ones on the Pareto Frontier - that is, those optimal for at least some $(B, N, D, U)$ setting (the others are covered in Appendix~\ref{sec:implementation-details-appendix}). There are $4$ such implementations, of two types:
\begin{enumerate}[label=(\arabic*)]
\item Depthwise-separable 1D Convolution: we use two types of implementations, both asymptotically quadratic in tile size $U$:
    \vspace{-3pt} \begin{enumerate}
        \item \emph{Conv1D} refers to the default PyTorch \citep{pytorch2019} kernel, that requires explicit padding
        \item \emph{Flash Conv1D} refers to fused kernel by FlashFFTConv \citep{fu2023flashfftconv}
    \end{enumerate}
\vspace{13pt}\item FFT based convolution: we again use two implementations, only that this time they scale as $O(U \log U)$:
    \vspace{-3pt}\begin{enumerate}
        \item PyTorch native, shown as \emph{FFT}, that requires computing the DFTs of inputs, followed by pointwise multiplication and lastly performing the inverse DFT operation
        \item the second is provided by FlashFFTConv that does all the above in a single fused kernel shown as \emph{FlashFFT}
    \end{enumerate}
\end{enumerate}

\subsection{Mixer Isolation study and Hybridization of $\tau$ implementations}
We evaluate the different convolution implementations for all settings of $B, D, M, U$ and observe that each of these four implementation lies on the pareto frontier curve of tile size vs latency as shown in Figure~\ref{fig:synthetic-pareto}. Our \emph{Hybrid} approach dynamically chooses the best $\tau$ implementation for a given tile size $U$ based on the isolated empirically-measured efficiency of each implementation as shown in Figures~\ref{fig:synthetic-pareto}. Figure~\ref{fig:synthetic-breakdown} shows the cumulative token time breakdown for the mixer and non-mixer components of the synthetic setup for all our $\tau$ implementations. We observe the non-mixer component staying largely the same across approaches - this is made possible through the employment of CUDA Graphs as further discussed in Appendix~\ref{sec:cuda-graphs-appendix}.

\begin{figure*}[ht]
    \centering
    \begin{subfigure}{.26\textwidth}
    \centering
    \includegraphics[width=\linewidth]{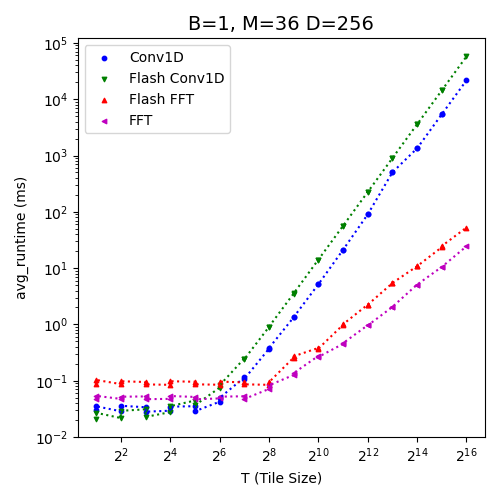}
    \vspace{-1.8em}
    \caption{\small}
    \label{fig:synthetic-pareto}
    \end{subfigure}%
    \begin{subfigure}{.27\textwidth}
    \centering
    \includegraphics[width=\linewidth]{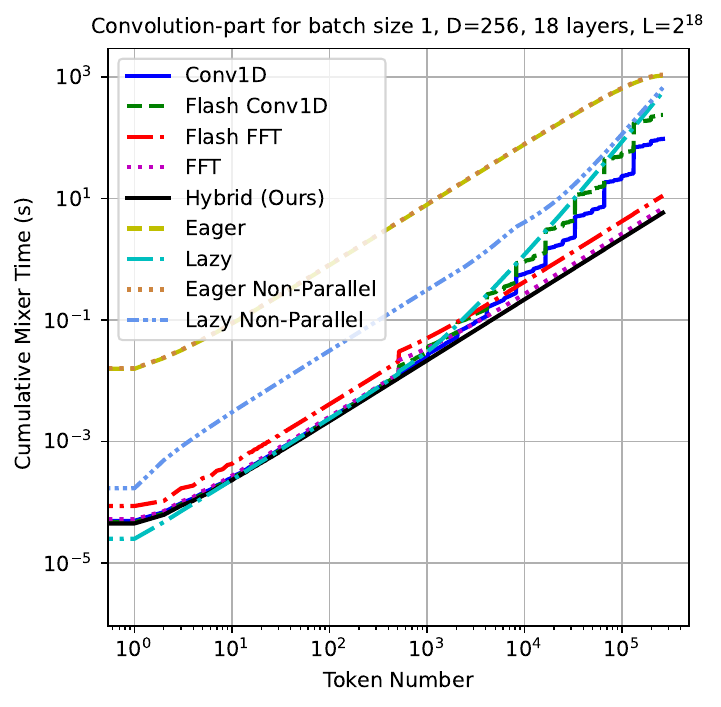}
    \vspace{-1.8em}
    \caption{\small}
    \label{fig:synthetic-cum}
    \end{subfigure}%
    \begin{subfigure}{.49\textwidth}
    \centering
    \includegraphics[width=\linewidth]{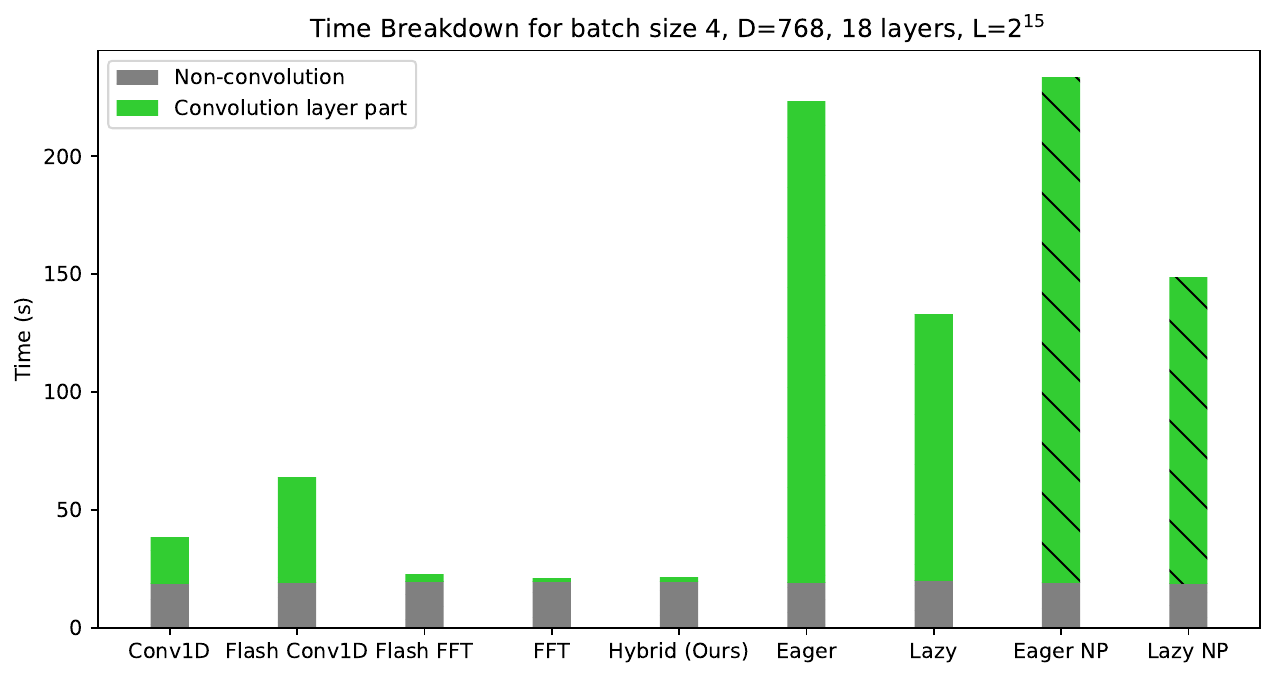}
    \vspace{-1.8em}
    \caption{\small}
    \label{fig:synthetic-breakdown}
    \end{subfigure}
\caption{\small Mixer Isolation in a Synthetic setting: (a) Different implementations of $\tau$ are optimal for different tile sizes creating a pareto optimal curve for Hybrid to choose, (b) Cumulative mixer inference of Hybrid achieves the best of all $\tau$ Implementations (c) End-to-end cumulative token inference breakdown}
\label{fig:cum}
\end{figure*}

\subsection{Improvements justification}
This significant speed-up can be attributed to:

\begin{enumerate}[label=(\arabic*)]
    \item The significantly lower $O(L\log^2 L)$ FLOPs required for our tile computation approach compared to the $\Omega(L^2)$ FLOPs required by \emph{Eager} and \emph{Lazy} counterparts. This is shown in Figure~\ref{fig:hyena-mixer} where methods based on our tiling outperform by a large constant the quadratic methods in terms of time spent on the mixer components.
    \item Drastically reduced activation memory access from $\Omega(L^2)$ for \emph{Eager} and \emph{Lazy} to out $O(L\log L)$ tiling-based methods. This can be shown through the performance of \emph{Flash Conv1D} (Figure~\ref{fig:synthetic-cum}) which outperforms lazy and eager by a margin although it also performs $\Omega(L^2)$ FLOPs - it does so in a more memory-friendly and kernel-optimizable way.
    \item The dynamic choice of best $\tau$ implementation for given tile $U$ - hybrid outperforming any method using a fixed implementation (Figure~\ref{fig:synthetic-cum}).
    \item Our engineering contributions: first, the DFT for the convolutional kernel is pre-computed ahead of time for $\log_2(L) - 1$ tile sizes. Second, Flash-FFT configurations are pre-initialized for these tile sizes to maximize hardware performance. Third, right padding is used instead of left padding to reduce computation time by half. Fourth, properties of circular convolution are exploited to halven FFT length. Finally, tile calculations are parallelized across layers to saturate memory bandwidth for small tile computations and optimize computation for large tile computations, resulting in improved performance (one can notice improvements of $10-20$\% even in the Eager and Lazy implementations alone) 
\end{enumerate}

\label{sec:hybridizaion}

\section{Conclusion and Further Work}

We propose a framework for performing inference in certain autoregressive sequence models. Among such models, LCSMs are noteworthy: there, our framework provides an $O(L\log^2 L)$ inference algorithm which, when run empirically, yields up to $7.8\times$ end-to-end time-efficiency improvement. The framework exploits a causal, fractal tiling that helps save on data movement and share computation. Moreover, it allows for almost-complete across-layers parallelization of $\mixer$-related workload. 

An interesting future direction to pursue is that of designing architectures that fit out framework requirements and thus get fast-inference by construction. Furthermore, in the class of LCSMs, we have noted that one can achieve the same theoretical complexity if filters are made data-dependent, and, while previous works \citet{arora2023zoology, karami2024orchid} have shown the potential for these, they are not yet causal so looking into how to make filters data-dependent in a causal way is another promising direction.
\section{Acknowledgments}\label{sec:acknowledgments}
This work was partially enabled by use of the Kempner AI cluster. Sham Kakade acknowledges: this work has been made possible in part by a gift from the Chan Zuckerberg Initiative Foundation to establish the Kempner Institute for the Study of Natural and Artificial Intelligence. SK and CO also acknowledge support from the Office of Naval Research under award N00014-22-1-2377, and the National Science Foundation Grant under award \#IIS 2229881.

\bibliographystyle{abbrvnat}
\bibliography{refs}

\begin{thebibliography}{29}
\providecommand{\natexlab}[1]{#1}
\providecommand{\url}[1]{\texttt{#1}}
\expandafter\ifx\csname urlstyle\endcsname\relax
  \providecommand{\doi}[1]{doi: #1}\else
  \providecommand{\doi}{doi: \begingroup \urlstyle{rm}\Url}\fi

\bibitem[Agarwal et~al.(2024)Agarwal, Chen, Dogariu, Feinberg, Suo, Bartlett, and Hazan]{agarwal2024futurefill}
N.~Agarwal, X.~Chen, E.~Dogariu, V.~Feinberg, D.~Suo, P.~Bartlett, and E.~Hazan.
\newblock Futurefill: Fast generation from convolutional sequence models.
\newblock \emph{arXiv preprint arXiv:2410.03766}, 2024.

\bibitem[Arora et~al.(2023)Arora, Eyuboglu, Timalsina, Johnson, Poli, Zou, Rudra, and R{\'e}]{arora2023zoology}
S.~Arora, S.~Eyuboglu, A.~Timalsina, I.~Johnson, M.~Poli, J.~Zou, A.~Rudra, and C.~R{\'e}.
\newblock Zoology: Measuring and improving recall in efficient language models.
\newblock \emph{arXiv preprint arXiv:2312.04927}, 2023.

\bibitem[Fu et~al.(2024)Fu, Arora, Grogan, Johnson, Eyuboglu, Thomas, Spector, Poli, Rudra, and R{\'e}]{fu2024monarchMixer}
D.~Fu, S.~Arora, J.~Grogan, I.~Johnson, E.~S. Eyuboglu, A.~Thomas, B.~Spector, M.~Poli, A.~Rudra, and C.~R{\'e}.
\newblock Monarch mixer: A simple sub-quadratic gemm-based architecture.
\newblock \emph{Advances in Neural Information Processing Systems}, 36, 2024.

\bibitem[Fu et~al.(2022)Fu, Dao, Saab, Thomas, Rudra, and R{\'e}]{fu2022H3}
D.~Y. Fu, T.~Dao, K.~K. Saab, A.~W. Thomas, A.~Rudra, and C.~R{\'e}.
\newblock Hungry hungry hippos: Towards language modeling with state space models.
\newblock \emph{arXiv preprint arXiv:2212.14052}, 2022.

\bibitem[Fu et~al.(2023{\natexlab{a}})Fu, Epstein, Nguyen, Thomas, Zhang, Dao, Rudra, and R{\'e}]{fu2023simpleLongConv}
D.~Y. Fu, E.~L. Epstein, E.~Nguyen, A.~W. Thomas, M.~Zhang, T.~Dao, A.~Rudra, and C.~R{\'e}.
\newblock Simple hardware-efficient long convolutions for sequence modeling.
\newblock In \emph{International Conference on Machine Learning}, pages 10373--10391. PMLR, 2023{\natexlab{a}}.

\bibitem[Fu et~al.(2023{\natexlab{b}})Fu, Kumbong, Nguyen, and R{\'e}]{fu2023flashfftconv}
D.~Y. Fu, H.~Kumbong, E.~Nguyen, and C.~R{\'e}.
\newblock Flashfftconv: Efficient convolutions for long sequences with tensor cores.
\newblock \emph{arXiv preprint arXiv:2311.05908}, 2023{\natexlab{b}}.

\bibitem[Gaido et~al.(2024)Gaido, Papi, Negri, and Bentivogli]{gaido2024longconv}
M.~Gaido, S.~Papi, M.~Negri, and L.~Bentivogli.
\newblock How do hyenas deal with human speech? speech recognition and translation with confhyena.
\newblock \emph{arXiv preprint arXiv:2402.13208}, 2024.

\bibitem[Gu and Dao(2023)]{gu2023mamba}
A.~Gu and T.~Dao.
\newblock Mamba: Linear-time sequence modeling with selective state spaces.
\newblock \emph{arXiv preprint arXiv:2312.00752}, 2023.

\bibitem[Gu et~al.(2021{\natexlab{a}})Gu, Goel, and R{\'e}]{gu2021S4}
A.~Gu, K.~Goel, and C.~R{\'e}.
\newblock Efficiently modeling long sequences with structured state spaces.
\newblock \emph{arXiv preprint arXiv:2111.00396}, 2021{\natexlab{a}}.

\bibitem[Gu et~al.(2021{\natexlab{b}})Gu, Johnson, Goel, Saab, Dao, Rudra, and R{\'e}]{gu2021combiningLSSL}
A.~Gu, I.~Johnson, K.~Goel, K.~Saab, T.~Dao, A.~Rudra, and C.~R{\'e}.
\newblock Combining recurrent, convolutional, and continuous-time models with linear state space layers.
\newblock \emph{Advances in neural information processing systems}, 34:\penalty0 572--585, 2021{\natexlab{b}}.

\bibitem[Gu et~al.(2022)Gu, Goel, Gupta, and R{\'e}]{gu2022parameterizationS4D}
A.~Gu, K.~Goel, A.~Gupta, and C.~R{\'e}.
\newblock On the parameterization and initialization of diagonal state space models.
\newblock \emph{Advances in Neural Information Processing Systems}, 35:\penalty0 35971--35983, 2022.

\bibitem[Gupta et~al.(2022)Gupta, Gu, and Berant]{gupta2022diagonalDSS}
A.~Gupta, A.~Gu, and J.~Berant.
\newblock Diagonal state spaces are as effective as structured state spaces.
\newblock \emph{Advances in Neural Information Processing Systems}, 35:\penalty0 22982--22994, 2022.

\bibitem[Karami and Ghodsi(2024)]{karami2024orchid}
M.~Karami and A.~Ghodsi.
\newblock Orchid: Flexible and data-dependent convolution for sequence modeling.
\newblock \emph{arXiv preprint arXiv:2402.18508}, 2024.

\bibitem[Li et~al.(2022)Li, Cai, Zhang, Chen, and Dey]{sgconv}
Y.~Li, T.~Cai, Y.~Zhang, D.~Chen, and D.~Dey.
\newblock What makes convolutional models great on long sequence modeling?
\newblock \emph{arXiv preprint arXiv:2210.09298}, 2022.

\bibitem[Ma et~al.(2022)Ma, Zhou, Kong, He, Gui, Neubig, May, and Zettlemoyer]{ma2022mega}
X.~Ma, C.~Zhou, X.~Kong, J.~He, L.~Gui, G.~Neubig, J.~May, and L.~Zettlemoyer.
\newblock Mega: moving average equipped gated attention.
\newblock \emph{arXiv preprint arXiv:2209.10655}, 2022.

\bibitem[Ma et~al.(2024)Ma, Yang, Xiong, Chen, Yu, Zhang, May, Zettlemoyer, Levy, and Zhou]{ma2024megalodon}
X.~Ma, X.~Yang, W.~Xiong, B.~Chen, L.~Yu, H.~Zhang, J.~May, L.~Zettlemoyer, O.~Levy, and C.~Zhou.
\newblock Megalodon: Efficient llm pretraining and inference with unlimited context length.
\newblock \emph{arXiv preprint arXiv:2404.08801}, 2024.

\bibitem[Massaroli et~al.(2024)Massaroli, Poli, Fu, Kumbong, Parnichkun, Romero, Timalsina, McIntyre, Chen, Rudra, et~al.]{massaroli2024laughing}
S.~Massaroli, M.~Poli, D.~Fu, H.~Kumbong, R.~Parnichkun, D.~Romero, A.~Timalsina, Q.~McIntyre, B.~Chen, A.~Rudra, et~al.
\newblock Laughing hyena distillery: Extracting compact recurrences from convolutions.
\newblock \emph{Advances in Neural Information Processing Systems}, 36, 2024.

\bibitem[Mehta et~al.(2022)Mehta, Gupta, Cutkosky, and Neyshabur]{mehta2022longGSS}
H.~Mehta, A.~Gupta, A.~Cutkosky, and B.~Neyshabur.
\newblock Long range language modeling via gated state spaces.
\newblock \emph{arXiv preprint arXiv:2206.13947}, 2022.

\bibitem[Paine et~al.(2016)Paine, Khorrami, Chang, Zhang, Ramachandran, Hasegawa-Johnson, and Huang]{paine2016fastWavenet}
T.~L. Paine, P.~Khorrami, S.~Chang, Y.~Zhang, P.~Ramachandran, M.~A. Hasegawa-Johnson, and T.~S. Huang.
\newblock Fast wavenet generation algorithm.
\newblock \emph{arXiv preprint arXiv:1611.09482}, 2016.

\bibitem[Paszke et~al.(2019)Paszke, Gross, Massa, Lerer, Bradbury, Chanan, Killeen, Lin, Gimelshein, Antiga, et~al.]{pytorch2019}
A.~Paszke, S.~Gross, F.~Massa, A.~Lerer, J.~Bradbury, G.~Chanan, T.~Killeen, Z.~Lin, N.~Gimelshein, L.~Antiga, et~al.
\newblock Pytorch: An imperative style, high-performance deep learning library.
\newblock \emph{Advances in neural information processing systems}, 32, 2019.

\bibitem[Poli et~al.(2023)Poli, Massaroli, Nguyen, Fu, Dao, Baccus, Bengio, Ermon, and R{\'e}]{poli2023hyena}
M.~Poli, S.~Massaroli, E.~Nguyen, D.~Y. Fu, T.~Dao, S.~Baccus, Y.~Bengio, S.~Ermon, and C.~R{\'e}.
\newblock Hyena hierarchy: Towards larger convolutional language models.
\newblock \emph{arXiv preprint arXiv:2302.10866}, 2023.

\bibitem[Romero et~al.(2021{\natexlab{a}})Romero, Bruintjes, Tomczak, Bekkers, Hoogendoorn, and van Gemert]{romero2021flexconv}
D.~W. Romero, R.-J. Bruintjes, J.~M. Tomczak, E.~J. Bekkers, M.~Hoogendoorn, and J.~C. van Gemert.
\newblock Flexconv: Continuous kernel convolutions with differentiable kernel sizes.
\newblock \emph{arXiv preprint arXiv:2110.08059}, 2021{\natexlab{a}}.

\bibitem[Romero et~al.(2021{\natexlab{b}})Romero, Kuzina, Bekkers, Tomczak, and Hoogendoorn]{romero2021ckconv}
D.~W. Romero, A.~Kuzina, E.~J. Bekkers, J.~M. Tomczak, and M.~Hoogendoorn.
\newblock Ckconv: Continuous kernel convolution for sequential data.
\newblock \emph{arXiv preprint arXiv:2102.02611}, 2021{\natexlab{b}}.

\bibitem[Shi et~al.(2023)Shi, Wang, and Fox]{shi2023multiresconv}
J.~Shi, K.~A. Wang, and E.~Fox.
\newblock Sequence modeling with multiresolution convolutional memory.
\newblock In \emph{International Conference on Machine Learning}, pages 31312--31327. PMLR, 2023.

\bibitem[Tay et~al.(2022)Tay, Dehghani, Bahri, and Metzler]{tay2022efficientTransformers}
Y.~Tay, M.~Dehghani, D.~Bahri, and D.~Metzler.
\newblock Efficient transformers: A survey.
\newblock \emph{ACM Computing Surveys}, 55\penalty0 (6):\penalty0 1--28, 2022.

\bibitem[Van Den~Oord et~al.(2016)Van Den~Oord, Dieleman, Zen, Simonyan, Vinyals, Graves, Kalchbrenner, Senior, Kavukcuoglu, et~al.]{van2016wavenet}
A.~Van Den~Oord, S.~Dieleman, H.~Zen, K.~Simonyan, O.~Vinyals, A.~Graves, N.~Kalchbrenner, A.~Senior, K.~Kavukcuoglu, et~al.
\newblock Wavenet: A generative model for raw audio.
\newblock \emph{arXiv preprint arXiv:1609.03499}, 12, 2016.

\bibitem[van~der Hoeven(1997)]{van1997initalFastCDQ}
J.~van~der Hoeven.
\newblock Lazy multiplication of formal power series.
\newblock In \emph{Proceedings of the 1997 international symposium on Symbolic and algebraic computation}, pages 17--20, 1997.

\bibitem[Van~der Hoeven(2002)]{van2002relaxButNotLazy}
J.~Van~der Hoeven.
\newblock Relax, but don’t be too lazy.
\newblock \emph{Journal of Symbolic Computation}, 34\penalty0 (6):\penalty0 479--542, 2002.

\bibitem[Vaswani et~al.(2017)Vaswani, Shazeer, Parmar, Uszkoreit, Jones, Gomez, Kaiser, and Polosukhin]{vaswani2017attentionAllYouNeed}
A.~Vaswani, N.~Shazeer, N.~Parmar, J.~Uszkoreit, L.~Jones, A.~N. Gomez, {\L}.~Kaiser, and I.~Polosukhin.
\newblock Attention is all you need.
\newblock \emph{Advances in neural information processing systems}, 30, 2017.

\end{thebibliography}
\appendix
\appendix
\section{Contextual discussion}
\label{sec:extended-background-appendix}
In this section, we extend our discussion on the contextual relevancy of LCSMs and our method. We focus on the SISO (single-input-single-output) setting unless otherwise specified and assume that such a primitive is used across channels of the embedding dimensions to form a mixer.

As discussed in Section~\ref{sec:lcsm-def}, LCSMs directly compute the filter $\rho$ to be used to map an input sequence $y \in \R^L$ to an output sequence $z \in \R^L$ via $z_t \defeq \sum_{i=1}^t{y_i \cdot \rho_{t-i}}$. We assume here that $\rho \in \R^L$ although it could be longer since no entry past $L$\textsuperscript{th} position matters. While $\rho$ is often underparameterized, one could, in principle, directly learn its entries (and, indeed, \citep{fu2023simpleLongConv} successfuly do that by using some extra regularization). However, even in underparameterized settings, where $\rho=f(\theta)$ for some $\theta \in \Theta$ with $\dim(\Theta) \ll L$, what filters $\rho$ we end up dealing with is a function of data and $f$: here, $f$ implicitly bakes in a certain bias. While depending on bias (such as decaying pattern of $\rho$ as suggested by~\citep{sgconv}) one may assume $\rho$ to have certain exploitable structures, our method is broadly applicable regardless of such structural properties. That makes Flash Inference be bias and data-independent. For completeness, we mention in Section~\ref{sec:fast-structural-algos-appendix} previous works exploiting such structure - that is, ways to perform fast inference assuming certain properties of $\rho$ hold.

Since, as mentioned in Section~\ref{sec:background}, there is a direct connection between long convolutions and LTI SSMs, we cover this more extensively in Section~\ref{sec:ssm-appendix}.

\subsection{State Space Models}
\label{sec:ssm-appendix}
While SSMs and LCSMs can fundamentally represent the exact same class of models, they differ through their parameterization. Works such as \citep{poli2023hyena, sgconv, romero2021ckconv, romero2021flexconv, shi2023multiresconv, fu2023simpleLongConv} try to characterize the convolution implicitly. On the other hand, LTI SSMs \citep{gu2021S4, gu2022parameterizationS4D, ma2022mega, ma2024megalodon, gu2021combiningLSSL, gupta2022diagonalDSS, mehta2022longGSS} define a recurrent system that is equivalent to a convolution and provide an alternative perspective in doing so. We define discrete SISO SSMs below following the notation of~\citep{gu2021S4}.

State space models (SSM) can be thought of as linear RNNs. In its simplest form, SSM's $\mixer$ for a SISO operator mapping $y \in \R^L$ to $z \in \R^L$ is defined via an extra sequence of hidden states $u : \R^{L\times D'}$ where $D'$ is the SSM's dimension. For a fixed input $y:\R^L$, one defines:
\begin{align*}
    & u_t=A(t)u_{t-1}+B(t)y_t\\
    & z_t = C(t)^\top u_t
\end{align*}
where $A(t) : \R^{D'\times D'}, B(t), C(t):\R^{D'}$. While models like Mamba \citep{gu2023mamba} let $A, B, C$ be functions of $y$ and, thus, vary with $t$, linear-time invariant (LTI) models assume $A$, $B$ and $C$ to be fixed parameters. Thus, the equations rewrite to:
\begin{align*}
    & u_t = Au_{t-1}+B y_t\\
    & z_t = C^\top u_t
\end{align*}
Opening up the recurrence, we get that:
\begin{align*}
    z_t = \sum_{i=0}^t{(C^\top A^{t - i}B) \cdot y_{i}}
\end{align*}
which corresponds to a convolution with the filter $\rho$ implicitly defined via $\rho_i = C^\top A^{i}B$.
If $A$ is further assumed to be diagonal, we call the corresponding SSM diagonal. \citet{gupta2022diagonalDSS} showed that one does not lose practical model quality by assuming $A$ to be diagonal. When one tried to perform inference on an LTI SSMs, one could use:
\begin{itemize}
    \item the recurrent view: keep around only the last state $u_{t-1}$ and update it in-place. This is why the memory of performing inference with an SSM scales with $D'$ rather than $L$. The time complexity of performing one step is $O(D'^2)$ in the general case, but as mentioned, one can resort to diagonal SSMs which drop this complexity to $O(D')$.
    \item convolutional view: precompute the equivalent convolution $\rho$ and use Flash Inference.
\end{itemize}
\subsection{Comparison to Structural Filter Exploitation}
\label{sec:fast-structural-algos-appendix}
Following our discussion in Section~\ref{sec:ssm-appendix}, LTI SSMs become a candidate of structural property that can be assumed on a filter $\rho \in \R^L$. While if one is to let $D'$ grow as much as $L$, it is provably possible to represent any filter $\rho$, it is a structural assumption to assume that a given filter $\rho$ can be represented (or well-approximated) by an LTI SSM of a particular dimension $D'$ (and this is what \citep{massaroli2024laughing} do). However, if the filter has this structure then we can use the recurrent view and pay a time cost of $D'$ per token generation (assuming a diagonal SSM) while having our activation space drop by a factor of $L/D'$. Alternatively, we can store all the activations and use Flash Inference to get an average token step for $O(\log^2(L)$ time-budget. Hence, at least asymptotically, if $D' \in O(\log^2(L))$, one would prefer the recurrent view (and thus using the $D'$-dimensional SSM equivalent to $\rho$). Past this point, our method is faster but still uses memory that scales up with $L$ rather than $D'$ so which one to choose is a function of our memory constraints. 

Another noteworthy line of work follows dilated convolutional models and was popularized through \citep{shi2023multiresconv}, inspired in turn by \citep{van2016wavenet}. \citet{paine2016fastWavenet} show how such models allow for linear time inference, while also needing to store all activations. Thus, granted such structure, Flash Inference is inferior and \citep{paine2016fastWavenet}'s approach should be used.

Finally, it is worth emphasizing that Flash Inference can accommodate data-dependent filters, while online exploitation of structure is often prohibitively expensive (as in the case of distilling $\rho$ into an LTI SSM following \citep{massaroli2024laughing}). Moreover, the class of data-dependent convolution-based models is larger than that of LTI SSMs simply because of the linear-time invariant assumption: one cannot know the whole $A, B, C$ when only half of $\rho$ is unlocked (similarly to how data-dependent SSMs, such as~\citep{gu2023mamba}, can represent more than LCSMs can). And while there are no major data-dependent causal LCSMs yet, works such as \citep{arora2023zoology, karami2024orchid} show the potential of such a research direction.

\subsection{On Flash Inference's prefilling and autoregressive generation stages}

We have touched briefly in Section~\ref{the-inference-problem} on how our method deals with prompts. In particular, if one is given a prompt of size $P$, FFT will be employed to fill in all the contributions of $y_{[1,P]}$ to $z_{[1,L]}$. The way this translates to the complete architecture is by computing all of $a^{[0, M]}_{[1, P]}$ as well as the contributions of these to the future $L-P$ positions. One can do so at a fraction of the cost of the forward pass, by just caching the appropriate activations and then zeroing out positions in the range $[P+1, L]$ before proceeding to the next layer (to not allow second order contributions to be performed). This comes at a cost that scales with $O(L\log L)$.

Afterwards, we can drop the first $P$ positions and apply our method starting with the current values of $a$ to perform autoregressive generation. This is because the eager nature of our prefill is more aggressive than in the case of transformers since we already account for all the relevant contributions of the prompt to the future. This can be very useful when we have a long prompt, relatively few autoregressive steps to perform and the autoregressive generation takes place on a different machine, since we save up valuable and communication costs. 

We also want to emphasize, that Flash Inference focuses on optimizing the autoregressive generation part (also known as decoding). However, if the prompt is large enough compared to the number of tokens generated, this may have smaller relative improvement since the overall time is dominated by the prefill.

\section{Extension to Data-Dependent Filters}
\label{sec:data-dependent-filters-appendix}
The reason why data-dependent filters are harder to deal with is that $\cont(y, i, j) = y_i \cdot \rho_{j - i}$ depends on both $y_i$ and on what $\rho_{j - i}$ depends on. By only assuming causality, we can only access $\rho_{j-i}$ after $z_{j-i-1}$ has been computed. This stops Algorithm~\ref{alg:flash-inference-lcsm} from working out-of-the-box since when $i$ is a power of $2$, in order to account for the contribution of $a^{\ell - 1}_{[1, i]}$ to $a^\ell_{ [i+1, 2i]}$ one will need access to $\rho^\ell_{[1, 2i-1]}$ which is not guaranteed to be accessible (only $\rho^\ell_{[1, i]}$ can be assumed to be known at this stage).

The modified algorithm is shown in Algorithm~\ref{alg:flash-inference-lcsm-data-dependent-filter} and precisely follows the tiling of~\citet{van1997initalFastCDQ} and, thus, its correctness transfers. Note that rather than using the $\gA$ algorithm, it directly uses the untruncated convolution (implementable via FFT) - in the contribution space this corresponds to a parallelogram tile rather than rectangle - and this cannot be simulated via a rectangle, although the other way around is possible (and is what we did through Lemma~\ref{lemma:lcsm-block-fft}). This implementation performs convolutions between two sequences of length $U$ each and uses the whole output and thus requires an order $2U$ FFT. Note that this happens twice for a given $i$ when $i + 1$ is not a power of $2$ (almost always). Algorithm~\ref{alg:flash-inference-lcsm}, on the other hand, only performs a convolution between a length-$U$ sequence and a length-$2U$ sequence. However, as noted in Appendix~\ref{sec:implementation-improvements-appendix}, we can get away without padding and thus performing only one order $2U$ FFT. Thus, our proposed tiling improves FLOPs by a factor of $2$, but it requires the kernel to be data-independent.
\begin{algorithm}[ht]
\caption{Flash Inference for LCSMs with data-dependent filters}
\label{alg:flash-inference-lcsm-data-dependent-filter}
\begin{algorithmic}[1]
    \REQUIRE first token $a^1_0$, $a^:_0$, $\rho^{:}_0$,  $\block^{[1, M]}$ and $\sampler$
    \STATE \textbf{Output:} All activations $a^0, \ldots, a^M \in \mathbb{R}^{L \times D}$ obtained by autoregressively sampling
    \STATE Initialize $a$ with zeros outside $a^:_0$
    \FOR{$i \leftarrow 1$ \textbf{to} $L-1$}
        \STATE $U \leftarrow$ maximum power of $2$ that divides $(i + 1)$
        \FOR{$\ell \leftarrow 1$ \textbf{to} $M$}
            \STATE Compute $\rho^\ell_{i}$ as a causal function of data $a^{\ell-1}_{[1, i]}$ as per model specification
            \STATE \textcolor{red}{\text{\# account for the newly available contributions}}
            \STATE \textcolor{red}{$a^\ell_{i} = \block^\ell(a^\ell_{i} + a^{\ell-1}_{i} * \rho^\ell_{0} + a^{\ell-1}_{0} \odot \rho^\ell_{i})$} 
            \STATE \textcolor{gray}{\text{\# account for some eager contributions}}
            \IF{$i + 1 = U$}
                \STATE \text{\# downgrade the power of $2$ by half if $i+1$ is already a power of $2$}
                \STATE $U \leftarrow U / 2$
                \STATE \textcolor{gray}{$a^\ell_{[2U, 4U - 2]} \pluseq CONV(a^{\ell-1}_{[U, 2U - 1]}, \rho^\ell_{[U, 2U - 1]})$}
            \ELSE
                \STATE \textcolor{gray}{$a^\ell_{[i + 1, i + 2U - 1]} \pluseq CONV(a^{\ell-1}_{[U, 2U - 1]}, \rho^\ell_{[i - U + 1, i]})$}
                \STATE \textcolor{gray}{$a^\ell_{[i + 1, i + 2U - 1]} \pluseq CONV(\rho^\ell_{[U, 2U - 1]}, a^{\ell-1}_{[i - U + 1, i]})$}
            \ENDIF
        \ENDFOR
        \STATE \text{\# generate next token based on the output of last layer at position $i$}
        \STATE $a^{0}_{i + 1} = \sampler(a^{M}_{i})$
    \ENDFOR
\end{algorithmic}
\end{algorithm}

\section{Implementation Improvements}
\label{sec:implementation-improvements-appendix}
In Algorithm~\ref{alg:flash-inference-lcsm}, the calls to $\tau$ all have a shape of $\tau(y, [i - U + 1, i], \rho_, [i + 1, i + U])$ where $U$ is a power of $2$ that divides $i$. Following the proof of Lemma~\ref{lemma:lcsm-block-fft}, we note that to implement $\tau$, we need to perform the convolution of a segment of length $U$ of $y$ and a prefix of length $2U$ of $\rho$. However, following the convolution, of the $3U-1$ outputs, indexed $[0, 3U-2]$, we are only interested in the middle $U$ of them, namely indices $[U, 2U-1]$. The canonical way of performing convolutions via FFT involves padding by enough $0s$ and using an order large enough to store the whole output which rounded up to the closest power of $2$ would mean an order $4U$ FFT. However, using a $2U$ FFT call, which will perform a cyclical convolution, is enough for our purposes, since the values of interest are not affected by the cyclicity - that is, when folding outputs at $[2U, 3U-2]$ onto $[0, U-2]$, we do not intersect $[U, 2U-1]$. Furthermore, there are only $\log L$ different lengths of prefixes of $\rho$ involved in these convolution so one could, in fact, cache the DFTs for these as a precomputation step, thus dropping the number of DFT's per convolution from $3$ to $2$, speeding up by a further $\times 1.5$ factor. 
\subsection{More Implementation Details}
\label{sec:implementation-details-appendix}
In Section~\ref{sec:impls}, it has been mentioned that we considered $7$ implementations of $\tau$. Besides the four mentioned (Torch's and FlashFFT's Conv1D and FFT) we also did 3 more naive versions:

\begin{enumerate}
    \item Torch FFT (no precomputed FFT for the convolution kernel): This implementation is identical to Torch-FFT except that we do not pre-compute the DFT of the convolution kernel beforehand.
    \item Vectorized Conv1D: It implements a 1D convolution operation in a vectorized manner using tensors and leverages efficient operations to process multiple slices in parallel. It takes advantage of the fact that convolutions are depth-wise separable.
    \item Element-wise Conv1D: This implementation does an element-wise dot product of the convolution kernel with the input, while iteratively sliding the kernel over the input.
\end{enumerate}

\subsection{CPU Overhead: The silent performance killer}
\label{sec:cuda-graphs-appendix}
One primary improvement we introduce is the use of CUDA Graphs to mitigate the CPU overhead observed in the non-mixer components during inference.

In both synthetic settings and the Hyena model, these non-mixer components primarily deal with only current token being generated. As a result, the GPU compute time is often shorter than the kernel dispatch time, leading to CPU overhead. Typically, this overhead remains hidden when the mixer component requires significant GPU compute time. However, Flash Inference dramatically reduces mixer times (up to $120\times$), exposing the CPU overhead and causing the observed decrease in end-to-end speed-ups. A similar effect is observed for the Lazy approach compared to Eager, as Lazy’s faster execution also exposes the overhead.

To address this, we employ CUDA Graphs, which record all kernel dispatches for generating a single token into a graph. For subsequent token generation, the same graph is replayed, allowing all kernels for the non-mixer components to be dispatched simultaneously. This approach significantly reduces CPU overhead and ensures consistent overhead times for both Flash Inference and its baselines.

With these improvements, Flash Inference demonstrates much better end-to-end relative gains, as the mixer now constitutes a larger portion of the total runtime. We observe speed-ups of up to $120\times$ in the mixer and up to $8\times$ in the end-to-end Hyena runs (as opposed to $1.6\times$ when not using CUDA Graphs). All the results are shown in Appendix~\ref{sec:experiments-appendix}.

\section{Storing Only Half of the Activations}
\label{sec:store-half-appendix}
If the whole activation tensor ($M \times L \times D$) cannot fit the memory of one GPU, we can save a factor of $2$ along the $L$-axis. To do this, observe that after $(L/2)$\textsuperscript{th} iteration completes we never need to look back to any activation at position $i \leq L/2$. Hence, one can hope to reuse the space used for the first half to store the second half of activations. While doing this directly by having $\gA$ work as an in-place operator requires storing $MLD$ intermediate values, one can choose to not apply $\gA$ parallelly across all layers (or even dimensions), as discussed in Section~\ref{sec:memory}. Processing the tiles sequentially allows for the reuse of the space for interim results; once the result of $\gA$ is read, we place it back to where the input used to be - similarly in nature to gradient accumulation - always placing the output back into the input slot. This way, the peak extra memory required for the call to $\gA$ can be $LD$ or even $L$.

As we get the output of $\gA(a^0, 1, L/2, L/2 + 1, L)$, which has a shape of $(L/2)\times D$, we overwrite $a^0$ by it and then move on to the next layer. We can do this because the contribution of $a^0_{i}$ to $a^1_{i}$ has already been properly accounted for so the first half of $a^0$ truly becomes irrelevant for next layers and iterations to come. Hence, $a^0$ contains the intended second half of $a^1$. We then proceed to do the same thing for $a^1$ and so on. At the end, $a^{[0\ldots M-1]}$ will contain the second halves of $a^{[1\ldots M]}$, so we can shift these by $1$, emptying $a^0$ and now finally having $a$ represent the second halves of activations - the first halves have been discarded.

Although per layer, one will have a peak memory containing both $a^\ell$, the output, and whatever other necessary temporary values needed to perform $\gA$, this does not scale with $M$ - because we essentially reuse the extra space needed by $\gA$ across different layers. We seemingly pay the cost of not performing the calls to $\gA$ in parallel across layers, but if storing the $M \times L \times D$ tensor of activations was an issue, then one would very likely have to make this sort of compromise to be able to run $\gA$ in the first place.

\section{Discussion on Memory}
\label{sec:memory-discussion-appendix}
Besides peak memory usage, it is important to understand how memory boundedness affects end-to-end performance. In general, GPUs have 2 types of memory bound:
\begin{itemize}
    \item Memory Latency-Bound: most time is spent on memory operations, but parallelism is insufficient to saturate GPU memory bandwidth. In this setting, throughput can be optimized by increasing parallelism to saturate memory bandwidth.
    \item Memory Bandwidth-Bound: GPU memory bandwidth is already saturated with available parallelism. In this case, throughput optimization is achieved by increasing arithmetic intensity (compute-to-IO ratio), usually by changing the algorithm or fusing operations.
\end{itemize}

Putting this into perspective for our use cases, and looking at how across-layer parallelization affects performance, we have that:
\begin{itemize}
    \item Lazy computes token contributions using pointwise multiplication kernels, accessing almost all sequence activations for large $i$ values, making it memory bandwidth-bound beyond a certain (early) iteration index. Hence across-layer parallelization helps most in the beginning. Lazy acts exactly as a dual (it is memory bandwidth-bound until close to the end).
    \item Our approach computes token contributions in a tiled fashion, with most tiles being small ($87.5\%$ have size $\leq 4$), making the workload memory-latency bound. Parallelizing across layers can then further saturate memory bandwidth. For large tile sizes we end up using FFT implementations to decrease arithmetic intensity since we get to a compute-bound regime.
\end{itemize}

In Section~\ref{sec:memory}, we discussed the peak memory usage being given by the gray calls to $\tau$. Since each call to $\tau$ performs $D$ FFTs of length $L$ each (when dealing with the biggest tile of side $L/2$), the naive implementation would use up to $MD\cdot L$ extra memory to perform these FFT calls.  
However, this assumes the biggest gray tile is being processed in parallel across all layers and across all dimensions at once. We do not need to maximize parallelization on all sizes of tiles: for the largest ones, we can process the tiles at different layers sequentially by reusing the space, thus dropping to an overhead of $O(LD)$, or even just $O(L)$ if one is to treat sequentially the different dimensions. Dropping parallelization could theoretically yield a slowdown, but that is only when we are not memory bandwidth-bound which is the case if one has to worry about allocating more than $O(LD)$ - in that case, the FFT calls are anyway not happening in parallel. Hence, one can easily drop the peak memory overhead to $O(LD)$ or even $O(L)$ without incurring an actual time cost.

\section{Framework Adaptations}
We covered the clean case of LCSMs and of other architectures that fit Theorem~\ref{thm:main-algo-analysis}. However, all these assume complete causality of the architecture as well as all the layers fitting our framework. This need not be the case and Flash Inference can easily be adapted to deal with both.

As far as causality is concerned, one needs to assume it for autoregressive inference to be well-defined. Remember, we defined causality by having $a^\ell(x)_i$ be a function of only $x_{[1, i]}$ or, equivalently, $\mixer^\ell(y)_i$ is a function of $y_{[1..i]}$. If this was not the case, then we could not compute $x_{i+1}$ by only knowing $x_i$. However, much like the encoder-decoder architecture, we can allow for non-causal prompts. That is, if at any point an external user gives us a sequence of tokens (say an image), we could fill in the activations at those position in the standard forward-pass way, and then start our procedure from there as if we were given a fixed a prompt. Granted a prompt and the activations at its respective positions, we can proceed with our inference method as described in Section~\ref{the-inference-problem}, following \citep[][Lemma 2.1]{massaroli2024laughing}, and essentially accounting in advance for the contribution of all this prompt to all subsequent positions.

Another useful adaption for most alternatives to attention is that of using hybrids: architectures where different layers use different mixers. Our method can be applied seamlessly to the subset of layers that fit the framework. All the algorithm stays unchanged but computing $a^\ell(x)_i$ for the layers $\ell$ which do not fit our framework needs to be done in the way specific to $\mixer^\ell$. Importantly, even the across-layer parallelization can be maintained since the gray tiles only affect subsequent positions at the layers that fit the framework, hence not interfering with the rest. At any point in our algorithm, we have computed every piece of information that the lazy approach has, as well as some extra partial future contribution. Therefore, everything that worked assuming the lazy approach will work with our method as well - this includes other layers attending to more than the previous layer.

\section{Proofs}
\begin{lemma*}
Let $1 \leq l \leq r \leq l' \leq r' \leq L$ represent ranges of lengths $L_1=r-l+1$ and $L_2=r'-l'+1$ of $y$ and $z$, respectively.
There exists an FFT-based algorithm running in $O(L_1+L_2)$ space and $O((L_1+L_2)\log(L_1+L_2))$ time complexity that, given access to $y_{[l, r]}$, computes all the elements of $\tau(y, [l, r], \rho, [l', r'])_{[l', r']}$ - that is, all the aggregated contributions of $y_{[l, r]}$ to all of $z_{[l', r']}$.
\end{lemma*}
\begin{proof}
Consider performing a convolution between $y_{[l, r]}$ and $\rho_{[l'-r, r'-l]}$. It holds that for each $j \in [l', r']$ and $i \in [l, r]$, $j - i$ is in the range of $\rho$. On the other hand, truncating the output of the convolution appropriately, one can keep only the corresponding outputs for each $j \in [l', r']$. The overall size of the FFT is then given by the sum of the lengths, which is at most $r - l + 1 + (r' - l) - (l' - r) + 1 = 2(r - l + 1) + (r' - l' + 1) - 1 = O((r - l + 1) + (r' - l' + 1))$. Since an FFT of order $L$ runs in $O(L\log L)$ and takes up $O(L)$ memory, the conclusion follows.
\end{proof}

\section{Extended Experimental Results and Analysis}
\label{sec:experiments-appendix}
\subsection{Bar plots}
For completeness, we provide the breakdown for both Hyena(Figure~\ref{fig:hyena-bar-plots}) and synthetic(Figure~\ref{fig:inf-bar-plots}) settings. As mentioned in the main body, we see consistency across the non-convolution part and very significant improvement on the convolution part.
\begin{figure}[ht]
    \centering
    \begin{subfigure}[b]{0.24\textwidth}
        \includegraphics[width=\textwidth]{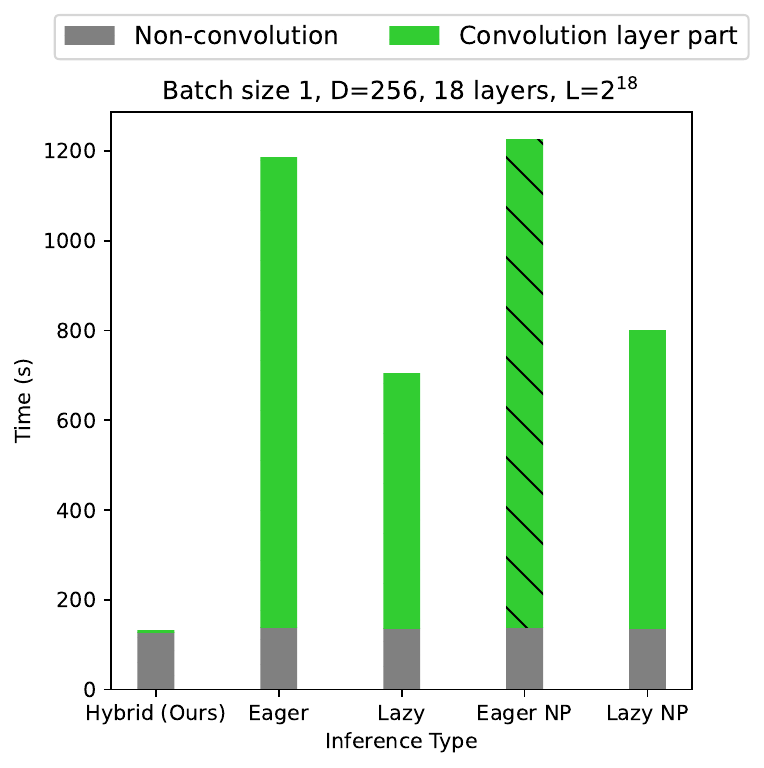}
    \end{subfigure}
    
    \begin{subfigure}[b]{0.24\textwidth}
        \includegraphics[width=\textwidth]{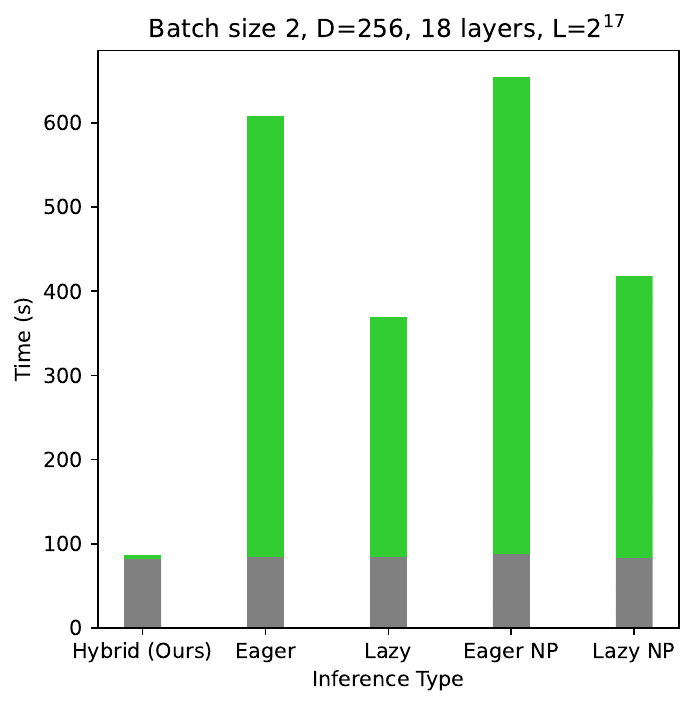}
    \end{subfigure}
    \begin{subfigure}[b]{0.24\textwidth}
        \includegraphics[width=\textwidth]{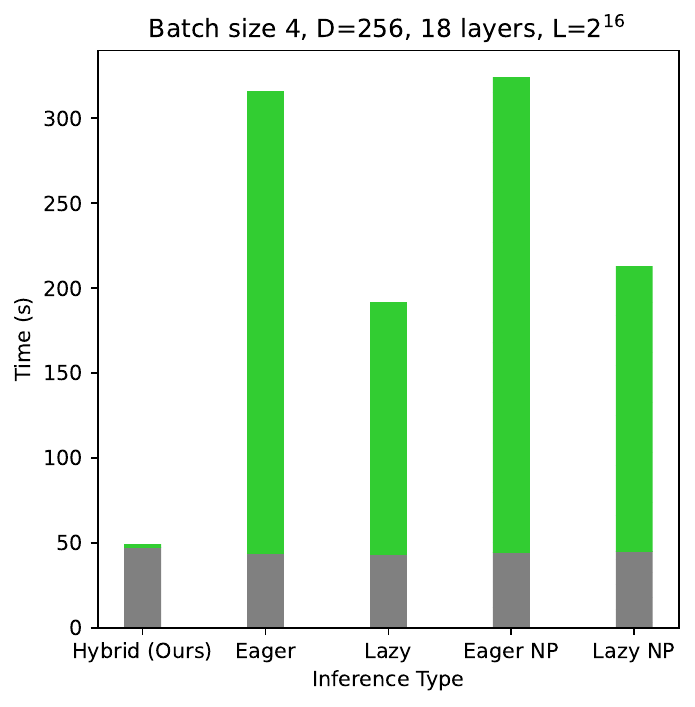}
    \end{subfigure}
    \begin{subfigure}[b]{0.24\textwidth}
        \includegraphics[width=\textwidth]{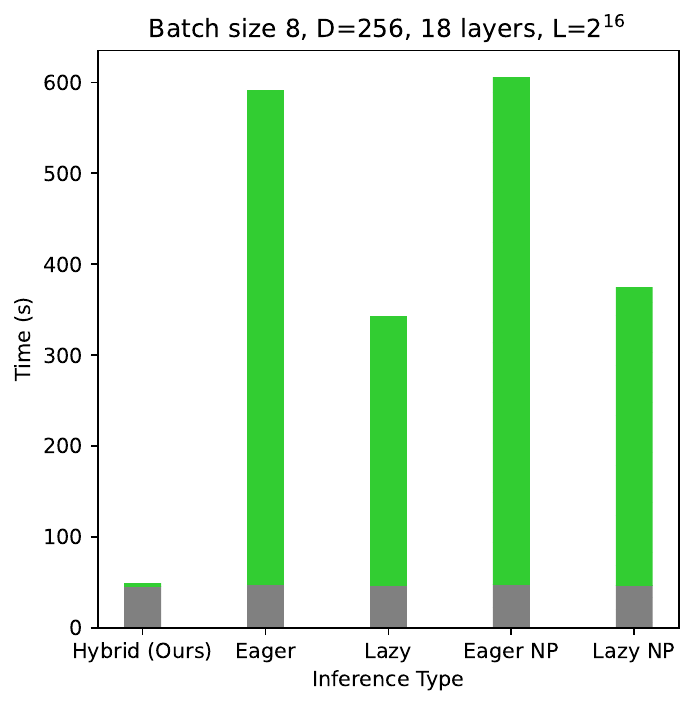}
    \end{subfigure}
    \begin{subfigure}[b]{0.24\textwidth}
        \includegraphics[width=\textwidth]{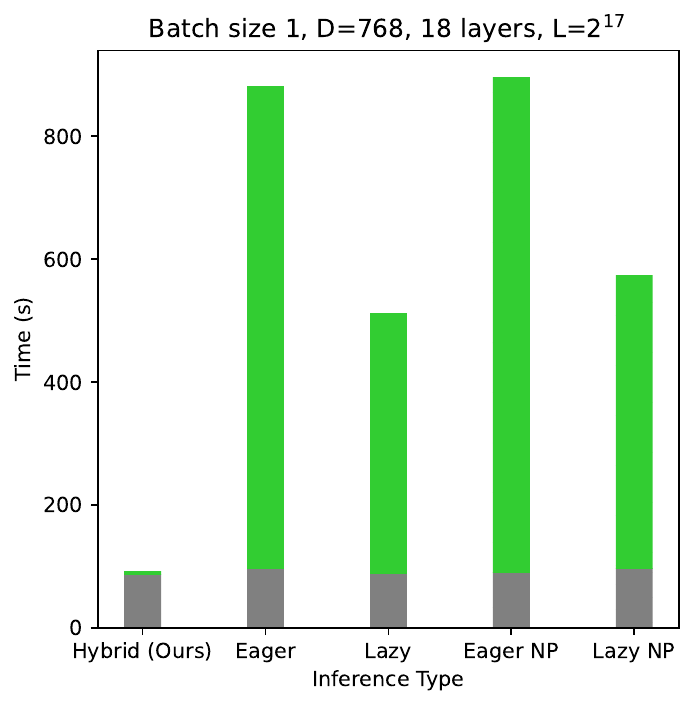}
    \end{subfigure}
    
    \begin{subfigure}[b]{0.24\textwidth}
        \includegraphics[width=\textwidth]{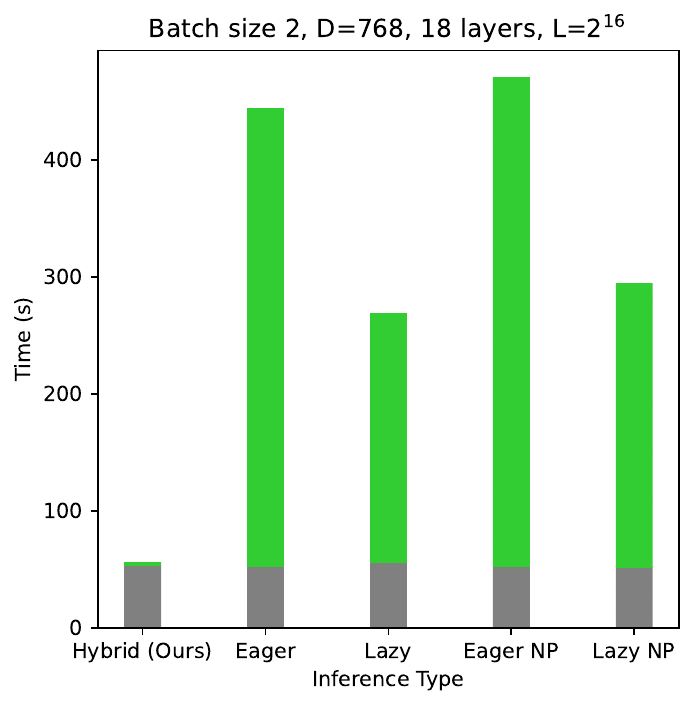}
    \end{subfigure}
    \begin{subfigure}[b]{0.24\textwidth}
        \includegraphics[width=\textwidth]{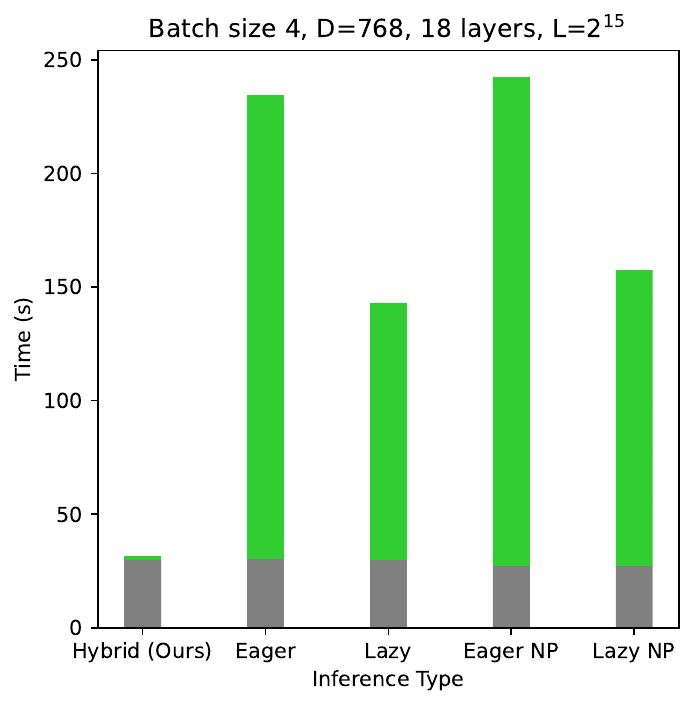}
    \end{subfigure}
    \begin{subfigure}[b]{0.24\textwidth}
        \includegraphics[width=\textwidth]{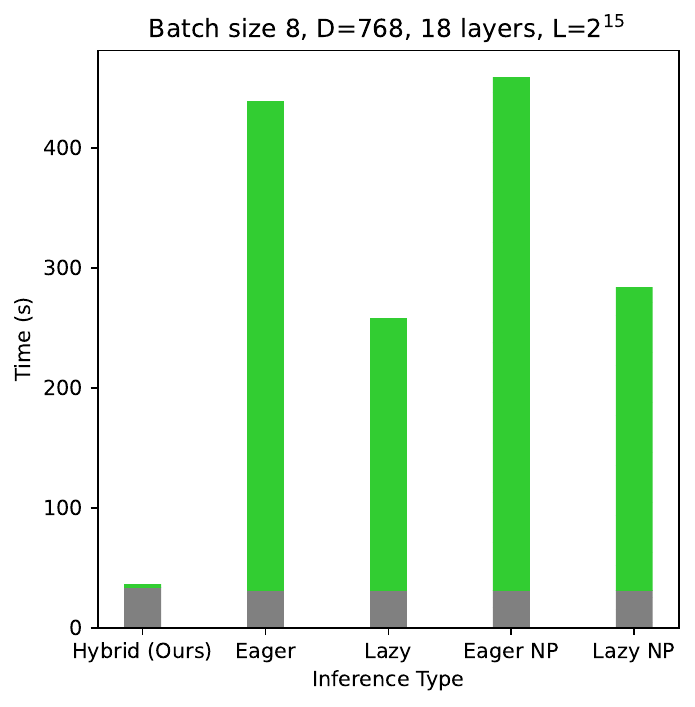}
    \end{subfigure}
    \begin{subfigure}[b]{0.24\textwidth}
        \includegraphics[width=\textwidth]{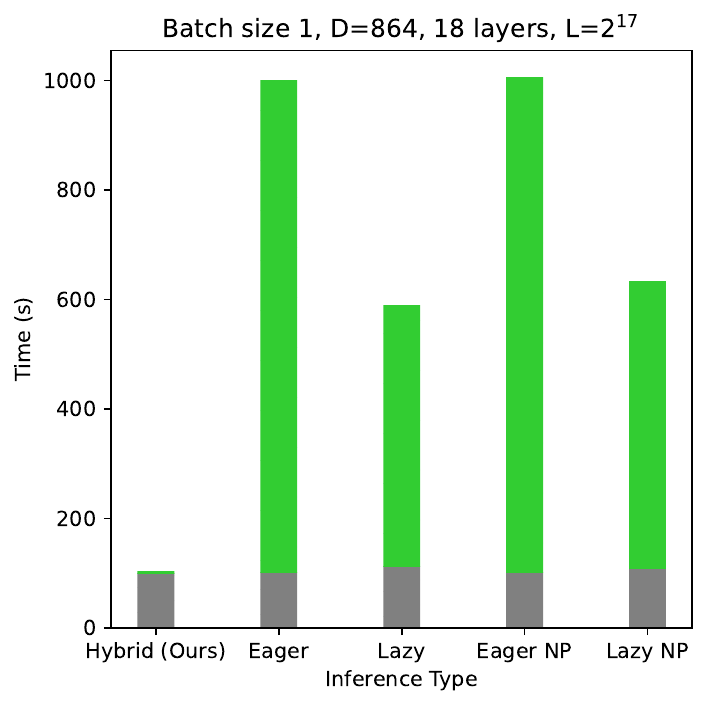}
    \end{subfigure}
    
    \begin{subfigure}[b]{0.24\textwidth}
        \includegraphics[width=\textwidth]{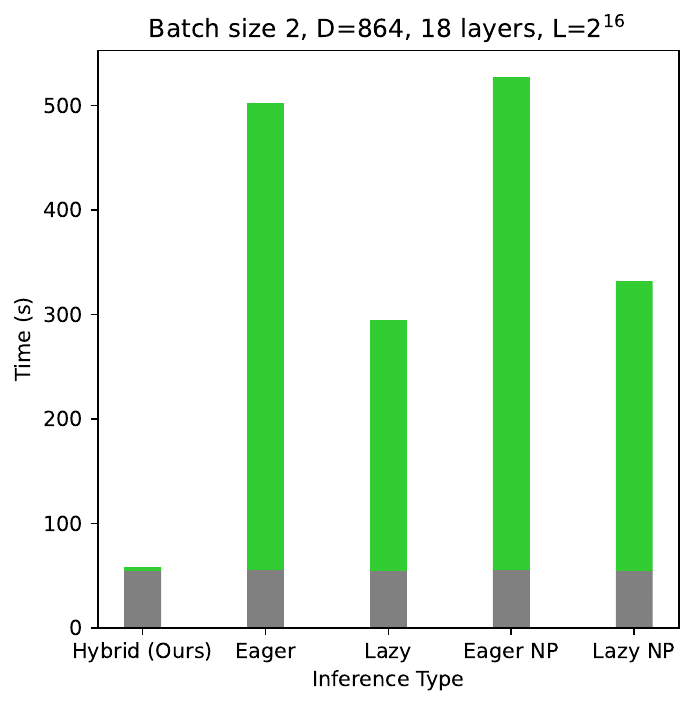}
    \end{subfigure}
    \begin{subfigure}[b]{0.24\textwidth}
        \includegraphics[width=\textwidth]{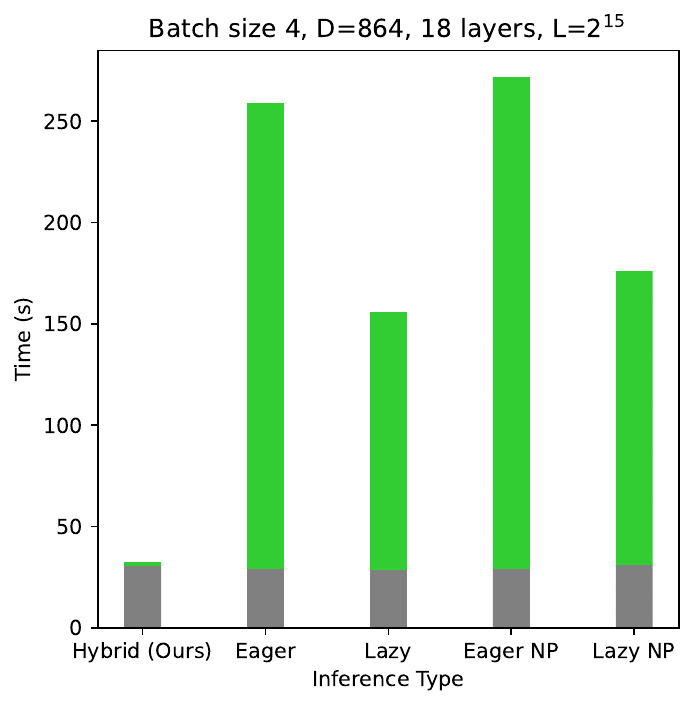}
    \end{subfigure}
    \begin{subfigure}[b]{0.24\textwidth}
        \includegraphics[width=\textwidth]{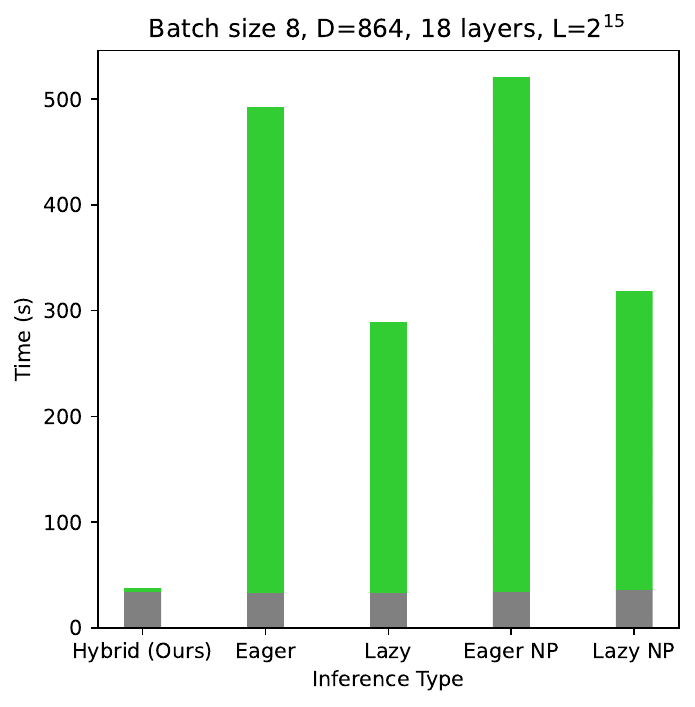}
    \end{subfigure}
    \begin{subfigure}[b]{0.24\textwidth}
        \includegraphics[width=\textwidth]{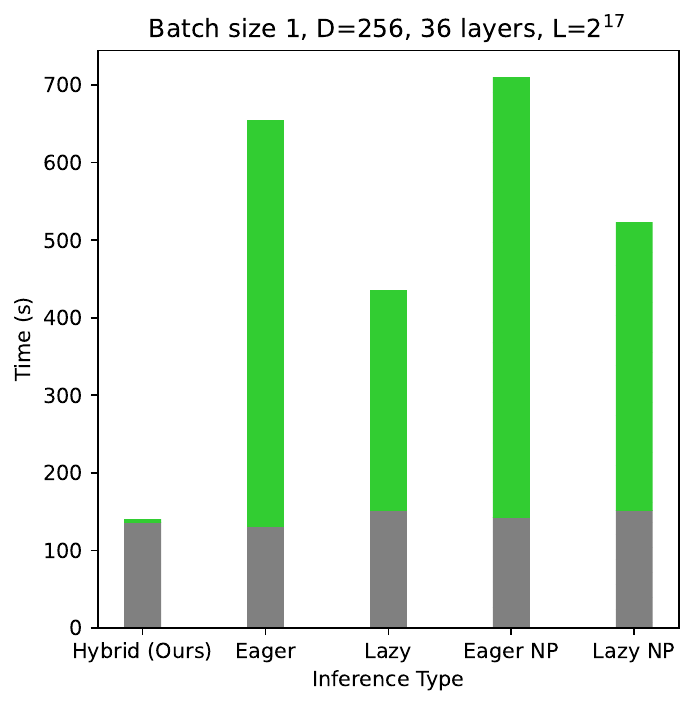}
    \end{subfigure}
    
    \begin{subfigure}[b]{0.24\textwidth}
        \includegraphics[width=\textwidth]{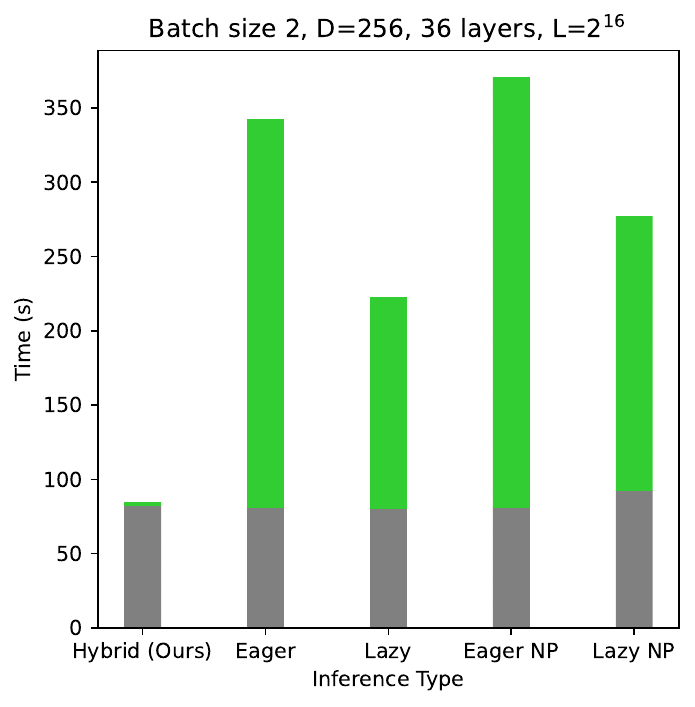}
    \end{subfigure}
    \begin{subfigure}[b]{0.24\textwidth}
        \includegraphics[width=\textwidth]{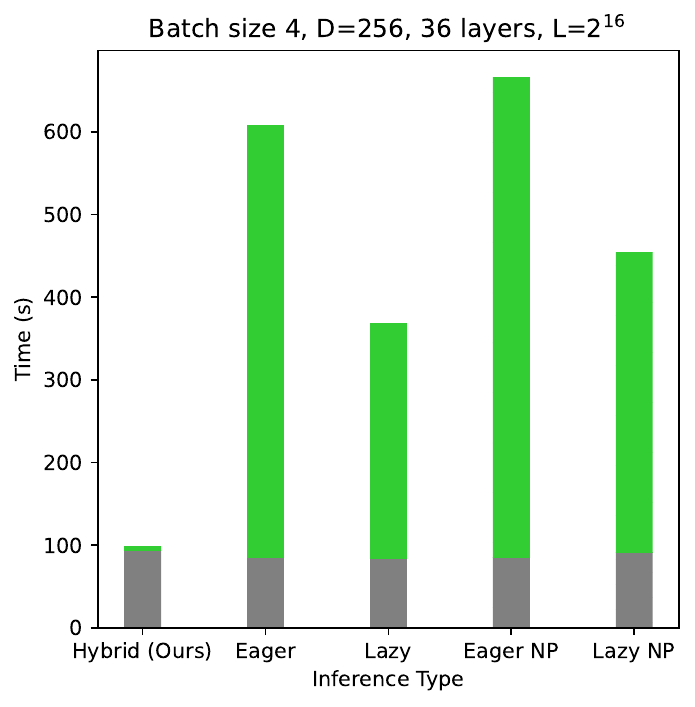}
    \end{subfigure}
    \begin{subfigure}[b]{0.24\textwidth}
        \includegraphics[width=\textwidth]{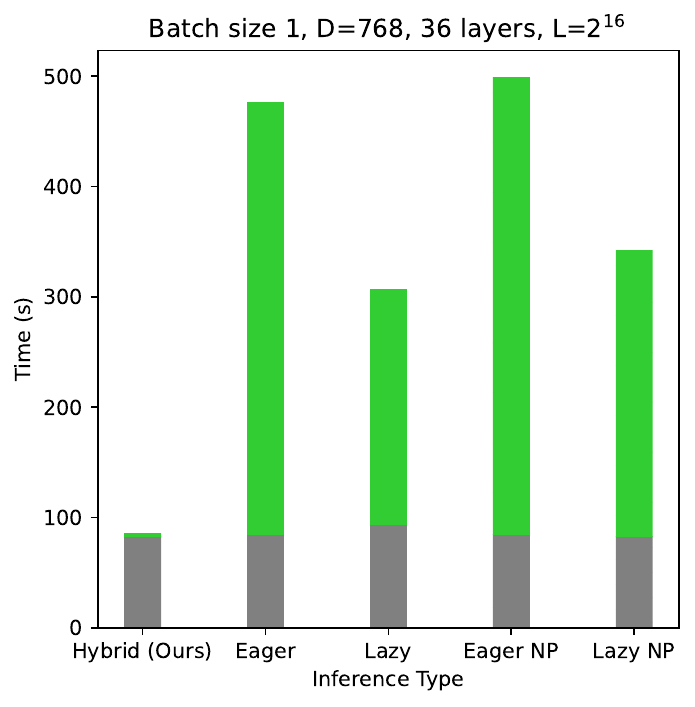}
    \end{subfigure}
    \begin{subfigure}[b]{0.24\textwidth}
        \includegraphics[width=\textwidth]{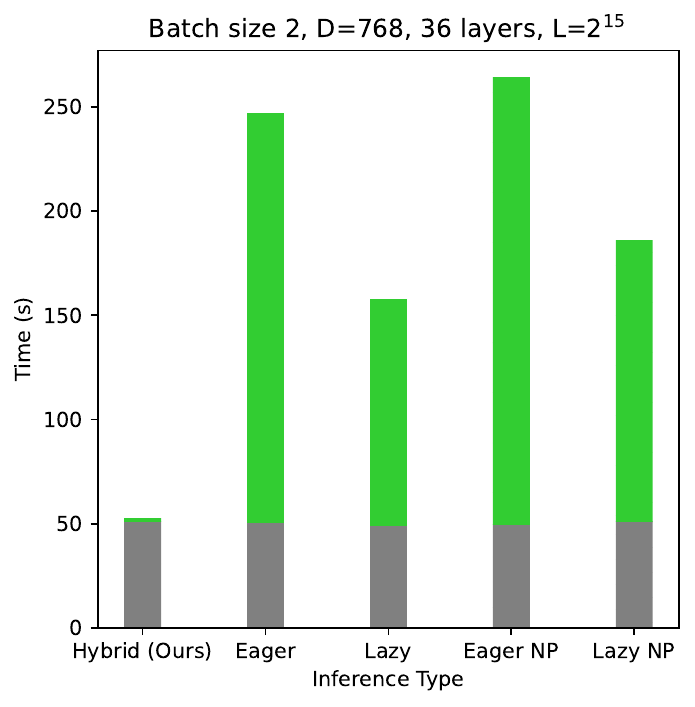}
    \end{subfigure}
    
    \begin{subfigure}[b]{0.24\textwidth}
        \includegraphics[width=\textwidth]{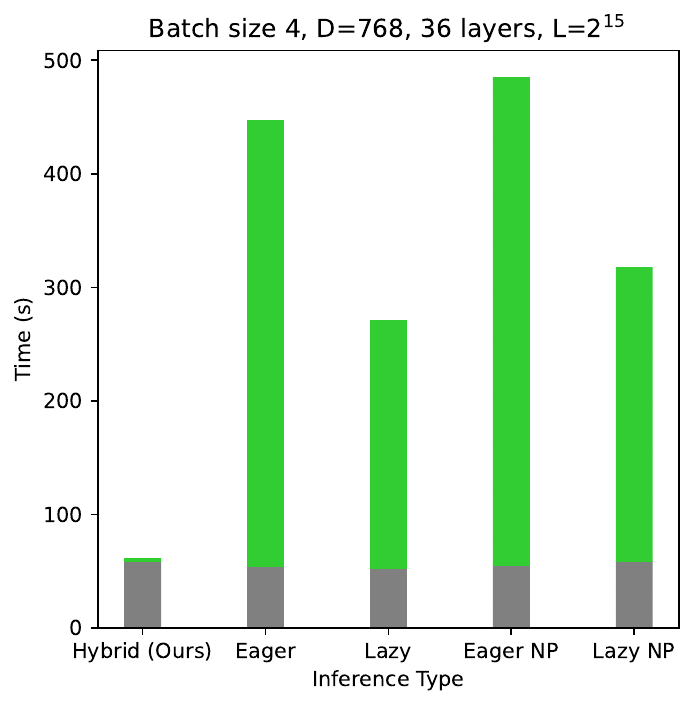}
    \end{subfigure}
    \begin{subfigure}[b]{0.24\textwidth}
        \includegraphics[width=\textwidth]{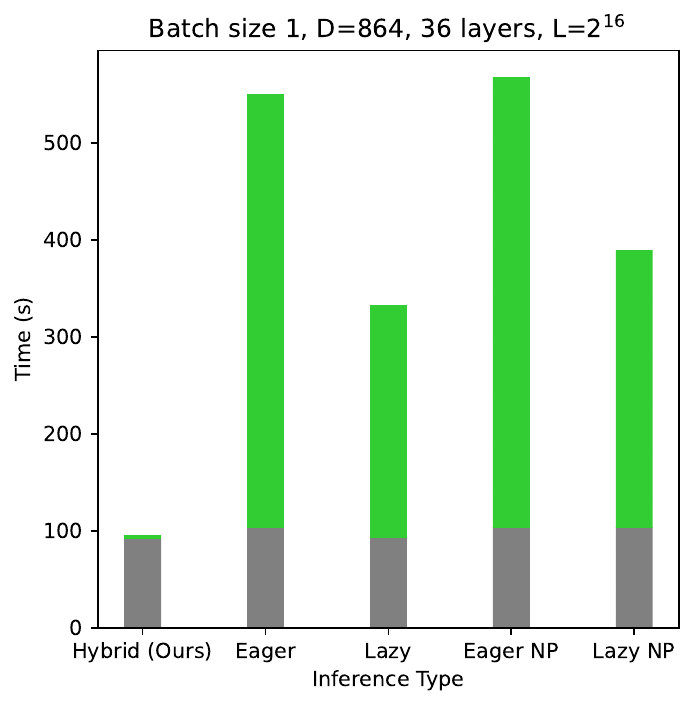}
    \end{subfigure}
    \begin{subfigure}[b]{0.24\textwidth}
        \includegraphics[width=\textwidth]{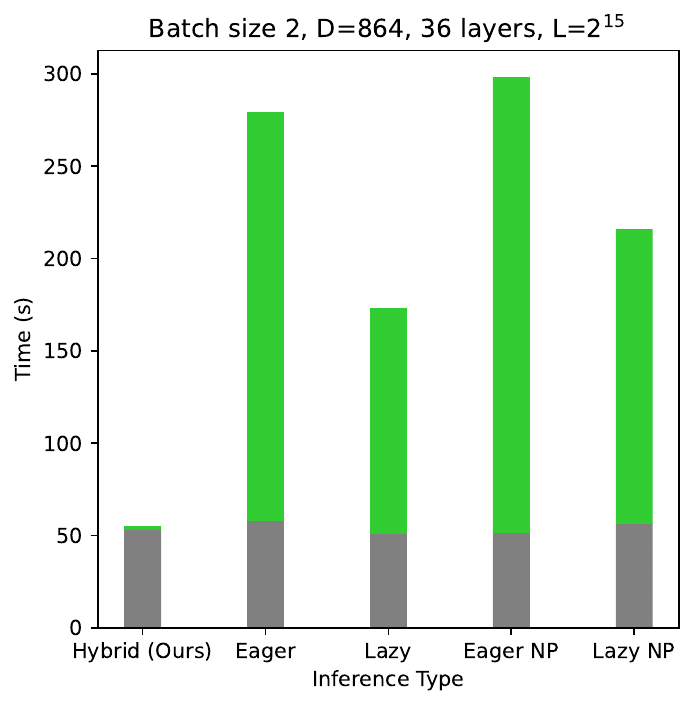}
    \end{subfigure}
    \begin{subfigure}[b]{0.24\textwidth}
        \includegraphics[width=\textwidth]{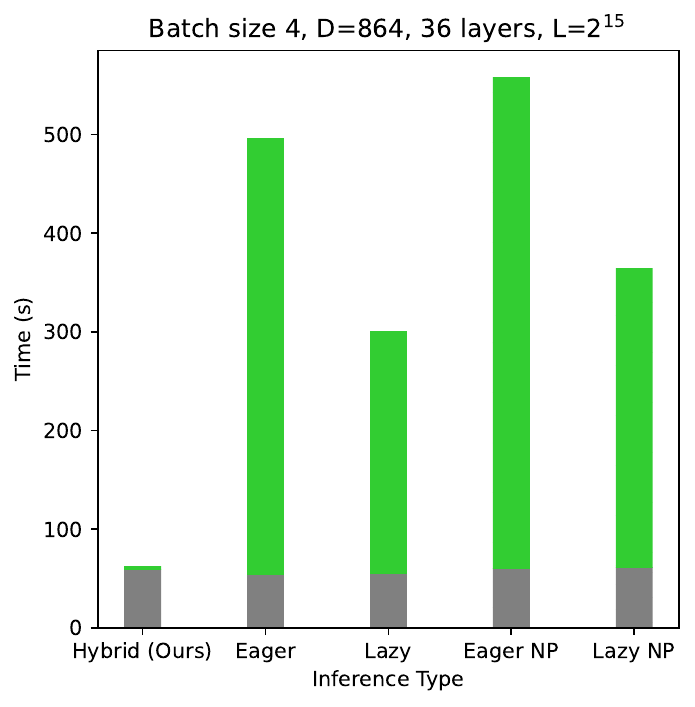}
    \end{subfigure}

    \caption{Time breakdown for end-to-end Hyena experiments.}
    \label{fig:hyena-bar-plots}
\end{figure}

\begin{figure}[ht]
    \centering
    \begin{subfigure}[b]{0.32\textwidth}
        \includegraphics[width=\textwidth]{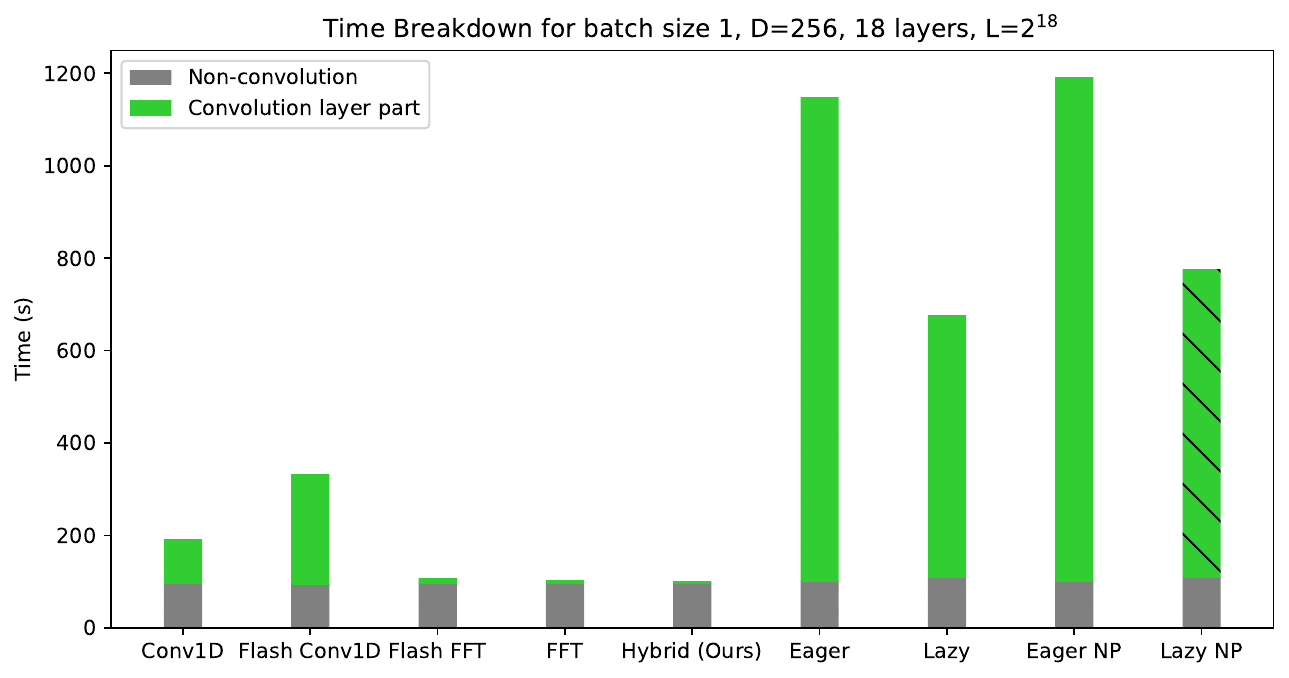}
    \end{subfigure}
    \begin{subfigure}[b]{0.32\textwidth}
        \includegraphics[width=\textwidth]{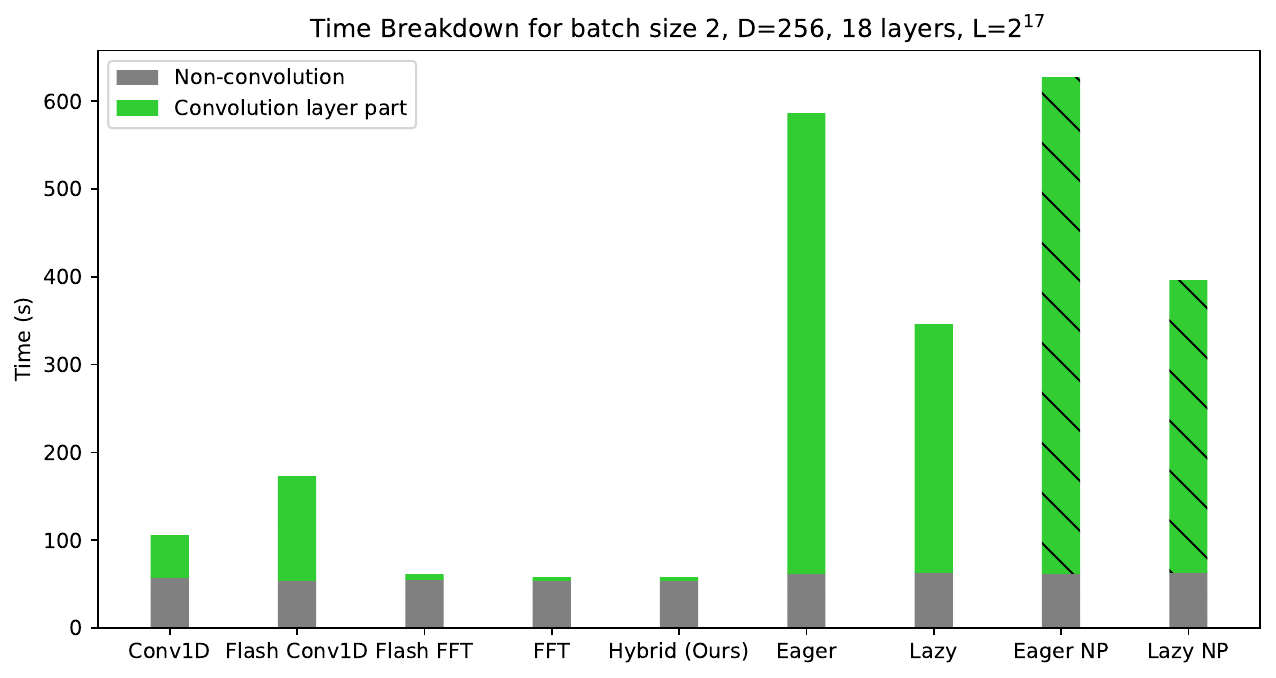}
    \end{subfigure}
    \begin{subfigure}[b]{0.32\textwidth}
        \includegraphics[width=\textwidth]{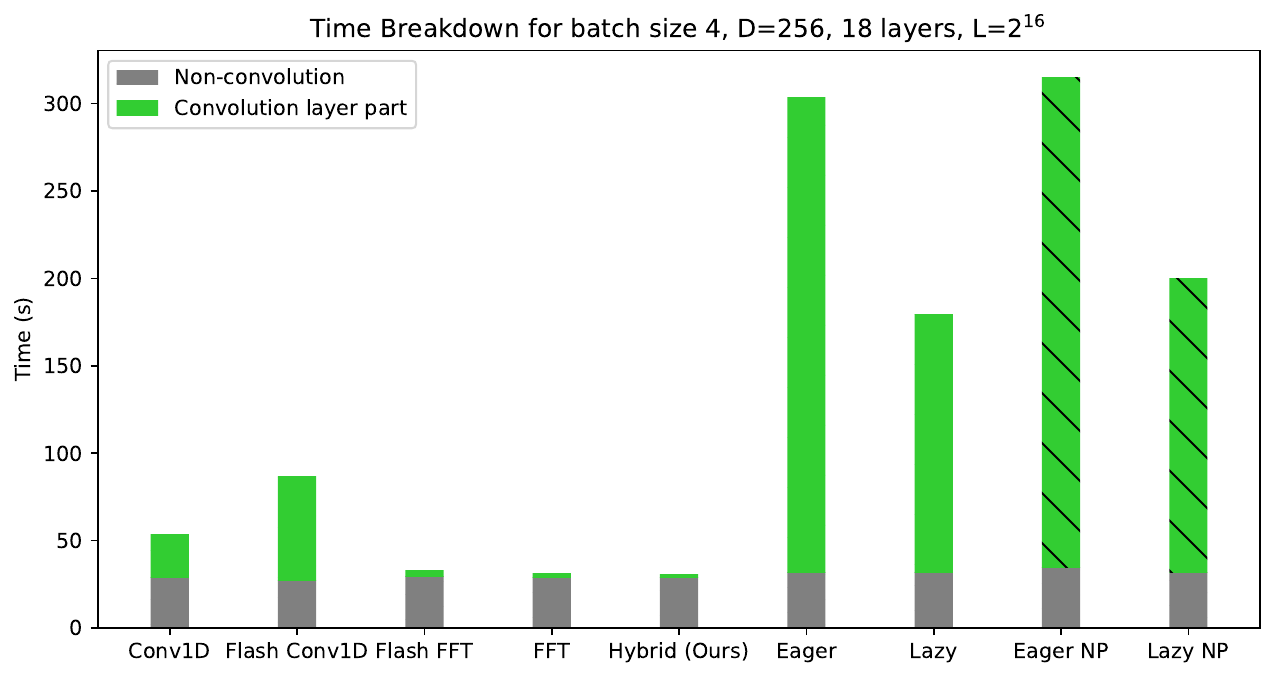}
    \end{subfigure}

    \begin{subfigure}[b]{0.32\textwidth}
        \includegraphics[width=\textwidth]{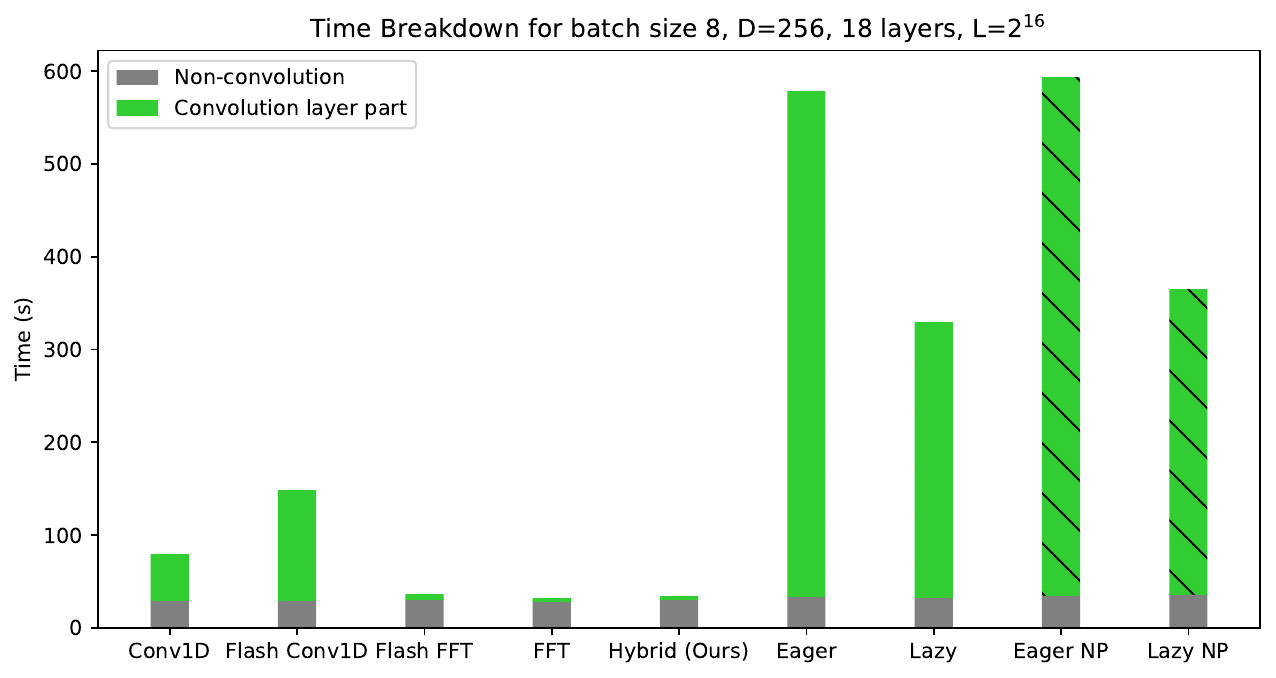}
    \end{subfigure}
    \begin{subfigure}[b]{0.32\textwidth}
        \includegraphics[width=\textwidth]{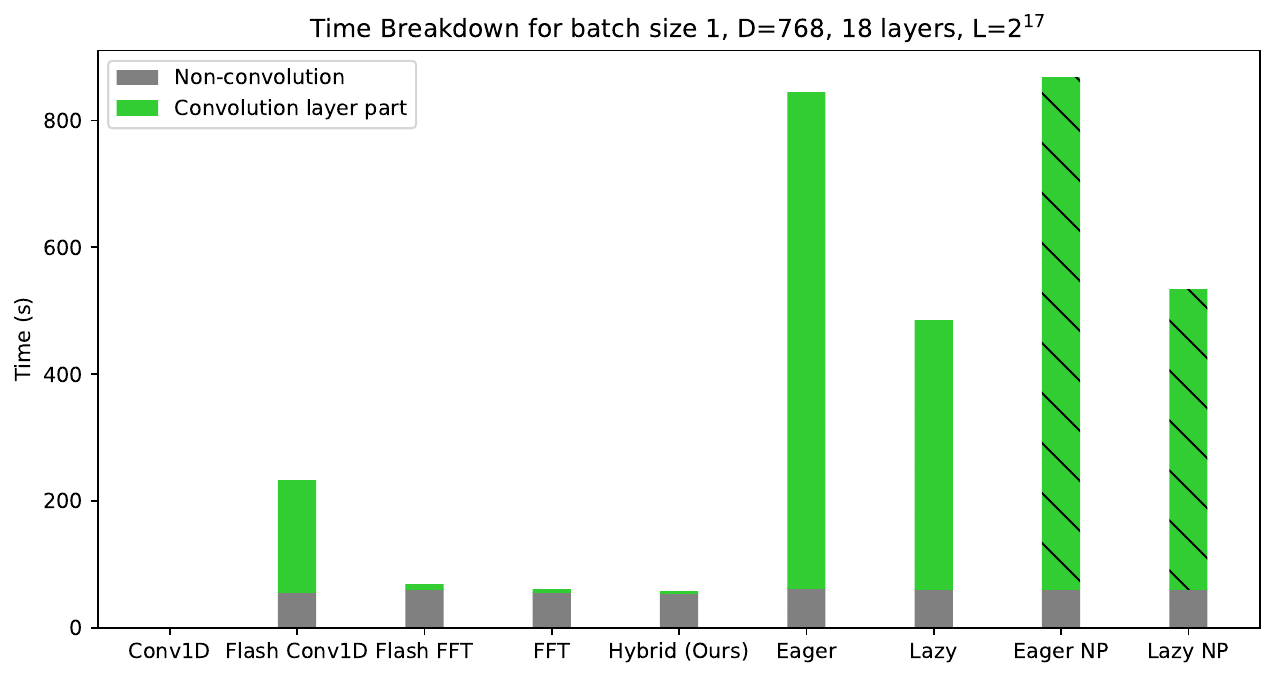}
    \end{subfigure}
    \begin{subfigure}[b]{0.32\textwidth}
        \includegraphics[width=\textwidth]{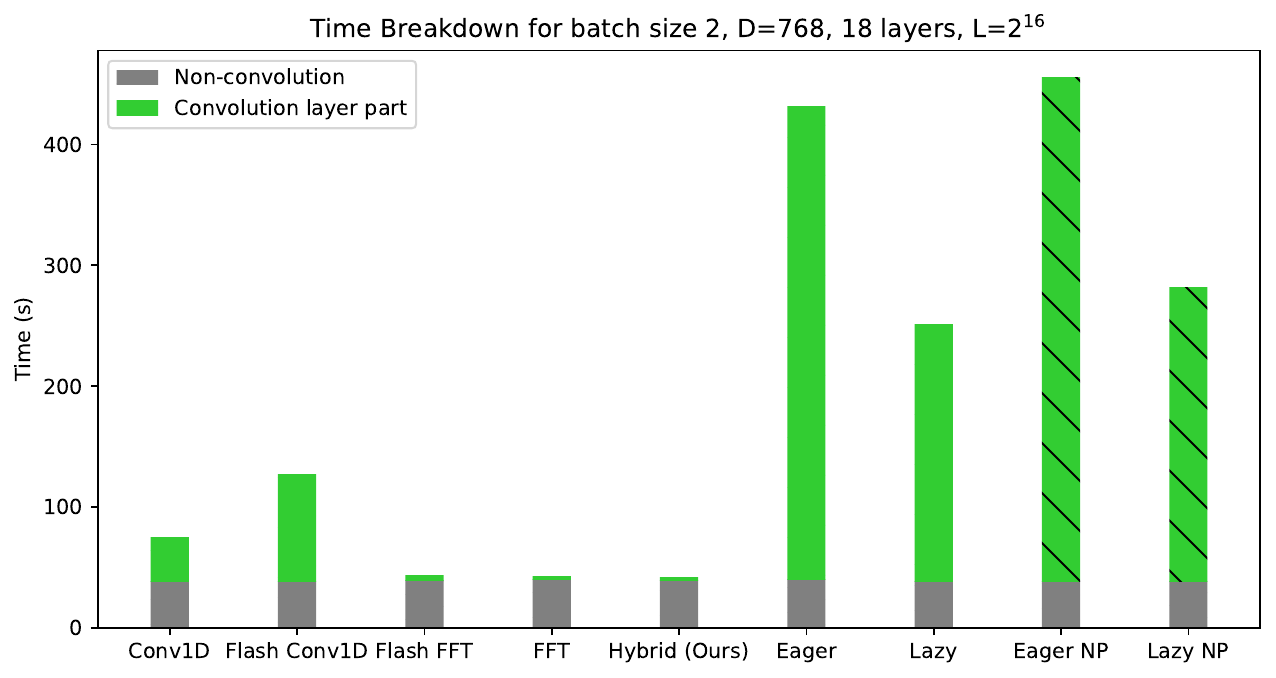}
    \end{subfigure}

    \begin{subfigure}[b]{0.32\textwidth}
        \includegraphics[width=\textwidth]{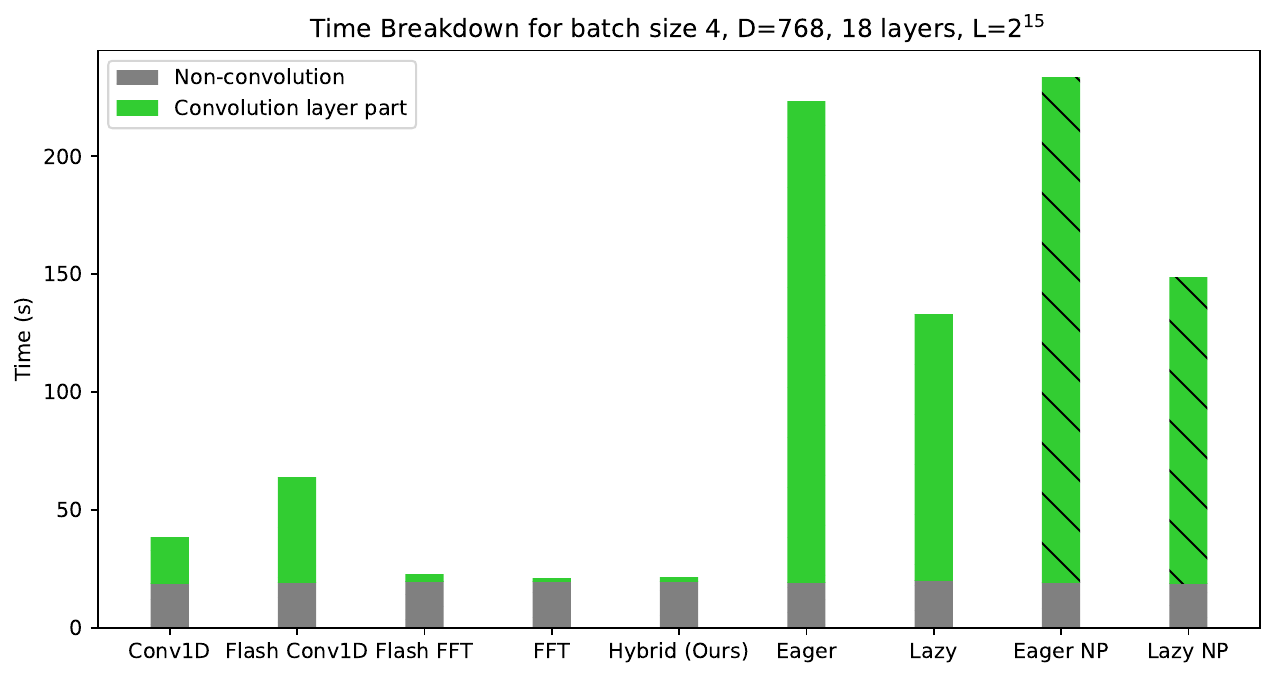}
    \end{subfigure}
    \begin{subfigure}[b]{0.32\textwidth}
        \includegraphics[width=\textwidth]{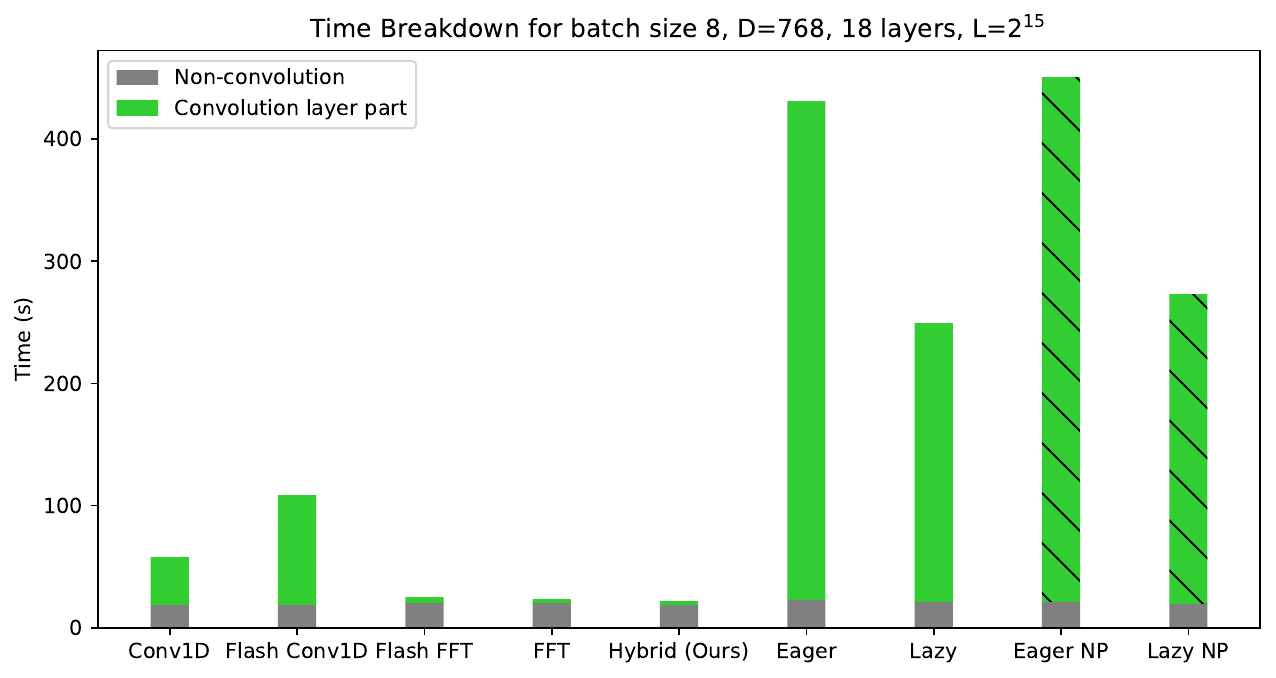}
    \end{subfigure}
    \begin{subfigure}[b]{0.32\textwidth}
        \includegraphics[width=\textwidth]{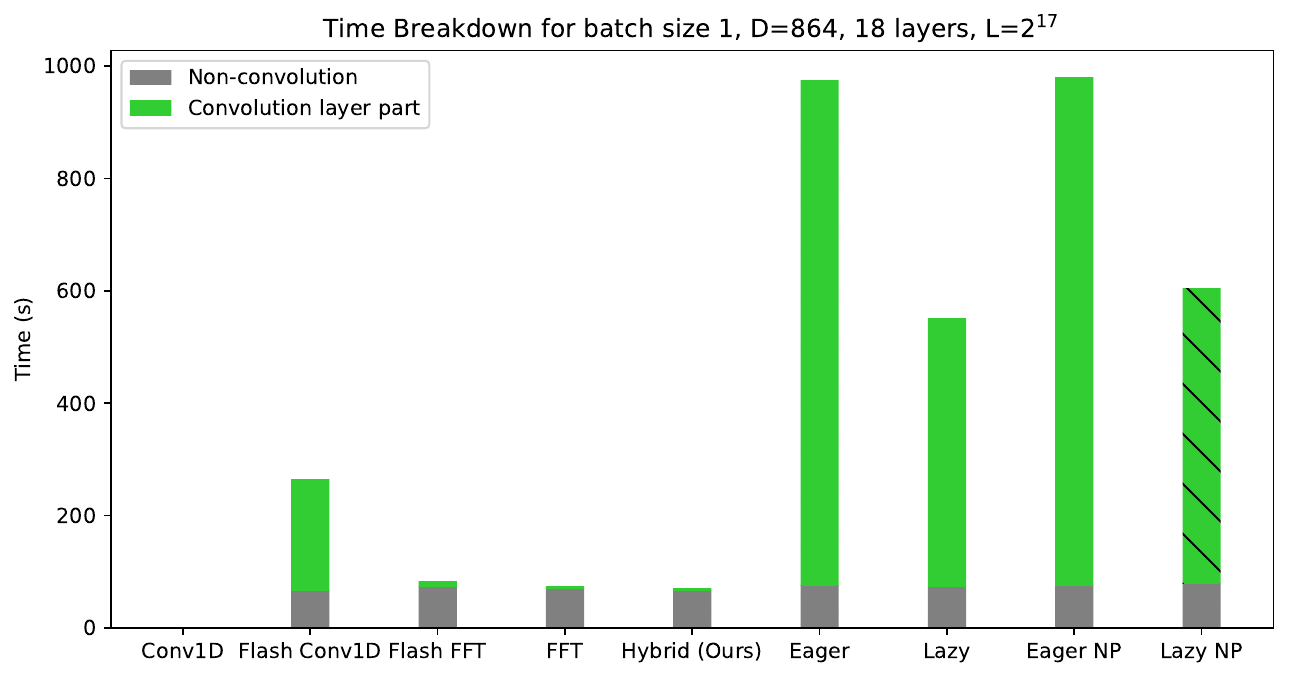}
    \end{subfigure}

    \begin{subfigure}[b]{0.32\textwidth}
        \includegraphics[width=\textwidth]{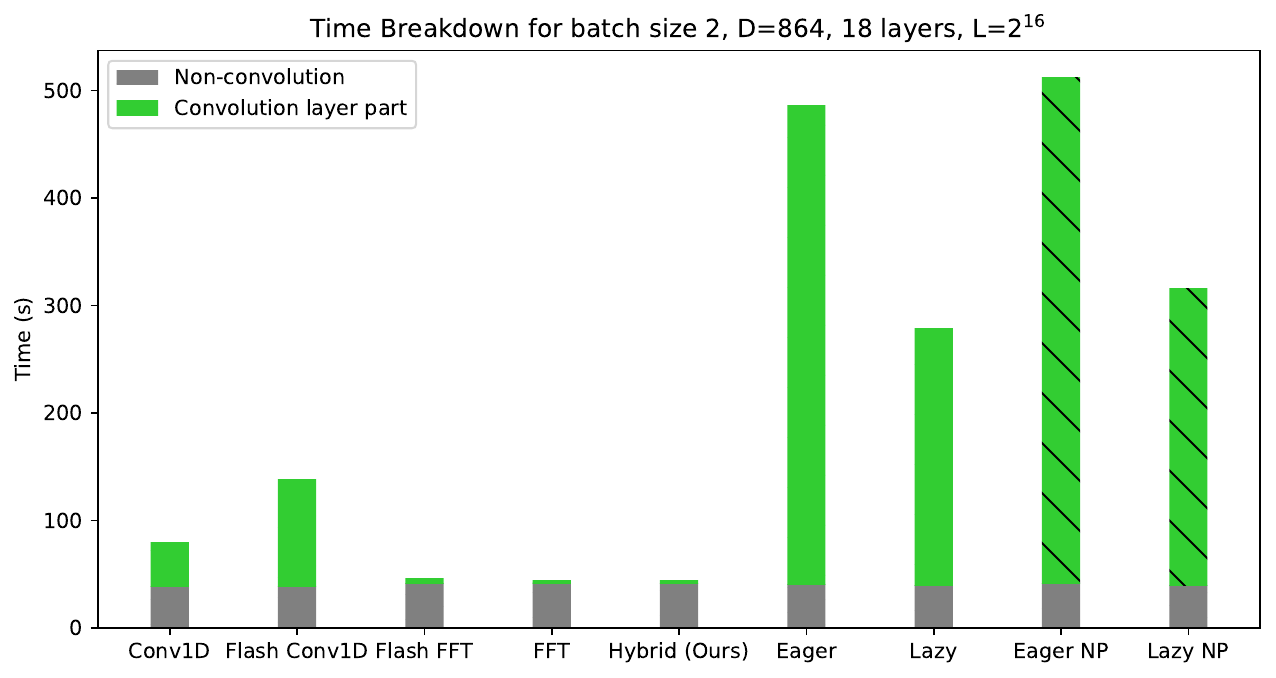}
    \end{subfigure}
    \begin{subfigure}[b]{0.32\textwidth}
        \includegraphics[width=\textwidth]{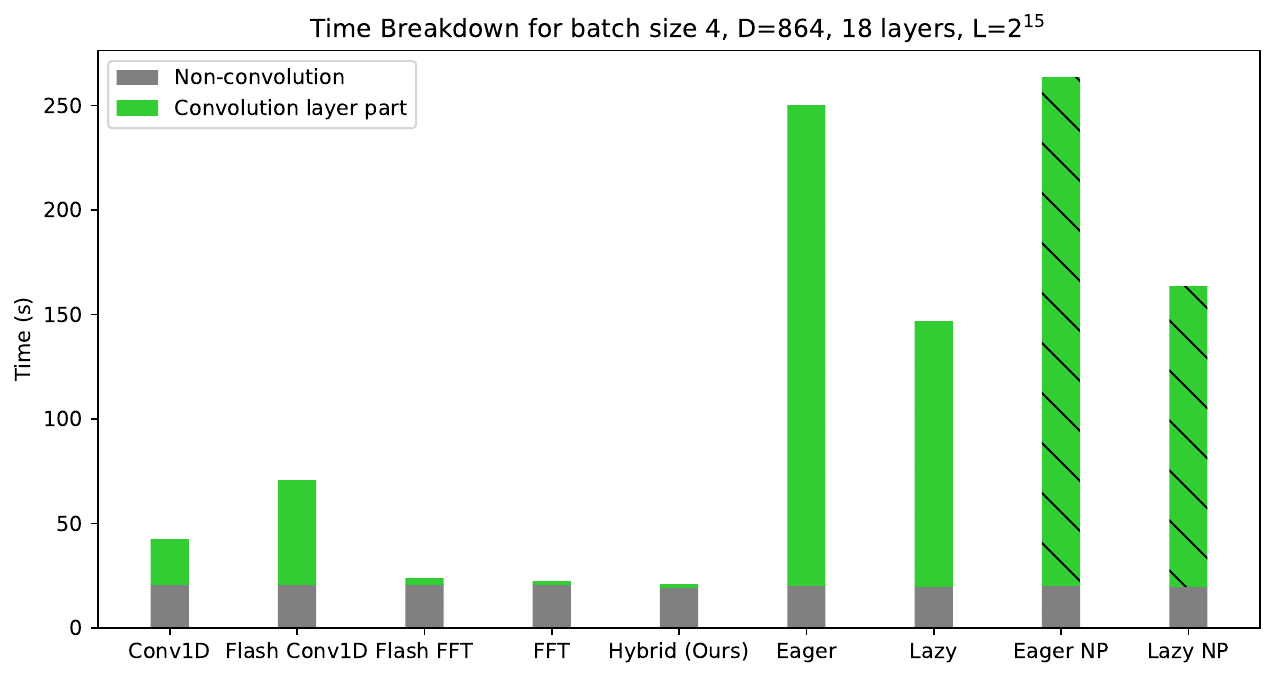}
    \end{subfigure}
    \begin{subfigure}[b]{0.32\textwidth}
        \includegraphics[width=\textwidth]{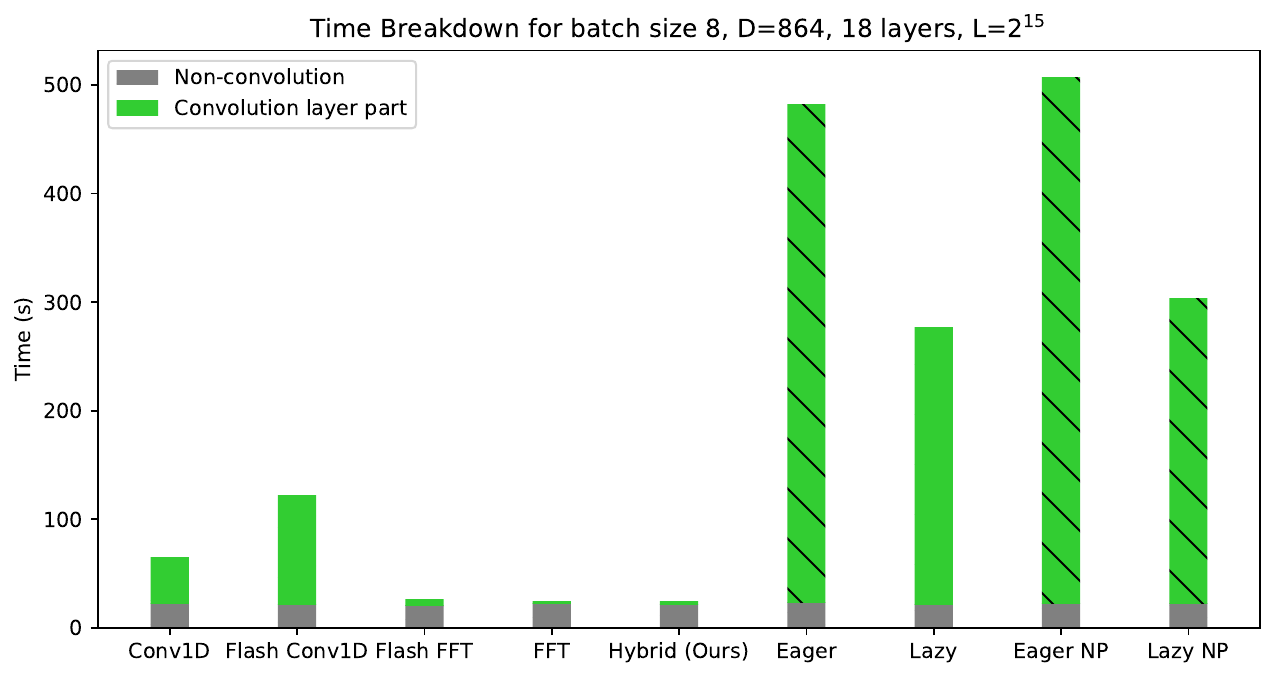}
    \end{subfigure}

    \begin{subfigure}[b]{0.32\textwidth}
        \includegraphics[width=\textwidth]{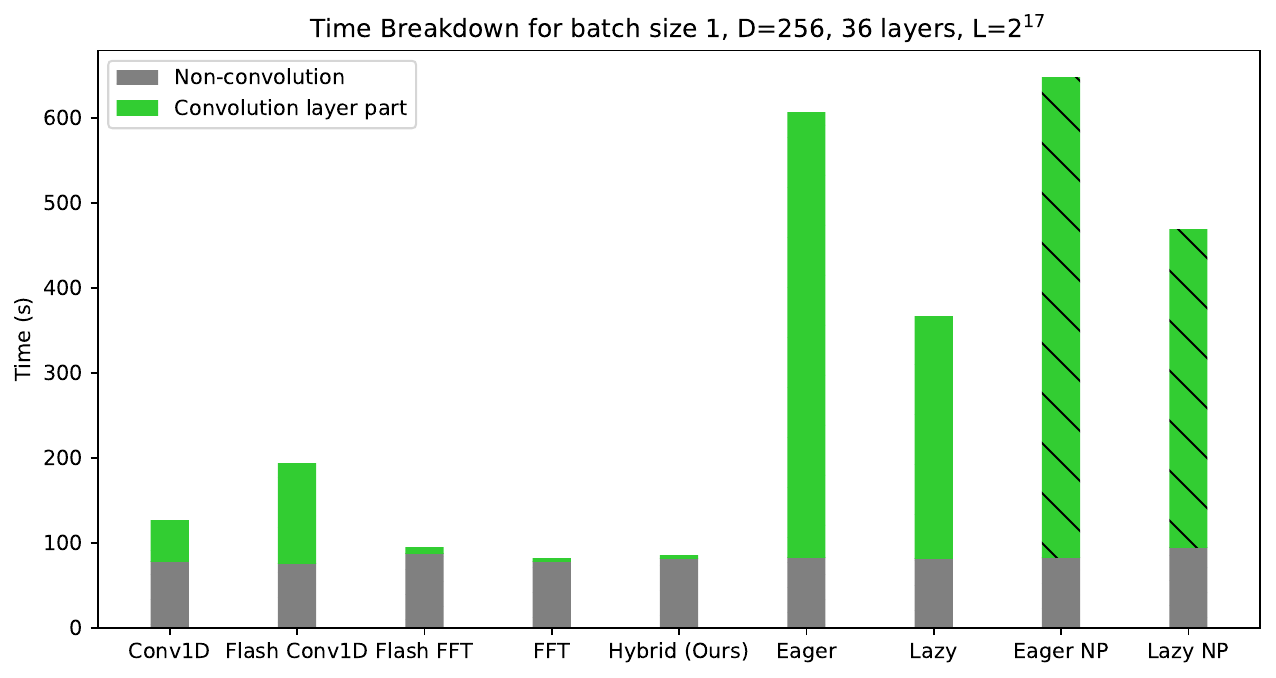}
    \end{subfigure}
    \begin{subfigure}[b]{0.32\textwidth}
        \includegraphics[width=\textwidth]{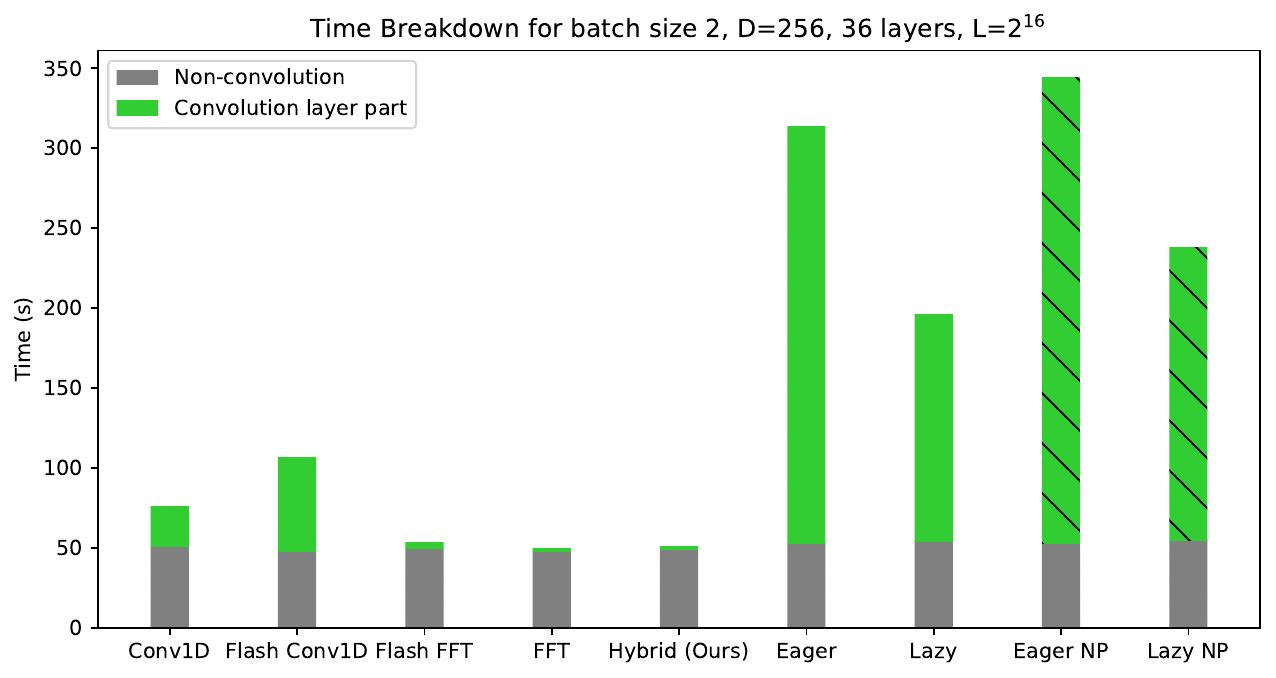}
    \end{subfigure}
    \begin{subfigure}[b]{0.32\textwidth}
        \includegraphics[width=\textwidth]{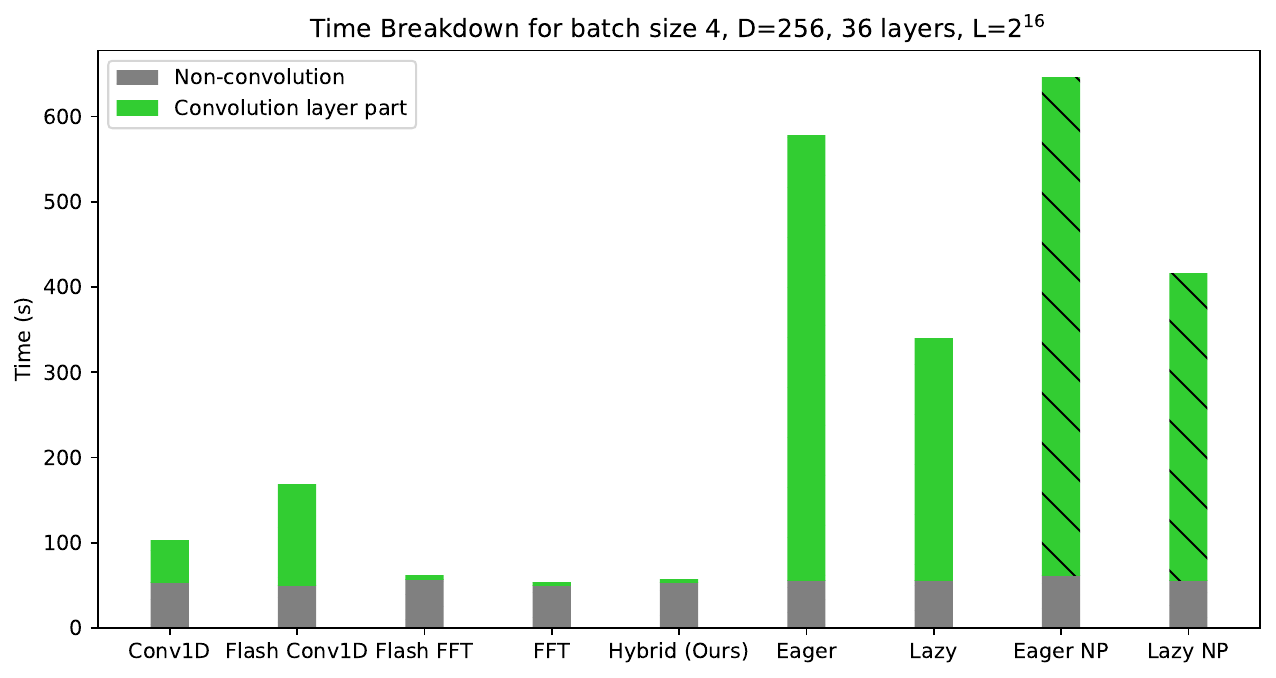}
    \end{subfigure}

    \begin{subfigure}[b]{0.32\textwidth}
        \includegraphics[width=\textwidth]{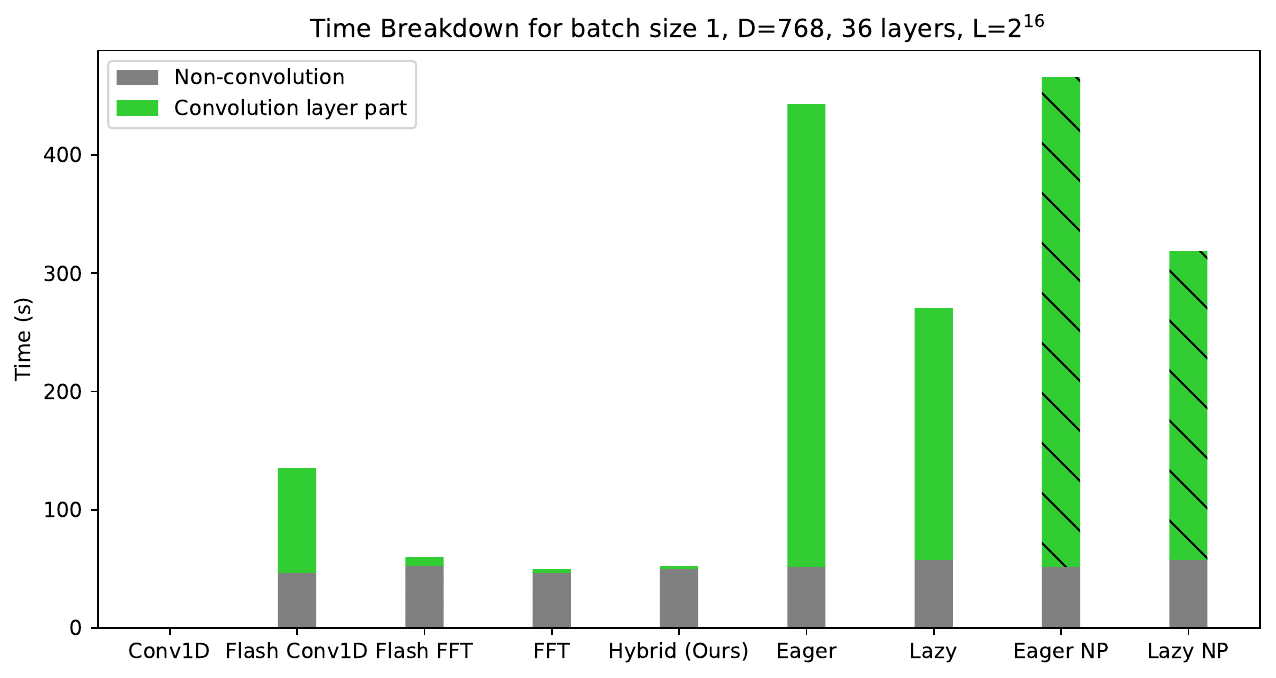}
    \end{subfigure}
    \begin{subfigure}[b]{0.32\textwidth}
        \includegraphics[width=\textwidth]{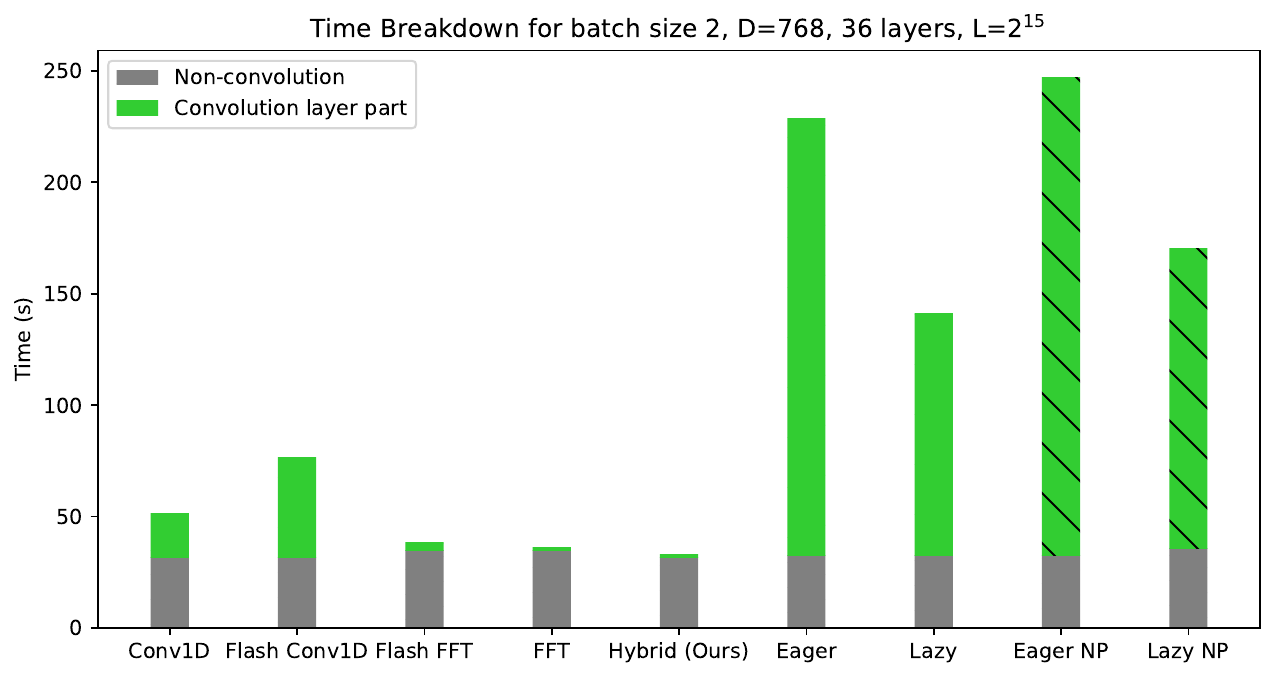}
    \end{subfigure}
    \begin{subfigure}[b]{0.32\textwidth}
        \includegraphics[width=\textwidth]{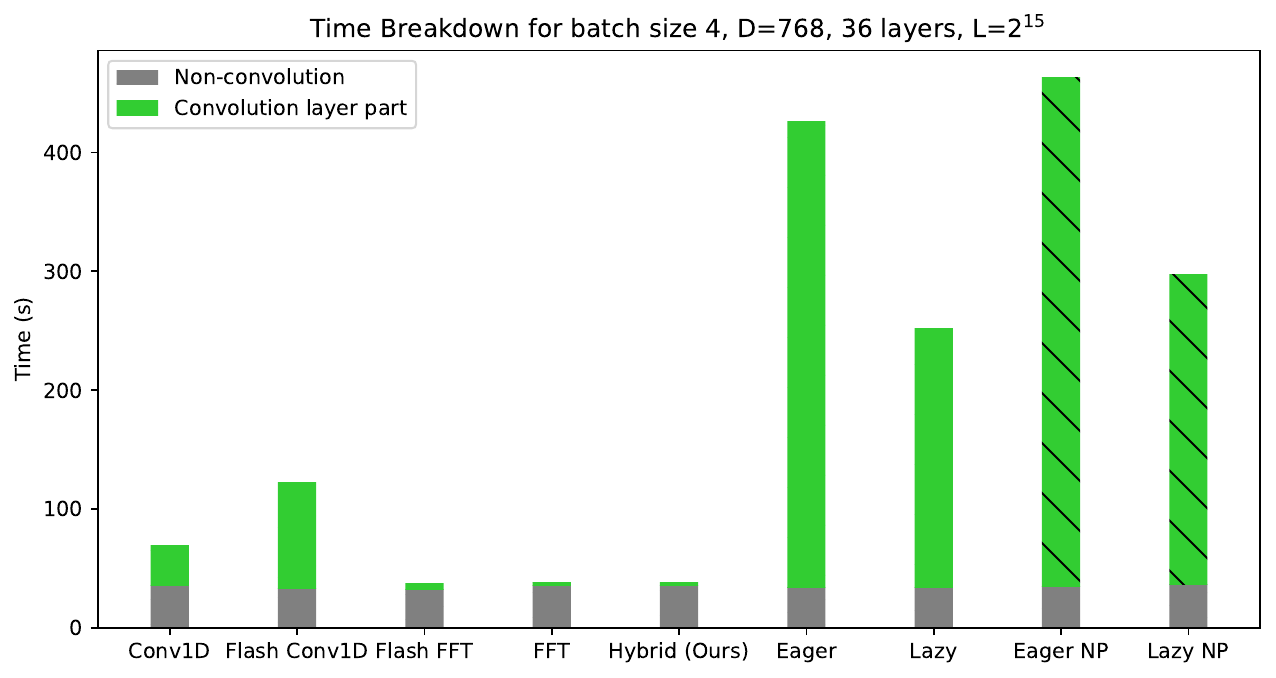}
    \end{subfigure}

    \begin{subfigure}[b]{0.32\textwidth}
        \includegraphics[width=\textwidth]{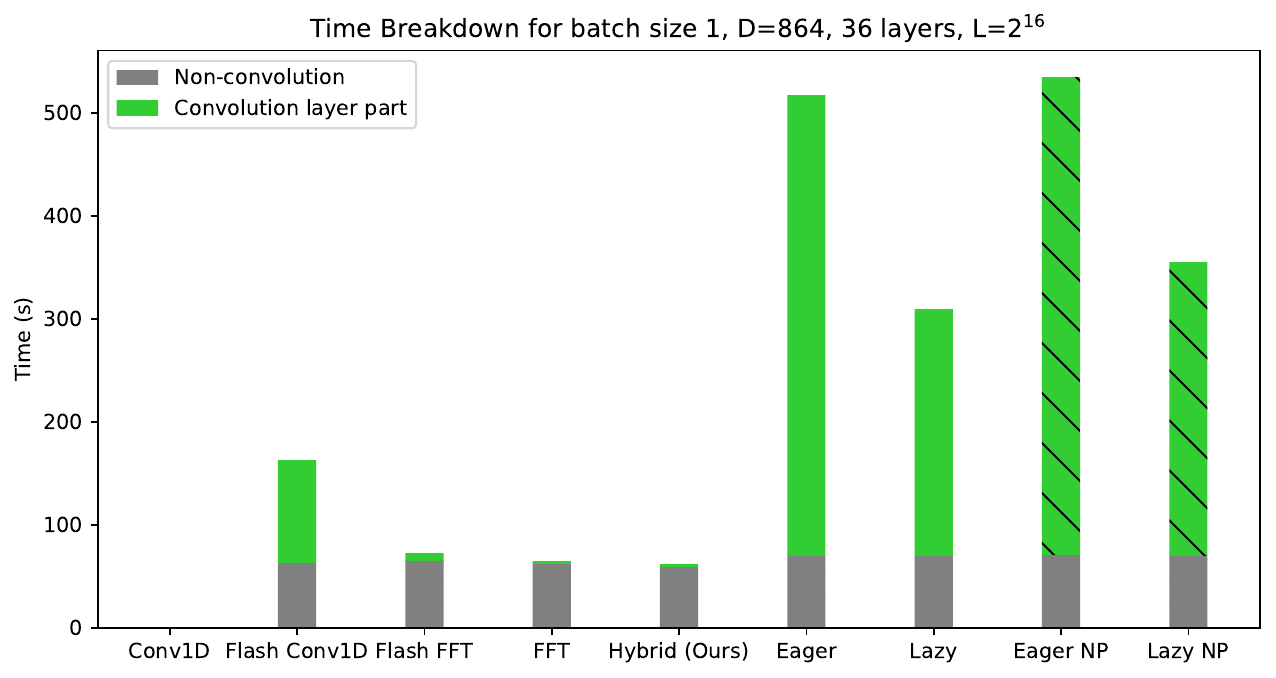}
    \end{subfigure}
    \begin{subfigure}[b]{0.32\textwidth}
        \includegraphics[width=\textwidth]{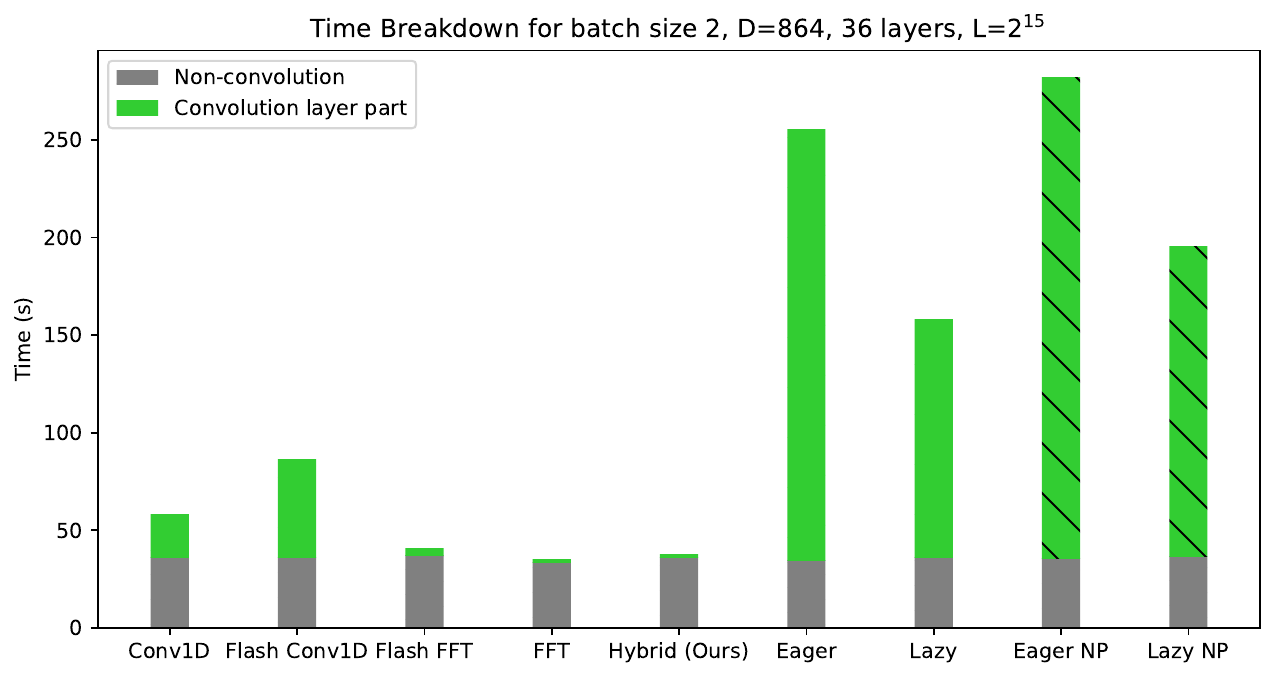}
    \end{subfigure}
    \begin{subfigure}[b]{0.32\textwidth}
        \includegraphics[width=\textwidth]{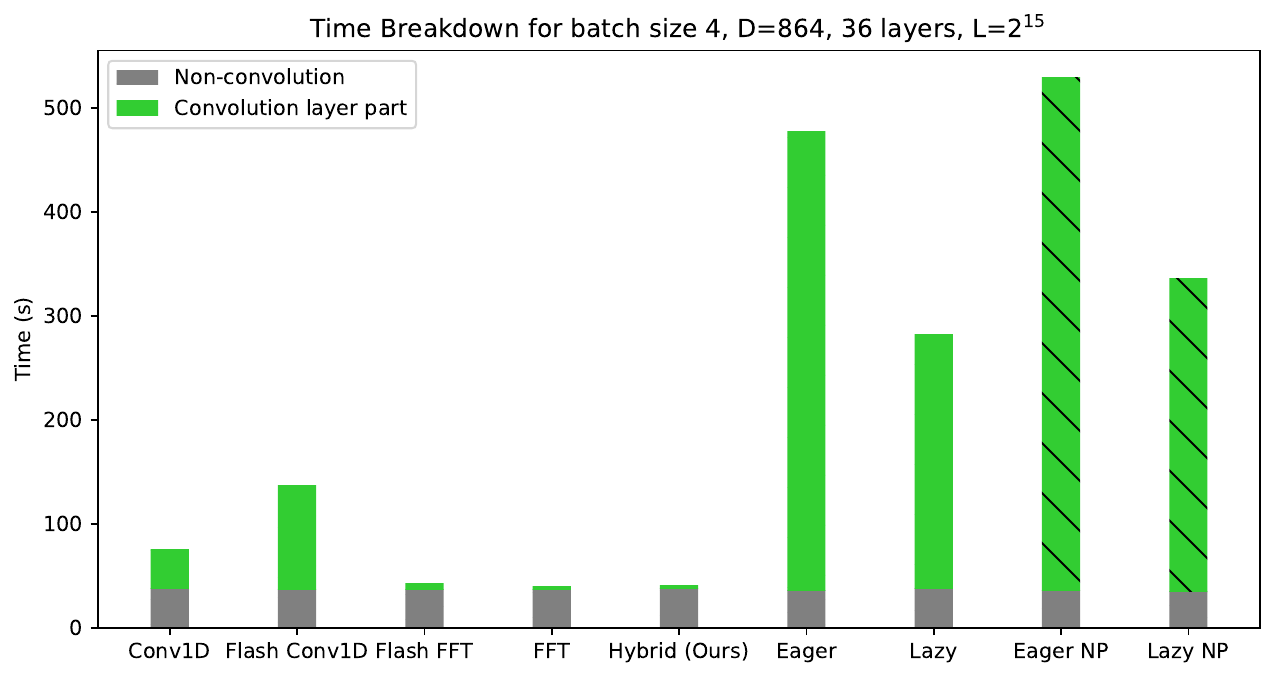}
    \end{subfigure}
    \caption{Time breakdown for end-to-end synthetic experiments.}
    \label{fig:inf-bar-plots}
\end{figure}

\subsection{Summary}

We provide the end-to-end relative speedups for our methods with regards to Lazy (the parallel version) for all the $21$ runs that we performed, for Hyena as well as the synthetic setup, measuring both complete wall-clock time, as well as mixer time.

Notice that in some of the synthetic runs, the torch-based FFT implementation is faster than Hybrid - we attribute that to noise since the absolute runtime difference is below $50$ ms and it translates to less than $2\%$ drop in performance. Hybrid however sometimes exceeds FFT's performance by close to $20\%$ (for example in the very first experiment).

Worth noticing is also the consistent $10\%$ speedup that the parallelization of Lazy yielded - do note that the reported speedups are relative to the parallel version of Lazy, but they are even bigger relative to the non-parallel version. Furthermore, Conv1D\footnote{The Conv1D kernel crashes for configurations $B=1, N=[18, 36], L=[2^{17}, 2^{16}]$. This is because it requires $32$-bit indexing which is only supported in \emph{cuDNN} version 9.3 and above. Our experiments use \emph{cuDNN} v9.1.}, yields a consistent $>5$-fold improvement while still having a quadratic complexity - this shows the potential of our framework as a grouping mechanism even when no asymptotically faster algorithms to deal with the tiles exist: we simply get better arithmetic intensity.

\renewcommand{\arraystretch}{1.2}  
\begin{table}[h!]
\centering
\caption{Mixer times improvements on end-to-end Hyena relative to the lazy approach (higher is better)}
\begin{tabular}{?c|c|c|c?c|c|c|c?}
\specialrule{1pt}{0pt}{0pt}  
\small B & N & D & L & Hybrid & Eager & Eager\_NP & Lazy\_NP \\
\specialrule{1pt}{0pt}{0pt}  
1 & 18 & 256 & $2^{18}$ & 95.96 $\times$ & 0.54 $\times$ & 0.52 $\times$ & 0.85 $\times$ \\
\cline{3-8}
 &  & 768 & $2^{17}$ & 100.38 $\times$ & 0.54 $\times$ & 0.53 $\times$ & 0.89 $\times$ \\
\cline{3-8}
 &  & 864 & $2^{17}$ & 110.74 $\times$ & 0.53 $\times$ & 0.53 $\times$ & 0.91 $\times$ \\
\cline{2-8}
 & 36 & 256 & $2^{17}$ & 79.71 $\times$ & 0.54 $\times$ & 0.50 $\times$ & 0.76 $\times$ \\
\cline{3-8}
 &  & 768 & $2^{16}$ & 89.36 $\times$ & 0.54 $\times$ & 0.51 $\times$ & 0.82 $\times$ \\
\cline{3-8}
 &  & 864 & $2^{16}$ & 94.11 $\times$ & 0.54 $\times$ & 0.52 $\times$ & 0.84 $\times$ \\
\cline{1-8}
2 & 18 & 256 & $2^{17}$ & 75.22 $\times$ & 0.54 $\times$ & 0.50 $\times$ & 0.85 $\times$ \\
\cline{3-8}
 &  & 768 & $2^{16}$ & 84.88 $\times$ & 0.54 $\times$ & 0.51 $\times$ & 0.87 $\times$ \\
\cline{3-8}
 &  & 864 & $2^{16}$ & 93.35 $\times$ & 0.54 $\times$ & 0.51 $\times$ & 0.86 $\times$ \\
\cline{2-8}
 & 36 & 256 & $2^{16}$ & 64.60 $\times$ & 0.54 $\times$ & 0.49 $\times$ & 0.77 $\times$ \\
\cline{3-8}
 &  & 768 & $2^{15}$ & 74.80 $\times$ & 0.55 $\times$ & 0.50 $\times$ & 0.80 $\times$ \\
\cline{3-8}
 &  & 864 & $2^{15}$ & 81.20 $\times$ & 0.55 $\times$ & 0.49 $\times$ & 0.77 $\times$ \\
\cline{1-8}
4 & 18 & 256 & $2^{16}$ & 66.82 $\times$ & 0.54 $\times$ & 0.53 $\times$ & 0.88 $\times$ \\
\cline{3-8}
 &  & 768 & $2^{15}$ & 78.79 $\times$ & 0.55 $\times$ & 0.53 $\times$ & 0.87 $\times$ \\
\cline{3-8}
 &  & 864 & $2^{15}$ & 84.77 $\times$ & 0.55 $\times$ & 0.52 $\times$ & 0.88 $\times$ \\
\cline{2-8}
 & 36 & 256 & $2^{16}$ & 74.92 $\times$ & 0.54 $\times$ & 0.49 $\times$ & 0.79 $\times$ \\
\cline{3-8}
 &  & 768 & $2^{15}$ & 88.22 $\times$ & 0.56 $\times$ & 0.51 $\times$ & 0.84 $\times$ \\
\cline{3-8}
 &  & 864 & $2^{15}$ & 92.96 $\times$ & 0.56 $\times$ & 0.49 $\times$ & 0.81 $\times$ \\
\cline{1-8}
8 & 18 & 256 & $2^{16}$ & 81.37 $\times$ & 0.55 $\times$ & 0.53 $\times$ & 0.90 $\times$ \\
\cline{3-8}
 &  & 768 & $2^{15}$ & 96.14 $\times$ & 0.56 $\times$ & 0.53 $\times$ & 0.90 $\times$ \\
\cline{3-8}
 &  & 864 & $2^{15}$ & 104.81 $\times$ & 0.56 $\times$ & 0.53 $\times$ & 0.91 $\times$ \\
\specialrule{1pt}{0pt}{0pt}  

\end{tabular}
\end{table}

\renewcommand{\arraystretch}{1.2}  
\begin{table}[h!]
\centering
\caption{Cummulative times improvements on end-to-end Hyena relative to the lazy approach (higher is better)}
\begin{tabular}{?c|c|c|c?c|c|c|c?}
\specialrule{1pt}{0pt}{0pt}  
\small B & N & D & L & Hybrid & Eager & Eager\_NP & Lazy\_NP \\
\specialrule{1pt}{0pt}{0pt}  
1 & 18 & 256 & $2^{18}$ & 5.30 $\times$ & 0.59 $\times$ & 0.57 $\times$ & 0.88 $\times$ \\
\cline{3-8}
 &  & 768 & $2^{17}$ & 5.60 $\times$ & 0.58 $\times$ & 0.57 $\times$ & 0.89 $\times$ \\
\cline{3-8}
 &  & 864 & $2^{17}$ & 5.71 $\times$ & 0.59 $\times$ & 0.59 $\times$ & 0.93 $\times$ \\
\cline{2-8}
 & 36 & 256 & $2^{17}$ & 3.11 $\times$ & 0.66 $\times$ & 0.61 $\times$ & 0.83 $\times$ \\
\cline{3-8}
 &  & 768 & $2^{16}$ & 3.60 $\times$ & 0.64 $\times$ & 0.62 $\times$ & 0.90 $\times$ \\
\cline{3-8}
 &  & 864 & $2^{16}$ & 3.50 $\times$ & 0.60 $\times$ & 0.59 $\times$ & 0.85 $\times$ \\
\cline{1-8}
2 & 18 & 256 & $2^{17}$ & 4.30 $\times$ & 0.61 $\times$ & 0.56 $\times$ & 0.88 $\times$ \\
\cline{3-8}
 &  & 768 & $2^{16}$ & 4.81 $\times$ & 0.60 $\times$ & 0.57 $\times$ & 0.91 $\times$ \\
\cline{3-8}
 &  & 864 & $2^{16}$ & 5.09 $\times$ & 0.59 $\times$ & 0.56 $\times$ & 0.89 $\times$ \\
\cline{2-8}
 & 36 & 256 & $2^{16}$ & 2.63 $\times$ & 0.65 $\times$ & 0.60 $\times$ & 0.80 $\times$ \\
\cline{3-8}
 &  & 768 & $2^{15}$ & 3.01 $\times$ & 0.64 $\times$ & 0.60 $\times$ & 0.85 $\times$ \\
\cline{3-8}
 &  & 864 & $2^{15}$ & 3.16 $\times$ & 0.62 $\times$ & 0.58 $\times$ & 0.80 $\times$ \\
\cline{1-8}
4 & 18 & 256 & $2^{16}$ & 3.88 $\times$ & 0.61 $\times$ & 0.59 $\times$ & 0.90 $\times$ \\
\cline{3-8}
 &  & 768 & $2^{15}$ & 4.53 $\times$ & 0.61 $\times$ & 0.59 $\times$ & 0.91 $\times$ \\
\cline{3-8}
 &  & 864 & $2^{15}$ & 4.85 $\times$ & 0.60 $\times$ & 0.57 $\times$ & 0.89 $\times$ \\
\cline{2-8}
 & 36 & 256 & $2^{16}$ & 3.76 $\times$ & 0.61 $\times$ & 0.55 $\times$ & 0.81 $\times$ \\
\cline{3-8}
 &  & 768 & $2^{15}$ & 4.46 $\times$ & 0.61 $\times$ & 0.56 $\times$ & 0.85 $\times$ \\
\cline{3-8}
 &  & 864 & $2^{15}$ & 4.86 $\times$ & 0.61 $\times$ & 0.54 $\times$ & 0.82 $\times$ \\
\cline{1-8}
8 & 18 & 256 & $2^{16}$ & 6.95 $\times$ & 0.58 $\times$ & 0.57 $\times$ & 0.92 $\times$ \\
\cline{3-8}
 &  & 768 & $2^{15}$ & 7.14 $\times$ & 0.59 $\times$ & 0.56 $\times$ & 0.91 $\times$ \\
\cline{3-8}
 &  & 864 & $2^{15}$ & 7.83 $\times$ & 0.59 $\times$ & 0.56 $\times$ & 0.91 $\times$ \\
\specialrule{1pt}{0pt}{0pt}  

\end{tabular}
\end{table}

\renewcommand{\arraystretch}{1.2}  
\begin{table}[h!]
\centering
\caption{Mixer times improvements on end-to-end synthetic relative to the lazy approach (higher is better)}
\begin{tabular}{?c|c|c|c?c|c|c|c|c|c|c|c?}
\specialrule{1pt}{0pt}{0pt}  
\small B & N & D & L & Hybrid & C1D & flashC1D & flashFFT & FFT & Eager & Eager\_NP & Lazy\_NP \\
\specialrule{1pt}{0pt}{0pt}  
1 & 18 & 256 & $2^{18}$ & 97.70 $\times$ & 5.94 $\times$ & 2.40 $\times$ & 51.16 $\times$ & 82.27 $\times$ & 0.54 $\times$ & 0.52 $\times$ & 0.85 $\times$ \\
\cline{3-12}
 &  & 768 & $2^{17}$ & 112.62 $\times$ & - & 2.42 $\times$ & 51.18 $\times$ & 111.57 $\times$ & 0.54 $\times$ & 0.53 $\times$ & 0.90 $\times$ \\
\cline{3-12}
 &  & 864 & $2^{17}$ & 124.30 $\times$ & - & 2.42 $\times$ & 56.14 $\times$ & 119.41 $\times$ & 0.53 $\times$ & 0.53 $\times$ & 0.91 $\times$ \\
\cline{2-12}
 & 36 & 256 & $2^{17}$ & 82.69 $\times$ & 5.94 $\times$ & 2.41 $\times$ & 39.73 $\times$ & 78.58 $\times$ & 0.54 $\times$ & 0.50 $\times$ & 0.76 $\times$ \\
\cline{3-12}
 &  & 768 & $2^{16}$ & 100.53 $\times$ & - & 2.41 $\times$ & 35.36 $\times$ & 102.32 $\times$ & 0.54 $\times$ & 0.51 $\times$ & 0.82 $\times$ \\
\cline{3-12}
 &  & 864 & $2^{16}$ & 112.33 $\times$ & - & 2.41 $\times$ & 36.58 $\times$ & 112.81 $\times$ & 0.54 $\times$ & 0.52 $\times$ & 0.84 $\times$ \\
\cline{1-12}
2 & 18 & 256 & $2^{17}$ & 84.84 $\times$ & 5.94 $\times$ & 2.41 $\times$ & 52.28 $\times$ & 81.43 $\times$ & 0.54 $\times$ & 0.50 $\times$ & 0.85 $\times$ \\
\cline{3-12}
 &  & 768 & $2^{16}$ & 84.84 $\times$ & 5.81 $\times$ & 2.40 $\times$ & 45.94 $\times$ & 88.36 $\times$ & 0.54 $\times$ & 0.51 $\times$ & 0.87 $\times$ \\
\cline{3-12}
 &  & 864 & $2^{16}$ & 97.23 $\times$ & 5.85 $\times$ & 2.40 $\times$ & 49.77 $\times$ & 94.40 $\times$ & 0.54 $\times$ & 0.51 $\times$ & 0.87 $\times$ \\
\cline{2-12}
 & 36 & 256 & $2^{16}$ & 69.28 $\times$ & 5.74 $\times$ & 2.40 $\times$ & 39.57 $\times$ & 69.03 $\times$ & 0.54 $\times$ & 0.49 $\times$ & 0.77 $\times$ \\
\cline{3-12}
 &  & 768 & $2^{15}$ & 78.79 $\times$ & 5.54 $\times$ & 2.43 $\times$ & 31.91 $\times$ & 76.47 $\times$ & 0.55 $\times$ & 0.51 $\times$ & 0.80 $\times$ \\
\cline{3-12}
 &  & 864 & $2^{15}$ & 85.89 $\times$ & 5.56 $\times$ & 2.43 $\times$ & 33.80 $\times$ & 87.56 $\times$ & 0.55 $\times$ & 0.49 $\times$ & 0.77 $\times$ \\
\cline{1-12}
4 & 18 & 256 & $2^{16}$ & 75.13 $\times$ & 5.97 $\times$ & 2.49 $\times$ & 47.70 $\times$ & 74.52 $\times$ & 0.54 $\times$ & 0.53 $\times$ & 0.88 $\times$ \\
\cline{3-12}
 &  & 768 & $2^{15}$ & 80.48 $\times$ & 5.75 $\times$ & 2.52 $\times$ & 42.66 $\times$ & 82.05 $\times$ & 0.55 $\times$ & 0.53 $\times$ & 0.87 $\times$ \\
\cline{3-12}
 &  & 864 & $2^{15}$ & 93.40 $\times$ & 5.79 $\times$ & 2.54 $\times$ & 45.22 $\times$ & 88.84 $\times$ & 0.55 $\times$ & 0.52 $\times$ & 0.88 $\times$ \\
\cline{2-12}
 & 36 & 256 & $2^{16}$ & 79.85 $\times$ & 5.78 $\times$ & 2.40 $\times$ & 50.57 $\times$ & 80.60 $\times$ & 0.54 $\times$ & 0.49 $\times$ & 0.79 $\times$ \\
\cline{3-12}
 &  & 768 & $2^{15}$ & 92.30 $\times$ & 6.47 $\times$ & 2.45 $\times$ & 46.37 $\times$ & 88.90 $\times$ & 0.56 $\times$ & 0.51 $\times$ & 0.84 $\times$ \\
\cline{3-12}
 &  & 864 & $2^{15}$ & 96.29 $\times$ & 6.49 $\times$ & 2.45 $\times$ & 47.93 $\times$ & 94.17 $\times$ & 0.56 $\times$ & 0.50 $\times$ & 0.81 $\times$ \\
\cline{1-12}
8 & 18 & 256 & $2^{16}$ & 95.32 $\times$ & 6.10 $\times$ & 2.51 $\times$ & 56.11 $\times$ & 87.85 $\times$ & 0.55 $\times$ & 0.53 $\times$ & 0.90 $\times$ \\
\cline{3-12}
 &  & 768 & $2^{15}$ & 102.49 $\times$ & 5.89 $\times$ & 2.55 $\times$ & 54.83 $\times$ & 94.08 $\times$ & 0.56 $\times$ & 0.53 $\times$ & 0.90 $\times$ \\
\cline{3-12}
 &  & 864 & $2^{15}$ & 107.36 $\times$ & 5.93 $\times$ & 2.55 $\times$ & 45.75 $\times$ & 98.61 $\times$ & 0.56 $\times$ & 0.53 $\times$ & 0.91 $\times$ \\
\specialrule{1pt}{0pt}{0pt}  

\end{tabular}
\end{table}

\renewcommand{\arraystretch}{1.2}  
\begin{table}[h!]
\centering
\caption{Cummulative times improvements on end-to-end synthetic relative to the lazy approach (higher is better)}
\begin{tabular}{?c|c|c|c?c|c|c|c|c|c|c|c?}
\specialrule{1pt}{0pt}{0pt}  
\small B & N & D & L & Hybrid & C1D & flashC1D & flashFFT & FFT & Eager & Eager\_NP & Lazy\_NP \\
\specialrule{1pt}{0pt}{0pt}  
1 & 18 & 256 & $2^{18}$ & 6.70 $\times$ & 3.55 $\times$ & 2.04 $\times$ & 6.34 $\times$ & 6.64 $\times$ & 0.59 $\times$ & 0.57 $\times$ & 0.87 $\times$ \\
\cline{3-12}
 &  & 768 & $2^{17}$ & 8.36 $\times$ & - & 2.09 $\times$ & 7.08 $\times$ & 8.05 $\times$ & 0.57 $\times$ & 0.56 $\times$ & 0.91 $\times$ \\
\cline{3-12}
 &  & 864 & $2^{17}$ & 7.76 $\times$ & - & 2.09 $\times$ & 6.66 $\times$ & 7.42 $\times$ & 0.57 $\times$ & 0.56 $\times$ & 0.91 $\times$ \\
\cline{2-12}
 & 36 & 256 & $2^{17}$ & 4.32 $\times$ & 2.91 $\times$ & 1.89 $\times$ & 3.88 $\times$ & 4.48 $\times$ & 0.60 $\times$ & 0.57 $\times$ & 0.78 $\times$ \\
\cline{3-12}
 &  & 768 & $2^{16}$ & 5.17 $\times$ & - & 2.00 $\times$ & 4.56 $\times$ & 5.50 $\times$ & 0.61 $\times$ & 0.58 $\times$ & 0.85 $\times$ \\
\cline{3-12}
 &  & 864 & $2^{16}$ & 5.01 $\times$ & - & 1.91 $\times$ & 4.30 $\times$ & 4.78 $\times$ & 0.60 $\times$ & 0.58 $\times$ & 0.87 $\times$ \\
\cline{1-12}
2 & 18 & 256 & $2^{17}$ & 6.02 $\times$ & 3.30 $\times$ & 2.01 $\times$ & 5.70 $\times$ & 6.01 $\times$ & 0.59 $\times$ & 0.55 $\times$ & 0.88 $\times$ \\
\cline{3-12}
 &  & 768 & $2^{16}$ & 6.06 $\times$ & 3.34 $\times$ & 1.97 $\times$ & 5.75 $\times$ & 5.90 $\times$ & 0.58 $\times$ & 0.55 $\times$ & 0.89 $\times$ \\
\cline{3-12}
 &  & 864 & $2^{16}$ & 6.38 $\times$ & 3.50 $\times$ & 2.02 $\times$ & 6.03 $\times$ & 6.37 $\times$ & 0.57 $\times$ & 0.54 $\times$ & 0.88 $\times$ \\
\cline{2-12}
 & 36 & 256 & $2^{16}$ & 3.84 $\times$ & 2.59 $\times$ & 1.84 $\times$ & 3.67 $\times$ & 3.94 $\times$ & 0.63 $\times$ & 0.57 $\times$ & 0.82 $\times$ \\
\cline{3-12}
 &  & 768 & $2^{15}$ & 4.25 $\times$ & 2.74 $\times$ & 1.84 $\times$ & 3.69 $\times$ & 3.89 $\times$ & 0.62 $\times$ & 0.57 $\times$ & 0.83 $\times$ \\
\cline{3-12}
 &  & 864 & $2^{15}$ & 4.19 $\times$ & 2.71 $\times$ & 1.83 $\times$ & 3.88 $\times$ & 4.51 $\times$ & 0.62 $\times$ & 0.56 $\times$ & 0.81 $\times$ \\
\cline{1-12}
4 & 18 & 256 & $2^{16}$ & 5.83 $\times$ & 3.34 $\times$ & 2.07 $\times$ & 5.46 $\times$ & 5.77 $\times$ & 0.59 $\times$ & 0.57 $\times$ & 0.89 $\times$ \\
\cline{3-12}
 &  & 768 & $2^{15}$ & 6.25 $\times$ & 3.45 $\times$ & 2.08 $\times$ & 5.91 $\times$ & 6.29 $\times$ & 0.60 $\times$ & 0.57 $\times$ & 0.89 $\times$ \\
\cline{3-12}
 &  & 864 & $2^{15}$ & 7.15 $\times$ & 3.45 $\times$ & 2.08 $\times$ & 6.24 $\times$ & 6.64 $\times$ & 0.59 $\times$ & 0.56 $\times$ & 0.90 $\times$ \\
\cline{2-12}
 & 36 & 256 & $2^{16}$ & 5.98 $\times$ & 3.33 $\times$ & 2.02 $\times$ & 5.49 $\times$ & 6.38 $\times$ & 0.59 $\times$ & 0.53 $\times$ & 0.82 $\times$ \\
\cline{3-12}
 &  & 768 & $2^{15}$ & 6.62 $\times$ & 3.64 $\times$ & 2.06 $\times$ & 6.75 $\times$ & 6.59 $\times$ & 0.59 $\times$ & 0.55 $\times$ & 0.85 $\times$ \\
\cline{3-12}
 &  & 864 & $2^{15}$ & 6.99 $\times$ & 3.74 $\times$ & 2.06 $\times$ & 6.68 $\times$ & 7.13 $\times$ & 0.59 $\times$ & 0.53 $\times$ & 0.84 $\times$ \\
\cline{1-12}
8 & 18 & 256 & $2^{16}$ & 9.88 $\times$ & 4.19 $\times$ & 2.22 $\times$ & 9.14 $\times$ & 10.36 $\times$ & 0.57 $\times$ & 0.56 $\times$ & 0.90 $\times$ \\
\cline{3-12}
 &  & 768 & $2^{15}$ & 11.55 $\times$ & 4.30 $\times$ & 2.29 $\times$ & 9.85 $\times$ & 10.55 $\times$ & 0.58 $\times$ & 0.55 $\times$ & 0.91 $\times$ \\
\cline{3-12}
 &  & 864 & $2^{15}$ & 11.37 $\times$ & 4.25 $\times$ & 2.28 $\times$ & 10.56 $\times$ & 11.20 $\times$ & 0.57 $\times$ & 0.55 $\times$ & 0.91 $\times$ \\
\specialrule{1pt}{0pt}{0pt}  

\end{tabular}
\end{table}

\end{document}